\definecolor{mydarkblue}{rgb}{0.0,0.08,0.45}
\newtheorem{theorem}{Theorem}
\newtheorem{lemma}[theorem]{Lemma}
\newtheorem{proposition}[theorem]{Proposition}
\newtheorem{corollary}[theorem]{Corollary}
\newtheorem{definition}[theorem]{Definition}
\newtheorem{example}[theorem]{Example}
\newcommand{\brifnotempty}[1]{\ifthenelse{\equal{#1}{}}{}{ \br{#1}}}
\newenvironment{lemma*}[2][]
	{\pagebreak[2] \par \noindent \textit{Lemma~\ref{#2}\brifnotempty{#1}}.\\}{}
\newenvironment{theorem*}[2][]
	{\pagebreak[2] \par \noindent \textit{Theorem~\ref{#2}\brifnotempty{#1}}.\\}{}
\newenvironment{proposition*}[2][]
	{\pagebreak[2] \par \noindent \textit{Proposition~\ref{#2}\brifnotempty{#1}}.\\}{}
\newenvironment{corollary*}[2][]
	{\pagebreak[2] \par \noindent \textit{Corollary~\ref{#2}\brifnotempty{#1}}.\\}{}
\newcommand{\qedhere}{\mathproofbox}
\newcommand{\sqbbr}[1]{\llbracket\hspace{.2ex}#1\hspace{.2ex}\rrbracket}
\newcommand{\Sqbbr}[1]{\left\llbracket\hspace{.2ex}#1\hspace{.2ex}\right\rrbracket}
\newcommand{\llangle}{\langle\kern-.5ex\langle}
\newcommand{\llanglescr}{\langle\kern-.4ex\langle}
\newcommand{\Llangle}{\left\langle\kern-.8ex\left\langle}
\newcommand{\rrangle}{\rangle\kern-.5ex\rangle}
\newcommand{\rranglescr}{\rangle\kern-.4ex\rangle}
\newcommand{\Rrangle}{\right\rangle\kern-.8ex\right\rangle}
\newcommand{\br}[1]{(#1)}
\newcommand{\Br}[1]{\left(#1\right)}
\newcommand{\tpl}[1]{\br{#1}}
\newcommand{\lng}{n}
\newcommand{\lia}{i}
\newcommand{\lib}{j}
\newcommand{\lic}{k}
\newcommand{\sta}{\mathcal{S}}
\newcommand{\stb}{\mathcal{T}}
\newcommand{\ela}{s}
\newcommand{\elb}{t}
\newcommand{\pws}[1]{2^{#1}}
\newcommand{\trsub}[3]{#1^{[#2 \coloneqq #3]}}
\newcommand{\ccstyle}[1]{\mathrm{#1}}
\newcommand{\ccco}{\ccstyle{\operatorname{co-}}}
\newcommand{\ccP}{\ccstyle{P}}
\newcommand{\ccNP}{\ccstyle{NP}}
\newcommand{\cccoNP}{\ccco\ccNP}
\newcommand{\mymodels}{\mathrel\mid\joinrel=}
\newcommand{\ent}{\mymodels}
\newcommand{\nent}{\not\ent}
\newcommand{\eq}{\equiv}
\newcommand{\prefix}[1]{\textsf{#1}}
\newcommand{\prefm}[1]{\ensuremath{\scriptscriptstyle \mathsf{#1}}}
\newcommand{\prefixm}[1]{\prefm{#1}}
\newcommand{\pstl}[2][]{\mbox{\textsf{#1#2}}}
\newcommand{\lthen}{\supset}
\newcommand{\lequiv}{\equiv}
\newcommand{\bigland}{\bigwedge}
\newcommand{\biglor}{\bigvee}
\newcommand{\frm}{\phi}
\newcommand{\frma}{\phi}
\newcommand{\frmb}{\psi}
\newcommand{\frmu}{\mu}
\newcommand{\frmv}{\nu}
\newcommand{\twi}{I}
\newcommand{\twia}{I}
\newcommand{\twib}{J}
\newcommand{\twic}{K}
\newcommand{\twid}{L}
\newcommand{\twis}{\mathscr{I}}
\newcommand{\stwi}{\mathcal{M}}
\renewcommand{\mod}[1]{\sqbbr{#1}}
\newcommand{\atm}{p}
\newcommand{\atma}{p}
\newcommand{\atmb}{q}
\newcommand{\atmc}{r}
\newcommand{\atmd}{s}
\newcommand{\atms}{\mathscr{A}}
\newcommand{\atmsof}[1]{\mathsf{at}(#1)}
\newcommand{\slit}{S}
\newcommand{\slitp}[1][\slit]{#1^+}
\newcommand{\slitn}[1][\slit]{#1^-}
\newcommand{\lpnot}{\mathop{\sim\!}}
\newcommand{\lpif}{\leftarrow}
\newcommand{\rl}{\pi}
\newcommand{\hrl}[1][\rl]{H(#1)}
\newcommand{\hrlp}{\slitp[\hrl]}
\newcommand{\hrln}{\slitn[\hrl]}
\newcommand{\brl}[1][\rl]{B(#1)}
\newcommand{\brlp}{\slitp[\brl]}
\newcommand{\brln}{\slitn[\brl]}
\newcommand{\prg}{\mathcal{P}\hspace{-.1ex}}
\newcommand{\prga}{\mathcal{P}\hspace{-.1ex}}
\newcommand{\prgb}{\mathcal{Q}}
\newcommand{\prgc}{\mathcal{R}}
\newcommand{\prgu}{\mathcal{U}}
\newcommand{\prgv}{\mathcal{V}}
\newcommand{\tofrm}[1]{\kappa(#1)}
\newcommand{\labbu}[1]{\pstl[\small]{(B#1)}}
\newcommand{\bu}[1]{\ref{pstl:bu:#1}}
\newcommand{\smallbu}[1]{\hyperref[pstl:bu:#1]{\pstl[\scriptsize]{(B#1)}}}
\newcommand{\incorp}{\mathsf{incorporate}}
\newcommand{\oas}{\omega}
\newcommand{\oasw}{\prefix{\small W}}
\newcommand{\oasa}{\oasw}
\newcommand{\sympo}{\leq}
\newcommand{\po}[2]{\sympo^{#1}_{#2}}
\renewcommand{\pod}[1]{\po{#1}{\oas}}
\newcommand{\pow}[1]{\po{#1}{\prefixm{W}}}
\newcommand{\poa}[1]{\po{#1}{\prefixm{W}}}
\newcommand{\pop}[1]{\po{#1}{\oas'}}
\newcommand{\poo}[1]{\po{#1}{\uopr}}
\newcommand{\poeld}{\pod{\ela}}
\newcommand{\potwd}{\pod{\twia}}
\newcommand{\potww}{\pow{\twia}}
\newcommand{\symspo}{<}
\newcommand{\spo}[2]{\symspo^{#1}_{#2}}
\newcommand{\spod}[1]{\spo{#1}{\oas}}
\newcommand{\spow}[1]{\spo{#1}{\prefixm{W}}}
\newcommand{\spoa}[1]{\spo{#1}{\prefixm{W}}}
\newcommand{\spop}[1]{\spo{#1}{\oas'}}
\newcommand{\spoo}[1]{\spo{#1}{\uopr}}
\newcommand{\spotwd}{\spod{\twia}}
\newcommand{\uopb}{\mathbin{\diamond}}
\newcommand{\kb}{\mathcal{B}}
\renewcommand{\S}[2][S]{\prefix{#1}\protect\nobreakdash#2\hspace{0pt}}
\newcommand{\prgand}{\mathbin{\dot{\wedge}}}
\newcommand{\prgor}{\mathbin{\dot{\lor}}}
\newcommand{\bigprgor}{\mathop{\dot{\biglor}}}
\newcommand{\uopr}{\mathbin{\oplus}}
\newcommand{\uoprev}{\star}
\newcommand{\cn}{\mathit{Cn}}
\newcommand{\tri}{X}
\newcommand{\tria}{X}
\newcommand{\trib}{Y}
\newcommand{\tric}{Z}
\newcommand{\tris}{\mathscr{X}}
\newcommand{\twiab}{\tpl{\twia, \twib}}
\newcommand{\twibb}{\tpl{\twib, \twib}}
\newcommand{\stri}{\mathcal{M}}
\newcommand{\stria}{\mathcal{M}}
\newcommand{\strib}{\mathcal{N}}
\newcommand{\tr}{\mathsf{T}}
\newcommand{\fa}{\mathsf{F}}
\newcommand{\un}{\mathsf{U}}
\newcommand{\val}{\mathsf{V}}
\newcommand{\potrd}{\pod{\tria}}
\newcommand{\potrsd}{\pod{\tria^*}}
\newcommand{\potra}{\poa{\tria}}
\newcommand{\potrsa}{\poa{\tria^*}}
\newcommand{\potrp}{\pop{\tria}}
\newcommand{\potrsp}{\pop{\tria^*}}
\newcommand{\potro}{\poo{\tria}}
\newcommand{\potrso}{\poo{\tria^*}}
\newcommand{\spotrd}{\spod{\tria}}
\newcommand{\spotrsd}{\spod{\tria^*}}
\newcommand{\spotra}{\spoa{\tria}}
\newcommand{\spotrsa}{\spoa{\tria^*}}
\newcommand{\spotrp}{\spop{\tria}}
\newcommand{\spotrsp}{\spop{\tria^*}}
\newcommand{\spotro}{\spoo{\tria}}
\newcommand{\littleSE}[1]
	{\prefix{\scriptsize SE}\protect\nobreakdash#1\hspace{0pt}}
\newcommand{\SE}[1]{\prefix{\small SE}\protect\nobreakdash#1\hspace{0pt}}
\newcommand{\smallSE}[1]{\prefix{\footnotesize SE}\protect\nobreakdash#1\hspace{0pt}}
\newcommand{\sSE}{\prefix{\Large SE}}
\newcommand{\mSE}{\prefixm{SE}}
\newcommand{\eqSE}{\equiv_{\mSE}}
\newcommand{\entSE}{\ent_{\mSE}}
\newcommand{\nentSE}{\nent_{\mSE}}
\newcommand{\modse}[1]{\sqbbr{#1}_{\raisebox{-1pt}{\mSE}}}
\newcommand{\Modse}[1]{\Sqbbr{#1}_{\raisebox{-1pt}{\mSE}}}
\newcommand{\labpuse}[1]{\pstl[\small]{(#1)}$_{\prefix{\tiny SE}}$}
\newcommand{\puse}[1]{\ref{pstl:puse:#1}}
\newcommand{\smallpuse}[1]{\hyperref[pstl:puse:#1]{\pstl[\scriptsize]{(P#1)$_{\prefix{\tiny SE}}$}}}
\newcommand{\synt}[1]{\| #1 \|}
\begin{document}

\newcommand{\thetitle}
	{The Rise and Fall \\ of Semantic Rule Updates Based on \sSE-Models}
\newcommand{\thetitleshort}
	{The Rise and Fall of Semantic Rule Updates Based on \SE-Models}
\newcommand{\thetitlepdf}
	{The Rise and Fall of Semantic Rule Updates Based on SE-Models}
\newcommand{\thekeywords}
	{belief update, answer-set programs, rule update, \smallSE-models, support, literal inertia}
\newcommand{\thekeywordspdf}
	{belief update, answer-set programs, rule update, SE-models, support, literal inertia}

\pdfinfo{
	/Title (\thetitlepdf{})
	/Author (Martin Slota and Joao Leite)
	/Keywords (\thekeywordspdf{})
}

\title
	[\thetitleshort{}]
	{\thetitle{}$^{\footnotemark[1]}$}
\author
	[M.\ Slota and J.\ Leite]
	{
		MARTIN SLOTA and JO{\~A}O LEITE \\
		CENTRIA \& Departamento de Inform{\'a}tica \\
		Universidade Nova de Lisboa \\
		2829-516 Caparica, Portugal
	}

\renewcommand{\thefootnote}{\fnsymbol{footnote}}
\footnotetext[1]{This is an extended version of \cite{Slota2010b}.}
\setcounter{footnote}{0}
\renewcommand{\thefootnote}{\arabic{footnote}}

\maketitle

\begin{abstract}
	Logic programs under the stable model semantics, or answer-set programs,
	provide an expressive rule-based knowledge representation framework,
	featuring a formal, declarative and well-understood semantics. However,
	handling the evolution of rule bases is still a largely open problem. The
	AGM framework for belief change was shown to give inappropriate results when
	directly applied to logic programs under a non-monotonic semantics such as
	the stable models. The approaches to address this issue, developed so far,
	proposed update semantics based on manipulating the syntactic structure of
	programs and rules.

	More recently, AGM revision has been successfully applied to a significantly
	more expressive semantic characterisation of logic programs based on
	\smallSE-models. This is an important step, as it changes the focus from the
	evolution of a syntactic representation of a rule base to the evolution of
	its semantic content.

	In this paper, we borrow results from the area of belief update to tackle
	the problem of updating (instead of revising) answer-set programs. We prove
	a representation theorem which makes it possible to constructively define
	any operator satisfying a set of postulates derived from Katsuno and
	Mendelzon's postulates for belief update. We define a specific operator
	based on this theorem, examine its computational complexity and compare the
	behaviour of this operator with syntactic rule update semantics from the
	literature. Perhaps surprisingly, we uncover a serious drawback of all rule
	update operators based on Katsuno and Mendelzon's approach to update and on
	\smallSE-models.
\end{abstract}

\begin{keywords}
	\thekeywords{}
\end{keywords}

\section{Introduction}

Answer-Set Programming (ASP) \cite{Gelfond1988,Baral2003} is now widely
recognised as a valuable approach to knowledge representation and reasoning,
mostly due to its simple and well-understood declarative semantics, its rich
expressive power, and the existence of efficient implementations.

However, the dynamic character of many applications that can benefit from ASP
calls for the development of ways to deal with the evolution of answer-set
programs and the inconsistencies that may arise.

The problems associated with knowledge evolution have been extensively
studied, over the years, by different research communities, namely in the
context of Classical Logic, and in the context of Logic Programming.

The former have been inspired, to a large extent, by the seminal work of
Alchourr{\'{o}}n, G{\"{a}}rdenfors and Makinson (AGM) who proposed a set of
desirable properties of belief change operators, now called \emph{AGM
postulates} \cite{Alchourron1985}. Subsequently, \emph{update} and
\emph{revision} have been distinguished as two very related but ultimately
different belief change operations \cite{Keller1985,Winslett1990,Katsuno1991}.
While revision deals with incorporating new information about a \emph{static}
world, update takes place when changes occurring in a \emph{dynamic} world are
recorded. Katsuno and Mendelzon formulated a separate set of postulates for
update, now known as \emph{KM postulates}.

Both AGM and KM postulates were later studied in the context of Logic
Programming, only to find that their formulations based on a non-monotonic
semantics, such as the answer sets, are inappropriate \cite{Eiter2002}. Like
many belief change operators, earlier methods used to tackle rule updates were
based on literal inertia \cite{Alferes1996} but proved not sufficiently
expressive. This led to the development of rule update semantics based on
different intuitions, principles and constructions, when compared to their
classical counterparts. For example, the introduction of the \emph{causal
rejection principle} \cite{Leite1997} motivated a line of work on several rule
update semantics
\cite{Alferes2000,Eiter2002,Leite2003,Alferes2005,Osorio2007}, all of them
with a strong syntactic flavour. Other approaches tackle rule updates by
employing syntactic transformations and other methods, such as abduction
\cite{Sakama2003}, forgetting \cite{Zhang2005}, prioritisation
\cite{Zhang2006}, preferences \cite{Delgrande2007}, or dependencies on default
assumptions \cite{Sefranek2006,Krumpelmann2010,Sefranek2011}.

Though useful in a number of practical scenarios
\cite{Alferes2003,Saias2004,Siska2006,Ilic2008,Slota2011a}, it turned out that
most of these semantics exhibit undesirable behaviour. For example, except for
the semantics proposed in \cite{Alferes2005,Sefranek2011}, a tautological
update may influence the result under all of these semantics, a behaviour that
is highly undesirable when considering knowledge updates. Other kinds of
irrelevant updates are even more problematic and subject of ongoing research
\cite{Sefranek2006,Sefranek2011}. But more important, the common feature of
all of these semantics is that they make heavy use of the syntactic structure
of programs and rules, making any analysis of their semantic properties a
daunting task.

Recently, AGM revision was reformulated in the context of Logic Programming in
a manner analogous to belief revision in classical propositional logic, and
specific revision operators for logic programs were investigated
\cite{Delgrande2008,Osorio2007}. Central to this novel approach are
\emph{\SE-models} \cite{Turner2003} which provide a monotonic semantic
characterisation of logic programs that is strictly more expressive than the
answer-set semantics. Furthermore, two programs have the same set of
\SE-models if and only if they are strongly equivalent \cite{Lifschitz2001},
which means that programs $\prga, \prgb$ with the same set of \SE-models can
be modularly replaced by one another, even in the presence of additional
rules, without affecting the resulting answer sets.

Indeed, these results constitute an important breakthrough in the research of
answer-set program evolution. They change the focus from the syntactic
representation of a program, where not all rules and literal occurrences are
necessarily relevant to the meaning of the program as a whole, to its semantic
content, i.e.\ to the information that the program is intended to represent.

In this paper, we follow a similar path, but to tackle the problem of
answer-set program \emph{updates}, instead of \emph{revision} as in
\cite{Delgrande2008}.

Using \emph{\SE-models}, we adapt the KM postulates to answer-set program
updates and prove a representation theorem that provides a constructive
characterisation of rule update operators satisfying the postulates, making it
possible to define and evaluate any operator satisfying the postulates using
an intuitive construction. We show how this constructive characterisation can
be used by defining a concrete answer-set program update operator that can be
seen as a counterpart of Winslett's belief update operator \cite{Winslett1990}
which satisfies the KM postulates and is commonly used in the literature.

However, while investigating the operator's properties, we uncover a serious
drawback which, as it turns out, extends to all answer-set program update
operators based on \SE-models and Katsuno and Mendelzon's approach to updates.
This finding is very important as it guides the research on updates of
answer-set programs away from the purely semantic approach materialised in AGM
and KM postulates or, alternatively, to the development of semantic
characterisations of answer-set programs, richer than \SE-models, that are
appropriate for describing their dynamic behaviour.

The remainder of this paper is structured as follows: In
Section~\ref{sect:preliminaries} we introduce the formal concepts that are
necessary throughout the rest of the paper. Section~\ref{sect:km operators}
contains the reformulation of KM postulates for logic program updates and the
representation theorem that establishes a general constructive
characterisation of rule update operators obeying the postulates. We also show
how this theorem can be used by defining a specific rule update operator that
satisfies the postulates and we examine the computational complexity of query
answering for this operator. In Section~\ref{sect:comparison} we further
analyse the previously defined operator and establish that all semantic rule
update operators based on \SE-models exhibit an undesired behaviour.
We summarise our findings in Section~\ref{sect:conclusion}.

\section{Preliminaries}

\label{sect:preliminaries}

We consider a propositional language over a finite set of propositional
variables $\atms$ and the usual set of propositional connectives to form
propositional formulae. An \emph{objective literal} is either an atom $\atm$
or its negation $\lnot \atm$. A \emph{Horn clause} is a disjunction of at most
one atom and zero or more negated atoms; a \emph{Horn formula} is a
conjunction of Horn clauses.

A (propositional) interpretation is any subset of $\atms$ and the set of all
interpretations is $\twis = \pws{\atms}$. We use the standard semantics for
propositional formulae and denote the set of models of a formula $\frm$ by
$\mod{\frm}$. We also write $\twib \ent \frm$ if $\twib \in \mod{\frm}$. We
say that a formula $\frm$ is \emph{complete} if $\mod{\frm}$ is a singleton
set. For formulae $\frma$, $\frmb$ we say that \emph{$\frma$ is equivalent to
$\frmb$}, denoted by $\frma \eq \frmb$, if $\mod{\frma} = \mod{\frmb}$, and
that \emph{$\frma$ entails $\frmb$}, denoted by $\frma \ent \frmb$, if
$\mod{\frma} \subseteq \mod{\frmb}$. As we are dealing with the finite case,
every knowledge base can be expressed by a single formula.

\subsection{Belief Update}

Update is a belief change operation that brings a knowledge base \emph{up to
date} when the \emph{world described by it changes}
\cite{Keller1985,Katsuno1991}. Formally, a belief update operator is a
function that takes two formulae, representing the original knowledge base and
its update, as arguments and returns a formula representing the updated
knowledge base. To further specify the desired properties of update operators,
the following eight postulates for a belief update operator $\uopb$ and
formulae $\frma$, $\frmb$, $\frmu$, $\frmv$ were proposed in
\cite{Katsuno1991}:

\begin{enumerate}
	\renewcommand{\theenumi}{\labbu{\arabic{enumi}}}
	\renewcommand{\labelenumi}{\theenumi\hfill}
	\setlength{\itemsep}{.5ex}

	\item $\frma \uopb \frmu \ent \frmu$.
		\label{pstl:bu:1}

	\item If $\frma \ent \frmu$, then $\frma \uopb \frmu \eq \frma$.
		\label{pstl:bu:2}

	\item If $\mod{\frma} \neq \emptyset$ and $\mod{\frmu} \neq \emptyset$, then
		$\mod{\frma \uopb \frmu} \neq \emptyset$.
		\label{pstl:bu:3}

	\item If $\frma \eq \frmb$ and $\frmu \eq \frmv$, then $\frma \uopb
		\frmu \eq \frmb \uopb \frmv$.
		\label{pstl:bu:4}

	\item $(\frma \uopb \frmu) \land \frmv \ent \frma \uopb (\frmu \land
		\frmv)$.
		\label{pstl:bu:5}

	\item If $\frma \uopb \frmu \ent \frmv$ and $\frma \uopb \frmv \ent \frmu$,
		then $\frma \uopb \frmu \eq \frma \uopb \frmv$.
		\label{pstl:bu:6}

	\item If $\frma$ is complete, then $(\frma \uopb \frmu) \land (\frma \uopb
		\frmv) \ent \frma \uopb (\frmu \lor \frmv)$.
		\label{pstl:bu:7}

	\item $(\frma \lor \frmb) \uopb \frmu \eq (\frma \uopb \frmu) \lor (\frmb
		\uopb \frmu)$.
		\label{pstl:bu:8}
\end{enumerate}

Katsuno and Mendelzon also proved an important representation theorem that
makes it possible to define and evaluate any operator satisfying these
postulates using an intuitive construction. It is based on treating the models
of a knowledge base as possible real states of the modelled world. An update
of an original knowledge base $\frma$ is performed by modifying each of its
models as little as possible to make it consistent with new information in the
update $\frmu$, obtaining a new set of interpretations -- the models of the
updated knowledge base. More formally,
\[
	\mod{\frma \uopb \frmu}
	=
	\bigcup_{\twi \in \mod{\frma}}
	\incorp(\mod{\frmu}, \twi)
	\enspace,
\]
where $\incorp(\stwi, \twi)$ returns the members of $\stwi$ closer to $\twi$.
A natural way of defining $\incorp(\stwi, \twi)$ is by assigning an order
$\po{\twi}{}$ over $\twis$ to each interpretation $\twi$ and taking the minima
of $\stwi$ w.r.t.\ $\po{\twi}{}$, i.e.\ $\incorp(\stwi, \twi) = \min(\stwi,
\po{\twi}{})$. In the following we first formally establish the concept of an
\emph{order assignment}; thereafter we define when an update operator is
\emph{characterised by} such an assignment.

Given a set $\sta$, a \emph{preorder over $\sta$} is a reflexive and
transitive binary relation over $\sta$; a \emph{strict preorder over $\sta$}
is an irreflexive and transitive binary relation over $\sta$; a \emph{partial
order over $\sta$} is a preorder over $\sta$ that is antisymmetric. Given a
preorder $\po{}{}$ over $\sta$, we denote by $\spo{}{}$ the strict preorder
induced by $\po{}{}$, i.e.\ $\ela \spo{}{} \elb$ if and only if $\ela \po{}{}
\elb$ and not $\elb \po{}{} \ela$. For any subset $\stb$ of $\sta$, the set of
\emph{minimal elements of $\stb$ w.r.t.\ $\po{}{}$} is
\[
	\min(\stb, \po{}{})
	=
	\Set{
		\ela \in \stb
		|
		\lnot \exists \elb \in \stb : \elb \spo{}{} \ela
	}
	\enspace.
\]

\begin{definition}
	[Order assignment]
	Let $\sta$ be a set. A \emph{preorder assignment over $\sta$} is any
	function $\oas$ that assigns a preorder $\poeld$ over $\sta$ to each $\ela
	\in \sta$. A \emph{partial order assignment over $\sta$} is any preorder
	assignment $\oas$ over $\sta$ such that $\poeld$ is a partial order over
	$\sta$ for every $\ela \in \sta$.
\end{definition}

\begin{definition}
	[Belief update operator characterised by an order assignment]
	Let $\uopb$ be a belief update operator and $\oas$ a preorder assignment
	over $\twis$. We say that $\uopb$ is \emph{characterised by $\oas$} if for
	all formulae $\frma$, $\frmu$,
	\begin{equation}
		\label{eq:bu operator}
		\mod{\frma \uopb \frmu}
		=
		\bigcup_{\twi \in \mod{\frma}}
		\min \Br{ \mod{\frmu}, \potwd }
		\enspace.
	\end{equation}
\end{definition}

A natural condition to impose on the assigned orders is that every
interpretation be the closest to itself. This is captured by the notion of a
\emph{faithful} order assignment:

\begin{definition}
	[Faithful order assignment \cite{Katsuno1991}]
	A preorder assignment $\oas$ over $\twis$ is \emph{faithful} if for every
	interpretation $\twia$ the following condition is satisfied:
	\[
		\text{For every } \twib \in \twis \text{ with } \twib \neq \twia
		\text{ it holds that } \twia \spotwd \twib
		\enspace.
	\]
\end{definition}

The representation theorem of \cite{Katsuno1991} states that operators
characterised by faithful order assignments are exactly those that satisfy the
KM postulates.

\begin{theorem}
	[Representation theorem for belief updates \cite{Katsuno1991}]
	\label{thm:bu:representation}
	Let $\uopb$ be a belief update operator. Then the following conditions are
	equivalent:
	\begin{enumerate}[a)]
		\item The operator $\uopb$ satisfies conditions \bu{1} -- \bu{8}.

		\item The operator $\uopb$ is characterised by a faithful preorder
			assignment.

		\item The operator $\uopb$ is characterised by a faithful partial order
			assignment.
	\end{enumerate}
\end{theorem}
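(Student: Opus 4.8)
The plan is to establish the three-way equivalence by a cycle of implications. Since every partial order is in particular a preorder, a faithful partial order assignment is a faithful preorder assignment, so the implication (c)$\,\Rightarrow\,$(b) is immediate. It therefore suffices to prove (b)$\,\Rightarrow\,$(a) (soundness: a characterised operator obeys the postulates) and (a)$\,\Rightarrow\,$(c) (completeness: the postulates force the existence of a characterising faithful partial order assignment).

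For (b)$\,\Rightarrow\,$(a) I would verify the eight conditions one by one directly from the defining equation~\eqref{eq:bu operator}, using faithfulness of $\oas$ and the finiteness of $\twis$ (which guarantees that $\min(\mod{\frmu}, \potwd)$ is non-empty whenever $\mod{\frmu}$ is, giving \bu{3}). Conditions \bu{1}, \bu{4} and \bu{8} are essentially formal: \bu{1} because each $\min(\mod{\frmu}, \potwd) \subseteq \mod{\frmu}$, \bu{4} because the right-hand side of~\eqref{eq:bu operator} depends only on $\mod{\frma}$ and $\mod{\frmu}$, and \bu{8} because $\mod{\frma \lor \frmb} = \mod{\frma} \cup \mod{\frmb}$ splits the outer union. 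Condition \bu{2} uses faithfulness: if $\twia \in \mod{\frma} \subseteq \mod{\frmu}$ then $\twia \spotwd \twib$ for all $\twib \neq \twia$, so $\min(\mod{\frmu}, \potwd) = \{\twia\}$, and the union collapses to $\mod{\frma}$. Conditions \bu{5} and \bu{7} follow from the elementary fact that an element minimal in a set remains minimal in any subset containing it, together, for \bu{7}, with completeness of $\frma$ collapsing the outer union to a single term. The one step requiring genuine care is \bu{6}: here I would fix $\twia \in \mod{\frma}$ and prove $\min(\mod{\frmu}, \potwd) = \min(\mod{\frmv}, \potwd)$ by a minimality-chasing argument, using the hypotheses $\frma \uopb \frmu \ent \frmv$ and $\frma \uopb \frmv \ent \frmu$ to transport a putative strictly smaller witness from one set into the other and contradict minimality.

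For the completeness direction (a)$\,\Rightarrow\,$(c) I would first use \bu{8} to reduce to \emph{complete} source formulae: writing, for each interpretation $\twia$, $\chi_\twia$ for the formula with $\mod{\chi_\twia} = \{\twia\}$ (which exists as the language is finite), postulate \bu{8} gives $\mod{\frma \uopb \frmu} = \bigcup_{\twia \in \mod{\frma}} \mod{\chi_\twia \uopb \frmu}$, so it is enough to manufacture, for each $\twia$, a faithful partial order $\potwd$ with $\min(\mod{\frmu}, \potwd) = \mod{\chi_\twia \uopb \frmu}$ for every $\frmu$. I would define the candidate order by \emph{revealed preference} on two-element updates: set $\twib \potwd \twic$ iff $\twib \in \mod{\chi_\twia \uopb \frmb}$ for a formula $\frmb$ with $\mod{\frmb} = \{\twib, \twic\}$; this is well defined by \bu{4}, reflexive by \bu{1} and \bu{3}, and faithful because, for $\twib \neq \twia$, postulate \bu{2} applied with an update whose model set is $\{\twia, \twib\}$ returns $\{\twia\}$, whence $\twia \spotwd \twib$. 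The substantive work is to show that $\potwd$ is transitive and that it reconstructs the operator, i.e.\ $\min(\mod{\frmu}, \potwd) = \mod{\chi_\twia \uopb \frmu}$. Reading the postulates for the complete source $\chi_\twia$ as conditions on the selection function $\frmu \mapsto \mod{\chi_\twia \uopb \frmu}$, postulate \bu{5} yields a Chernoff-style contraction property, \bu{7} an expansion property, and \bu{6} a coherence property; the standard rationalisability argument then converts these into transitivity of the revealed preference and the reconstruction identity. This is the main obstacle, and it is precisely where all of \bu{5}, \bu{6} and \bu{7} are genuinely needed.

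Finally, the order produced above is in general only a preorder, whereas (c) demands a partial order. I would remove ties by passing to the strict part: define a refined assignment $\oas'$ by taking, for each $\twia$, the order $\pop{\twia}$ to be the reflexive closure of the strict relation $\spotwd$. Then $\spop{\twia} = \spotwd$, so the minima are unchanged, $\min(\mod{\frmu}, \pop{\twia}) = \min(\mod{\frmu}, \potwd)$ for every $\frmu$; moreover $\pop{\twia}$ is antisymmetric, hence a partial order, and, because faithfulness is phrased through the strict relation, it is preserved. Assembling the $\pop{\twia}$ into the single assignment $\oas'$ thus yields the required faithful partial order assignment characterising $\uopb$, which closes the cycle.
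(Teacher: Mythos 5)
You should first note that the paper never proves this theorem: it is imported verbatim from Katsuno and Mendelzon as background, and the only representation theorem the paper actually proves is the rule-update analogue, Theorem~\ref{thm:representation}, via Propositions~\ref{prop:if postulates then ordering}, \ref{prop:if semi-faithful then faithful} and \ref{prop:if ordering then postulates}. Measured against that blueprint and the classical proof it adapts, your architecture is the right one: c)~$\Rightarrow$~b) is indeed trivial; your verification of b)~$\Rightarrow$~a) is correct (including the minimality-chasing argument for \bu{6}, which is exactly how the paper verifies \puse{6} in Proposition~\ref{prop:if ordering then postulates}); and attacking a)~$\Rightarrow$~c) by a \bu{8}/\bu{4} reduction to complete initial formulae followed by revealed preference on two-element updates, then passing at the end to the reflexive closure of the strict part to obtain a partial order, is the standard and correct strategy.

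The genuine gap is in the core of a)~$\Rightarrow$~c). You define the revealed preference as the \emph{weak} relation ($\twib \potwd \twic$ iff $\twib \in \mod{\chi_\twia \uopb \frmb}$ with $\mod{\frmb} = \set{\twib, \twic}$) and claim that \bu{5}, \bu{6}, \bu{7} make \emph{this} relation transitive via ``the standard rationalisability argument''. That claim is false, so the step cannot be carried out by any argument: take an operator $\uopb$ characterised by a faithful partial order assignment $\oas$ in which, at the fixed $\twia$, one has $\twid \spotwd \twib$ while $\twic$ is incomparable with both $\twib$ and $\twid$. By the soundness direction (which you proved) this operator satisfies \bu{1}--\bu{8}; yet both elements are selected from $\set{\twib,\twic}$ and from $\set{\twic,\twid}$ (incomparable elements are both minimal), while only $\twid$ is selected from $\set{\twib,\twid}$ --- so your relation puts $\twib$ below $\twic$ and $\twic$ below $\twid$ but not $\twib$ below $\twid$. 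Hence the relation you construct is in general not a preorder, and your final step, which presupposes that it is one in order to pass to its strict part, also loses its justification. What must be proved transitive is the \emph{strict} revealed preference (``$\twib$ selected and $\twic$ rejected''), and that is where the real work sits: from $\set{x}$ chosen in $\set{x,y}$ and $\set{y}$ chosen in $\set{y,z}$, postulate \bu{5} (with \bu{1}, \bu{3}) forces $\set{x}$ to be chosen in $\set{x,y,z}$, and then \bu{6} applied to the pair $\set{x,z}$, $\set{x,y,z}$ forces $\set{x}$ to be chosen in $\set{x,z}$; separately \bu{5} and \bu{7} yield the reconstruction identity. None of this is delivered by Sen-style contraction--expansion rationalisability, which neither requires nor produces transitivity. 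These missing arguments are precisely the counterpart of the paper's Definition~\ref{def:preorder assignment generated by operator} --- which, tellingly, generates the order from the strict selection pattern and only then closes it reflexively and transitively --- together with Lemmas~\ref{lemma:if less then not model}, \ref{lemma:if not model then less} and Proposition~\ref{prop:characterisation for basic programs}, i.e.\ the bulk of the paper's appendix; compressing them into a one-sentence appeal aimed at the wrong relation is a genuine gap.
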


Katsuno and Mendelzon's results provide a framework for belief update
operators, each specified on the semantic level by a faithful partial order
assignment over $\twis$. The most influential instance of this framework is
the \emph{Possible Models Approach} \cite{Keller1985,Winslett1990}, also
referred to as \emph{Winslett's belief update semantics}, based on minimising
the set of atoms whose truth value changes when an interpretation is updated.
Formally, Winslett's partial order assignment $\oasw$ is defined for all
interpretations $\twia$, $\twib$, $\twic$ by
\begin{align*}
	& \twib \potww \twic
	&& \text{ if and only if }
	&& (\twib \div \twia) \subseteq (\twic \div \twia)
	\enspace,
\end{align*}
where $\div$ denotes set-theoretic symmetric difference. It is not difficult
to verify that $\oasw$ is a faithful partial order assignment, so it follows
from Theorem~\ref{thm:bu:representation} that any belief update operator
$\uopb$ characterised by $\oasw$ satisfies postulates \bu{1} -- \bu{8}. Note
that there is a whole class of operators characterised by $\oasw$ that differ
in the syntactic representation of updated belief bases. Insofar as we are
interested in the semantic properties of Winslett's updates, it follows from
\bu{4} that it does not matter which operator from this class we pick. This is
illustrated in the following example:

\begin{example}
	[Winslett's belief update semantics]
	Consider the knowledge base $\frma = (\atma \land (\atmb \lequiv \atmc))$
	and the update $\frmu = (\atmb \lor \atmc)$ over the set of atoms $\atms =
	\set{\atma, \atmb, \atmc}$. Their sets of models are as follows:
	\begin{align*}
		\mod{\frma} &= \set{
			\set{\atma},
			\set{\atma, \atmb, \atmc}
		} \enspace, \\
		\mod{\frmu} &= \set{
			\set{\atmb},
			\set{\atmc},
			\set{\atmb, \atmc},
			\set{\atma, \atmb},
			\set{\atma, \atmc},
			\set{\atma, \atmb, \atmc}
		}
		\enspace.
	\end{align*}
	When performing an update of $\frma$ by $\frmu$ under Winslett's update
	semantics, equation \eqref{eq:bu operator} applies as follows:
	\[
		\mod{\frma \uopb \frmu}
		=
		\bigcup_{\twia \in \mod{\frma}} \min \Br{ \mod{\frmu}, \potww }
		=
		\min \Br{ \mod{\frmu}, \pow{\set{\atma}} }
		\cup
		\min \Br{ \mod{\frmu}, \pow{\set{\atma, \atmb, \atmc}} }
		\enspace.
	\]
	The models of $\frmu$ that ``differ least'' from $\set{\atma}$, in the sense
	of the order assignment $\oasw$, are $\set{\atma, \atmb}$ and $\set{\atma,
	\atmc}$. Furthermore, since $\oasw$ is faithful, the unique model of $\frmu$
	that is minimally distant from $\set{\atma, \atmb, \atmc}$ is $\set{\atma,
	\atmb, \atmc}$ itself. Consequently,
	\[
		\mod{\frma \uopb \frmu}
		=
		\set{
			\set{\atma, \atmb},
			\set{\atma, \atmc},
			\set{\atma, \atmb, \atmc}
		}
		\enspace.
	\]
	Note that from the syntactic viewpoint, $\frma \uopb \frmu$ can be any
	formula with the above set of models. Thus, it may for example be the case
	that $\frma \uopb \frmu = (\atma \land (\atmb \lor \atmc))$ while for
	another operator $\uopb'$, also characterised by $\oasw$, $\frma \uopb'
	\frmu = ((\atma \land \atmb) \lor (\atma \land \atmc))$.
\end{example}

\subsection{Computational Complexity of Winslett's Update Semantics}

Computationally, query answering for Winlett's operator, i.e.\ the problem of
deciding whether $\frma \uopb \frmu \ent \frmb$, where $\uopb$ is
characterised by $\oasw$, belongs to the second level of the polynomial
hierarchy \cite{Eiter1992}. We formulate this result formally as it later
facilitates the study of computational complexity of a newly introduced rule
update operator.

Assuming that the reader is familiar with the classes $\ccNP$ and $\cccoNP$,
we briefly introduce the \emph{polynomial hierarchy}
\cite{Meyer1972,Stockmeyer1976}. Its definition relies on the notion of an
\emph{oracle}: An oracle for a class of decision problems $C$ can decide any
problem in $C$ in just one step of computation. We denote by $\ccNP^C$ the
class of decision problems solvable in polynomial time by a non-deterministic
Turing machine that can make calls to an oracle for $C$. The classes
$\Sigma^\ccP_\lia$ and $\Pi^\ccP_\lia$ of the polynomial hierarchy are defined
inductively as follows: $\Sigma^\ccP_0 = \Pi^\ccP_0 = \ccP$ and for all $\lia
\geq 0$,
\begin{align*}
	\Sigma^\ccP_{\lia + 1} &= \ccNP^{\Sigma^\ccP_\lia}
	&& \text{and}
	& \Pi^\ccP_{\lia + 1} &= \ccco\Sigma^\ccP_{\lia + 1}
	\enspace.
\end{align*}
In the general case, query answering for Winslett's updates is
$\Pi^\ccP_2$-complete.

\begin{theorem}
	[Part of Theorem~6.4 in \cite{Eiter1992}]
	\label{thm:oasw complexity general}
	Let $\uopb$ be a belief update operator characterised by $\oasw$. Deciding
	whether $\frma \uopb \frmu \ent \frmb$ for formulae $\frma$, $\frmu$,
	$\frmb$ is $\Pi^\ccP_2$-complete. Hardness holds even if $\frma$ is a
	conjunction of atoms and $\frmb$ is one of the atoms in that conjunction.
\end{theorem}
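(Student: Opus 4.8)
The plan is to establish the two halves of $\Pi^\ccP_2$-completeness separately: membership, by expressing the \emph{non}-entailment problem as a $\Sigma^\ccP_2$ predicate, and hardness, by a polynomial reduction from validity of $\forall\exists$-quantified Boolean formulae, arranged so that the restricted syntactic form of $\frma$ and $\frmb$ required in the statement is met automatically.

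For membership I would unfold the characterisation \eqref{eq:bu operator} specialised to $\oasw$. By definition, $\frma \uopb \frmu \nent \frmb$ holds iff there are interpretations $\twi \ent \frma$ and $\twib \in \min\br{\mod{\frmu}, \pow{\twi}}$ with $\twib \nent \frmb$. Since interpretations are subsets of the finite set $\atms$, both $\twi$ and $\twib$ have polynomial size and can be guessed, and the conditions $\twi \ent \frma$, $\twib \ent \frmu$ and $\twib \nent \frmb$ are polynomial-time checkable. The only non-elementary requirement is minimality of $\twib$, i.e.\ that no $\twic \ent \frmu$ satisfies $(\twic \div \twi) \subsetneq (\twib \div \twi)$; its negation is in $\ccNP$ (guess such a $\twic$), so minimality is a $\cccoNP$ test. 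Hence non-entailment lies in $\ccNP^{\cccoNP} = \Sigma^\ccP_2$, and entailment lies in $\Pi^\ccP_2$.

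For hardness I would reduce from the $\Pi^\ccP_2$-complete problem of deciding whether $\Phi = \forall Y\, \exists Z\, \theta(Y,Z)$ is valid, where $Y = \set{y_1, \dots, y_m}$, $Z = \set{z_1, \dots, z_n}$ and $\theta$ is an arbitrary propositional formula. Over the atoms $\atms = \set{g} \cup Y \cup Z$ I would set
\[
	\frma = g \land \bigwedge_{j=1}^{n} z_j,
	\qquad
	\frmb = g,
	\qquad
	\frmu = \theta(Y,Z) \lor \Bigl( \lnot g \land \bigwedge_{j=1}^{n} \lnot z_j \Bigr) .
\]
Here $\frma$ is a conjunction of atoms and $\frmb$ is one of its conjuncts, as demanded. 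The models of $\frma$ are exactly the interpretations $\twi_\alpha$ that set $g$ and every $z_j$ to true while realising an arbitrary assignment $\alpha$ to $Y$, so the union in \eqref{eq:bu operator} ranges precisely over all $\alpha$, supplying the outer quantifier over $Y$. The crux is then the claim that, for a fixed $\twi_\alpha$, the set $\min\br{\mod{\frmu}, \pow{\twi_\alpha}}$ contains a model falsifying $g$ if and only if $\forall Z\, \lnot\theta(\alpha, Z)$; taking the union over all $\alpha$ yields $\frma \uopb \frmu \nent \frmb$ iff $\exists Y\, \forall Z\, \lnot\theta(Y,Z)$, i.e.\ iff $\Phi$ is not valid.

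The main obstacle is exactly this per-$\alpha$ minimality computation, because $\oasw$ compares symmetric differences by set inclusion rather than by cardinality, so two distinct $Z$-assignments have incomparable change sets and a naive encoding never lets a satisfying assignment dominate a failing one. The design of $\frmu$ is meant to circumvent this: the unique candidate $g$-falsifying minimal model is $\twib^*$, obtained from $\twi_\alpha$ by setting $g$ and all $z_j$ to false, with $\twib^* \div \twi_\alpha = \set{g, z_1, \dots, z_n} =: D$; any $g$-falsifying model that satisfies $\theta$ is dominated by its $g$-retaining twin, while the disjunct $\lnot g \land \bigwedge_j \lnot z_j$ forces every ``failure'' model to incur the whole of $D$. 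Thus $\twib^*$ is $\pow{\twi_\alpha}$-minimal iff no model of $\frmu$ has symmetric difference strictly inside $D$, and such a smaller model exists exactly when some $Z$ satisfies $\theta(\alpha, \cdot)$ — a satisfying assignment retaining $g$ has change set contained in $\set{z_1, \dots, z_n} \subsetneq D$. I expect the delicate part to be verifying that no \emph{other} spurious minimal model falsifies $g$ (in particular that $\theta$-satisfying models with $g$ false are always dominated, and that flipping $Y$-atoms never produces a competing minimum), which is the case analysis that must be carried out in full.
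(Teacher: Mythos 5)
Your proposal is correct, but note that the paper itself never proves this statement: it is imported as ``Part of Theorem~6.4 in \cite{Eiter1992}'', and the underlying construction (Lemma~6.2 of \cite{Eiter1992}) is only reproduced later, inside the paper's proof of Theorem~\ref{thm:oasa complexity general}. Your membership argument --- guess $\twi \ent \frma$ and $\twib \ent \frmu$ with $\twib \nent \frmb$, then verify $\subseteq$-minimality of $\twib \div \twi$ by a single $\cccoNP$ test, placing non-entailment in $\Sigma^\ccP_2$ --- coincides with the standard one. Your hardness reduction, however, is genuinely different from Eiter--Gottlob's. In their construction $\frma$ is the conjunction of \emph{all} atoms $x_1, \dotsc, x_m, z_1, \dotsc, z_m, y_1, \dotsc, y_n, r$, so $\frma$ has a unique model and the universal quantifier is simulated inside the minimisation by the copy-variable constraints $x_\lia \lequiv \lnot z_\lia$: flipping $x_\lia$ or flipping $z_\lia$ are $\subseteq$-incomparable changes whose combinations enumerate all assignments to the universal variables. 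You instead leave the universally quantified atoms unconstrained in $\frma$, so the union over $\twi \in \mod{\frma}$ in \eqref{eq:bu operator} realises the $\forall$ quantifier directly, and the minimisation only has to decide satisfiability of $\theta(\alpha, \cdot)$, which your guard atom $g$ and escape disjunct $\lnot g \land \bigwedge_j \lnot z_j$ handle; the case analysis you flag does go through (every $g$-falsifying model of $\theta$ is strictly dominated by its $g$-retaining twin, every escape model strictly contains the change set $Z \setminus \zeta$ of a $\theta$-model, and any competitor of the escape model $\alpha$ with change set strictly inside $D = \set{g} \cup Z$ must agree with $\alpha$ on $Y$ and hence witness $\exists Z\,\theta(\alpha,Z)$). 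Your reduction is simpler, dispensing with copy variables entirely, and fully meets the stated restriction that $\frma$ be a conjunction of atoms with $\frmb$ one of its conjuncts. What the Eiter--Gottlob construction buys over yours is a strictly stronger hardness result: it holds even when $\frma$ is complete (has a single model), a restriction the statement here does not demand, whereas your $\frma$ necessarily has exponentially many models.
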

However, when dealing only with Horn formulae, the problem drops to the first
level of the polynomial hierarchy:

\begin{theorem}
	[Part of Theorem~7.2 in \cite{Eiter1992}]
	\label{thm:oasw complexity Horn}
	Let $\uopb$ be a belief update operator characterised by $\oasw$. Deciding
	whether $\frma \uopb \frmu \ent \frmb$ for Horn formulae $\frma$, $\frmu$,
	$\frmb$ is $\cccoNP$-complete. Hardness holds even if $\frma$ is a
	conjunction of objective literals and $\frmb$ is one of the literals in that
	conjunction.
\end{theorem}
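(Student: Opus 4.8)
The plan is to prove the two directions separately: membership of the entailment problem in $\cccoNP$, and $\cccoNP$-hardness. Unfolding \eqref{eq:bu operator}, the relation $\frma \uopb \frmu \ent \frmb$ holds exactly when, for every $\twi \in \mod{\frma}$ and every $\twiy \in \min(\mod{\frmu}, \pow{\twi})$, we have $\twiy \ent \frmb$. Consequently, a \emph{failure} of entailment is witnessed by a pair of interpretations $\twi, \twiy$ such that $\twi \ent \frma$, $\twiy \ent \frmu$, $\twiy \nent \frmb$, and $\twiy$ is minimal in $\mod{\frmu}$ with respect to $\pow{\twi}$. I would show the complement of the problem lies in $\ccNP$ by guessing such a pair. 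Checking $\twi \ent \frma$, $\twiy \ent \frmu$, and $\twiy \nent \frmb$ are ordinary model checks, polynomial for arbitrary formulae. The only delicate test is minimality of $\twiy$, which for general formulae conceals a $\cccoNP$ computation (this is precisely what drives the $\Pi^\ccP_2$-completeness of Theorem~\ref{thm:oasw complexity general}); the whole argument therefore hinges on showing that for Horn $\frmu$ this test becomes polynomial.

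This is the key step. Fix $\twi$ and a model $\twiy \ent \frmu$, and put $D = \twiy \div \twi$. By the definition of $\oasw$, $\twiy$ fails to be minimal iff some model $\twiz \ent \frmu$ satisfies $\twiz \div \twi \subsetneq D$, i.e.\ $\twiz \spow{\twi} \twiy$. Now any such $\twiz$ must agree with $\twi$ on every atom outside $D$ (since $\twiz \div \twi \subseteq D$) and must agree with $\twi$ on at least one atom $\atmd \in D$ (since the inclusion is strict); conversely, any model of $\frmu$ agreeing with $\twi$ outside $D$ and on one atom of $D$ is strictly closer to $\twi$ than $\twiy$. Hence $\twiy$ is non-minimal iff there exists an atom $\atmd \in D$ such that $\frmu$, conjoined with unit clauses fixing every atom outside $D$ and the atom $\atmd$ to its value in $\twi$, is satisfiable. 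Fixing atoms to truth values only adds unit clauses, so each of these at most $\lvert\atms\rvert$ formulae is again Horn, and Horn satisfiability is polynomial. Thus the minimality test runs in polynomial time, the complement lies in $\ccNP$, and the entailment problem lies in $\cccoNP$.

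For hardness I would reduce from the $\cccoNP$-complete problem of deciding unsatisfiability of a CNF $\gamma = \bigwedge_j C_j$ over atoms $\atma_1, \dots, \atma_n$, producing Horn $\frma, \frmu, \frmb$ with $\frma \uopb \frmu \ent \frmb$ iff $\gamma$ is unsatisfiable. The idea is to let $\frma$ be a conjunction of literals that fixes a distinguished atom $\atmb$ true (and auxiliary flags $\atmc_1,\dots,\atmc_m$ false) while leaving $\atma_1, \dots, \atma_n$ unconstrained, and to take $\frmb$ to be the single literal $\atmb$; then the models $\twi \in \mod{\frma}$ range over all truth assignments to the variables. The update $\frmu$ is to be designed so that $\twi$ is itself inconsistent with $\frmu$ exactly when the assignment it encodes satisfies $\gamma$: in that case the cheapest symmetric-difference repair of $\twi$ is to flip $\atmb$ to false, yielding a minimal model violating $\frmb$, whereas when the assignment falsifies $\gamma$ the interpretation $\twi$ already models $\frmu$ and its unique closest model keeps $\atmb$ true. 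The intended behaviour is summarised by the (non-Horn) target $\frmu \eq \lnot(\atmb \land \gamma)$; realising it with a genuinely Horn $\frmu$ calls for replacing the disjunctive condition ``some clause of $\gamma$ is violated'' by the flag atoms $\atmc_j$, whose meaning is pinned down by Horn clauses expressing ``$\atmc_j$ implies every literal of $C_j$ is false,'' together with a careful accounting of symmetric-difference costs so that the minimisation selects the intended repair.

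The main obstacle is exactly this Horn-ification. A Horn theory defines an intersection-closed set of models, whereas the condition ``$\atmb$ false or $\gamma$ satisfied'' is not intersection closed, so the non-Horn content of $\gamma$ cannot reside inside $\frmu$ directly and must instead be recovered through the minimisation over models by means of the auxiliary atoms. The technically demanding part is to verify that the minimal models of the resulting Horn $\frmu$ with respect to $\pow{\twi}$ really do flip $\frmb$ precisely on the satisfying assignments of $\gamma$, and that this can be arranged while keeping $\frma$ a conjunction of objective literals and $\frmb$ one of its conjuncts. Combining the polynomial minimality test (for membership) with this reduction (for hardness) then yields $\cccoNP$-completeness, with hardness in the stated restricted form.
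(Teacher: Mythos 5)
The paper never proves this statement itself: it is imported as ``Part of Theorem~7.2 in \cite{Eiter1992}'', so there is no internal proof to compare against and your attempt must stand on its own. Its membership half does stand. Your characterisation of non-minimality is correct: $\twiy \ent \frmu$ fails to be minimal w.r.t.\ $\pow{\twi}$ if and only if for some atom $\atmd \in \twiy \div \twi$ the formula obtained from $\frmu$ by adding unit clauses fixing $\atmd$ and every atom outside $\twiy \div \twi$ to their values in $\twi$ is satisfiable; each such formula is Horn, so minimality needs at most $|\atms|$ Horn-satisfiability calls, the complement of the problem is in $\ccNP$, and the problem is in $\cccoNP$. This is, incidentally, the same device the paper itself uses for the membership part of Theorem~\ref{thm:oasa complexity definite}.

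The hardness half, however, has a genuine gap, and the plan you outline cannot be completed as stated. You require a Horn $\frmu$ such that a model $\twi = \sigma \cup \set{\atmb}$ of $\frma$ (flags $\atmc_1, \dotsc, \atmc_m$ false, $\atmb$ true, $\sigma \subseteq \set{\atma_1, \dotsc, \atma_n}$) satisfies $\frmu$ exactly when $\sigma$ falsifies $\gamma$; your correctness argument (``the interpretation already models $\frmu$ and its unique closest model keeps $\atmb$ true'') rests on this invariant. But the models of a Horn formula are closed under intersection, and the map $\sigma \mapsto \sigma \cup \set{\atmb}$ commutes with intersection because the flags are false in every interpretation involved; hence $\Set{\sigma | \sigma \cup \set{\atmb} \ent \frmu}$ is intersection-closed, whereas the set of assignments falsifying a CNF is not. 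Concretely, for the satisfiable input $\gamma = \lnot\atma_1 \land \lnot\atma_2$, your requirements force $\set{\atma_1, \atmb}$ and $\set{\atma_2, \atmb}$ to be models of $\frmu$ while their intersection $\set{\atmb}$ must be a countermodel (since $\emptyset$ satisfies $\gamma$) --- a contradiction, so the reduction cannot even be instantiated on this instance. You do flag the intersection-closure obstacle yourself, but the remedy you gesture at (flag atoms defined by Horn clauses plus ``careful accounting of symmetric-difference costs'') cannot rescue the stated invariant; a working reduction must allow falsifying assignments to be inconsistent with $\frmu$ as well, and must then prove that the $\subseteq$-minimal repairs preserve $\atmb$ precisely for those assignments --- and that construction, together with its two-directional verification, is exactly the missing content of the proof. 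The omission is not cosmetic: it is precisely this restricted hardness form ($\frma$ a conjunction of objective literals, $\frmb$ one of its conjuncts, $\frmu$ Horn) that the paper invokes later when proving hardness in Theorem~\ref{thm:oasa complexity definite}. As it stands, your proposal establishes $\cccoNP$-membership but not $\cccoNP$-hardness.
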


\subsection{Logic Programming}

We define the syntax and semantics of logic programs, borrowing some of the
notation used in \cite{Delgrande2008}.

An \emph{atom} is any $\atm \in \atms$. A \emph{literal} is an atom $\atm$ or
its default negation $\lpnot \atm$. Given a set of literals $\slit$, we
introduce the following notation:
\begin{align*}
	\slitp &= \Set{\atm \in \atms | \atm \in \slit} \enspace,
	&
	\slitn &= \Set{\atm \in \atms | \lpnot \atm \in \slit} \enspace,
	\\
	\lpnot \slit &= \Set{\lpnot \atm | \atm \in \slit \cap \atms} \enspace,
	&
	\lnot \slit &= \Set{\lnot \atm | \atm \in \slit \cap \atms}
	\enspace.
\end{align*}
A \emph{rule} is a pair of sets of literals $\rl = \tpl{\hrl, \brl}$. We say
that $\hrl$ is the \emph{head of $\rl$} and $\brl$ is the \emph{body of
$\rl$}. Usually, for convenience, we write $\rl$ as
\begin{equation}
	\label{eq:rule}
	\hrlp; \lpnot \hrln \lpif \brlp, \lpnot \brln.
\end{equation}
Operators `;' and `,' express disjunctive and conjunctive connectives,
respectively. A rule is called a \emph{fact} if its head contains exactly one
literal and its body is empty. A fact is \emph{positive} if the literal in its
head is an atom. A rule is \emph{non-disjunctive} if its head contains at most
one literal; \emph{definite} if it is non-disjunctive and its head and body
contain only atoms. A \emph{program} is a set of rules. A program is
\emph{non-disjunctive} if all rules inside it are non-disjunctive;
\emph{definite} if all rules inside it are definite.

Turning to the semantics, we need to define \emph{answer sets} and
\emph{\SE-models} of a logic program. We start by defining the more basic
notion of a \emph{(classical) model} of a logic program. For every rule $\rl$
of the form \eqref{eq:rule} we denote by $\tofrm{\rl}$ the propositional
formula
\[
	\bigland (\brlp \cup \lnot \brln)
	\lthen
	\biglor (\hrlp \cup \lnot \hrln)
	\enspace.
\]
For a program $\prg$, $\tofrm{\prg} = \bigland_{\rl \in \prg} \tofrm{\rl}$.
An interpretation $\twib$ is a \emph{model} of a program $\prg$, denoted by
$\twib \ent \prg$, if $\twib \ent \tofrm{\prg}$.
We say that $\prg$ is \emph{consistent} if it has some classical model.

An interpretation $\twib$ is an \emph{answer set} of a program $\prg$ if it is
a subset-minimal model of the \emph{reduct of $\prg$ relative to $\twib$}:
\begin{align*}
	\prg^\twib = \Set{
		\hrl^+ \lpif \brl^+.
		|
		\rl \in \prg
		\land
		\hrl^- \subseteq \twib
		\land
		\brl^- \cap \twib = \emptyset
	} \enspace.
\end{align*}

\emph{\SE-models} \cite{Turner2003}, based on the non-classical logic of
Here-and-There \cite{Heyting1930,Lukasiewicz1941,Pearce1997}, provide a
monotonic characterisation of logic programs that is expressive enough to
capture both their classical models and answer sets. We use \SE-models in the
following sections to reformulate the KM postulates for belief update in the
context of rule updates.

Intuitively, each \SE-interpretation assigns \emph{one of three truth values}
to every atom. Technically it consists of a pair of propositional
interpretations, the first containing atoms that are true and the second
containing atoms that are not false. Formally:

\begin{definition}
	[\SE-interpretation \cite{Turner2003}]
	An \emph{\SE-interpretation} is a pair of interpretations $\twiab$ such that
	$\twia \subseteq \twib$. The set of all \SE-interpretations is denoted by
	$\tris$. 	
\end{definition}
\SE-models themselves are defined by referring to the \emph{program reduct}
used to define answer sets above.

\begin{definition}
	[\SE-model \cite{Turner2003}]
	\label{def:se-models}
	Let $\prg$ be a program. An \SE-interpretation $\twiab$ is an
	\emph{\SE-model of $\prg$} if $\twib \ent \prg$ and $\twia \ent \prg^\twib$.
	The set of all \SE-models of $\prg$ is denoted by $\modse{\prg}$ and we
	write $\twiab \ent \prg$ if $\twiab \in \modse{\prg}$.
\end{definition}
Note that $\twib \ent \prg$ if and only if $\twibb \in \modse{\prg}$, so
\SE-models capture the classical models of a program. And just like classical
models, the set of \SE-models of a program is \emph{monotonic}, i.e.\ larger
programs have smaller sets of \SE-models. This is one of the important
differences between \SE-models and the non-monotonic answer sets.

Nevertheless, a program's answer sets, just like its classical models, can be
extracted from its set of \SE-models: An interpretation $\twib$ is an answer
set of $\prg$ if and only if $\twibb \in \modse{\prg}$ and no $\twiab \in
\modse{\prg}$ with $\twia \subsetneq \twib$ exists. This implies that programs
with the same set of \SE-models also have the same answer sets. Moreover, when
such programs are augmented with the same set of rules, the resulting programs
\emph{still} have the same answer sets. In many situations such a property is
desirable as it allows one program to be modularly replaced by another one,
even in the presence of additional rules, without affecting the resulting
answer sets. It is typically referred to as \emph{strong equivalence}
\cite{Lifschitz2001} and the relationship between \SE-models and strong
equivalence is formally captured as follows:

\begin{proposition}
	[\SE-models and strong equivalence \cite{Turner2003}]
	\label{prop:se:strong equivalence}
	Let $\prga$, $\prgb$ be programs. It holds that $\modse{\prga} =
	\modse{\prgb}$ if and only if for every program $\prgc$, the answer sets of
	$\prga \cup \prgc$ and $\prgb \cup \prgc$ are the same.
\end{proposition}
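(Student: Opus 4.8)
The plan is to prove the two implications separately, after recording two auxiliary facts about \SE-models that carry most of the weight. The first is that \SE-models distribute over program union as intersection, i.e.\ $\modse{\prga \cup \prgc} = \modse{\prga} \cap \modse{\prgc}$. This follows directly from the rule-wise definition of the reduct, which gives $(\prga \cup \prgc)^\twib = \prga^\twib \cup \prgc^\twib$, together with the observation that $\twib \ent \prga \cup \prgc$ iff $\twib \ent \prga$ and $\twib \ent \prgc$; unfolding Definition~\ref{def:se-models} then yields the claim. The second ingredient consists of the two equivalences already noted in the text: that $\twibb \in \modse{\prg}$ iff $\twib \ent \prg$, and that $\twib$ is an answer set of $\prg$ iff $\twibb \in \modse{\prg}$ and no $\twiab \in \modse{\prg}$ with $\twia \subsetneq \twib$ exists.

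The left-to-right direction is then routine. Assuming $\modse{\prga} = \modse{\prgb}$, the distribution fact gives $\modse{\prga \cup \prgc} = \modse{\prga} \cap \modse{\prgc} = \modse{\prgb} \cap \modse{\prgc} = \modse{\prgb \cup \prgc}$ for every program $\prgc$, and since answer sets are extracted purely from the set of \SE-models, the programs $\prga \cup \prgc$ and $\prgb \cup \prgc$ have the same answer sets.

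For the converse I would argue by contraposition. Suppose $\modse{\prga} \neq \modse{\prgb}$ and pick an \SE-interpretation $\twiab$ in the symmetric difference; by swapping the roles of $\prga$ and $\prgb$ if necessary, assume $\twiab \in \modse{\prga} \setminus \modse{\prgb}$. Since $\twiab \in \modse{\prga}$ we have $\twib \ent \prga$, hence $\twibb \in \modse{\prga}$. The aim is to build a context $\prgc$ witnessing that $\twib$ is an answer set of exactly one of $\prga \cup \prgc$ and $\prgb \cup \prgc$, which already forces the two answer-set collections to differ. I split into two cases. If $\twib \not\ent \prgb$, take $\prgc$ to consist of a fact $\atm.$ for each $\atm \in \twib$ and a constraint $\lpif \atm$ for each $\atm \notin \twib$; this context has $\twib$ as its unique classical model and its reduct forces all of $\twib$, so $\twib$ is an answer set of $\prga \cup \prgc$ (using $\twibb \in \modse{\prga}$ and the distribution fact), whereas $\prgb \cup \prgc$ has no classical model at all and hence no answer set. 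In the remaining case $\twib \ent \prgb$, so $\twibb \in \modse{\prgb}$ and necessarily $\twia \subsetneq \twib$. Here I would use a context that pins the ``there''-world to $\twib$ while admitting exactly the two ``here''-worlds $\twia$ and $\twib$: the facts $\atm.$ for $\atm \in \twia$, the constraints $\lpif \atm$ for $\atm \notin \twib$, and the ``all-or-nothing'' rules $\atm \lpif \atmb$ for all $\atm, \atmb \in \twib \setminus \twia$. One checks that $\tpl{\twix, \twib} \in \modse{\prgc}$ iff $\twix \in \set{\twia, \twib}$. Intersecting with $\modse{\prga}$ leaves both $\twiab$ and $\twibb$ present, so $\twib$ is blocked from being an answer set of $\prga \cup \prgc$ by the smaller $\twiab$; intersecting with $\modse{\prgb}$ deletes $\twiab$ (as $\twiab \notin \modse{\prgb}$) but keeps $\twibb$, making $\twib$ a minimal, hence answer-set, model of $\prgb \cup \prgc$.

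The main obstacle is precisely the construction in this second case: the context must simultaneously force the ``there''-component to be $\twib$, keep the distinguishing pair $\twiab$ available, and suppress every other ``here''-world strictly below $\twib$, so that the presence or absence of $\twiab$ alone decides minimality. The all-or-nothing gadget on $\twib \setminus \twia$ is what achieves this, and the one calculation needing genuine care is verifying that its reduct relative to $\twib$ has exactly the models $\twia$ and $\twib$ among subsets of $\twib$. Everything else in the argument reduces to the two bookkeeping facts recorded at the outset.
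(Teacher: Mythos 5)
Your proof is correct, but there is nothing in the paper to compare it against: Proposition~\ref{prop:se:strong equivalence} is presented as a background result and attributed to \cite{Turner2003}; the paper never proves it (the appendix only proves the representation theorem, the properties of $\oasa$, and the complexity results). What you have written is essentially the classical Lifschitz--Pearce--Valverde/Turner argument, reconstructed from the two bookkeeping facts the paper does record. The easy direction via $\modse{\prga \cup \prgc} = \modse{\prga} \cap \modse{\prgc}$ is the standard one, and it is valid here because the reduct is defined rule-wise, so it distributes over unions. For the hard direction, your two-case context construction is the canonical gadget: when $\twib \nent \prgb$, the context whose only \SE-model is $\twibb$ separates the programs by sheer consistency (using that every answer set is a classical model, which follows from the paper's stated extraction facts); otherwise $\twia \subsetneq \twib$ is forced, and the negation-free context $\set{\atm. | \atm \in \twia} \cup \Set{\lpif \atm. | \atm \notin \twib} \cup \Set{\atm \lpif \atmb. | \atm, \atmb \in \twib \setminus \twia}$ equals its own reduct, so its \SE-models with second component $\twib$ are exactly $\twiab$ and $\twibb$; intersecting with $\modse{\prga}$ and $\modse{\prgb}$ then makes the presence or absence of $\twiab$ the sole arbiter of whether $\twib$ is an answer set, which is precisely the point of the construction. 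I checked the reduct computations in both cases and the claim that the gadget admits only $\twia$ and $\twib$ as models below $\twib$; all steps hold within the paper's syntax (empty heads and facts are legal rules). So your proposal is a correct, self-contained proof of a statement the paper only cites.
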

In other words, \SE-models exactly capture the concept of strong equivalence.
This also explains the origin of the name \emph{\SE-models} -- ``\SE{}''
stands for \emph{strong equivalence}. Based on this result, we define strong
equivalence and entailment as follows:

\begin{definition}
	[Strong equivalence and strong entailment]
	Let $\prga$, $\prgb$ be programs. We say that \emph{$\prga$ is strongly
	equivalent to $\prgb$}, denoted by $\prga \eqSE \prgb$, if $\modse{\prga} =
	\modse{\prgb}$, and that \emph{$\prga$ strongly entails $\prgb$}, denoted by
	$\prga \entSE \prgb$, if $\modse{\prga} \subseteq \modse{\prgb}$.
\end{definition}

An important distinguishing property of \SE-models that we will need to
carefully consider in the following sections is that whenever a program $\prg$
has the \SE-model $\twiab$, it also has the \SE-model $\twibb$. More
generally, any set of \SE-interpretations with this property is referred to as
\emph{well-defined} \cite{Delgrande2008}.

\begin{definition}
	[Well-defined set of \SE-interpretations \cite{Delgrande2008}]
	\label{def:se:well-defined}
	For every \SE-interpretation $\tri = \twiab$ we denote by $\tri^*$ the
	\SE-interpretation $\twibb$. A set of \SE-interpretations $\stri$ is
	\emph{well-defined} if for every \SE-interpretation $\tri$, $\tri \in \stri$
	implies $\tri^* \in \stri$.
\end{definition}

In fact, as pinpointed in the following result, not only is the set of
\SE-models of a program well-defined, but every well-defined set of
\SE-interpretations is also the set of \SE-models of some program.

\begin{proposition}
	[\citeNP{Delgrande2008}]
	\label{prop:se:well-defined}
	A set of \SE-interpretations $\stri$ is well-defined if and only if $\stri =
	\modse{\prg}$ for some program $\prg$.
\end{proposition}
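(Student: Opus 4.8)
The plan is to prove Proposition~\ref{prop:se:well-defined} by establishing its two directions separately. The forward direction follows immediately from the observation, already recorded in the text just before Definition~\ref{def:se:well-defined}, that whenever a program $\prg$ has the \SE-model $\twiab$ it also has the \SE-model $\twibb$; this is exactly the well-definedness condition, so $\modse{\prg}$ is always well-defined and nothing further is needed. The substance of the proposition is therefore the converse: given an arbitrary well-defined set of \SE-interpretations $\stri$, I must exhibit a program $\prg$ with $\modse{\prg} = \stri$.

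For the converse, my approach is to construct $\prg$ explicitly from $\stri$ by writing down, for each \SE-interpretation \emph{not} in $\stri$, a rule that excludes it while preserving every interpretation that \emph{is} in $\stri$. Concretely, for each pair $\twiab \notin \stri$ I would add a rule built from the atoms so that $\twib$ either fails to be a classical model of the rule (when $\twibb \notin \stri$, i.e.\ $\twib$ itself must be killed as a classical model) or, when $\twibb \in \stri$ but $\twiab \notin \stri$, a rule that is satisfied classically by $\twib$ yet whose reduct relative to $\twib$ is not satisfied by $\twia$. The standard encoding uses, for the first kind, a constraint $\lpif \brlp, \lpnot \brln$ whose body is true exactly at $\twib$ (taking $\brlp = \twib$ and $\brln = \atms \setminus \twib$), and for the second kind a rule with an appropriately chosen head consisting of atoms in $\twib \setminus \twia$, so that the reduct forces some atom of $\twib$ that is absent from $\twia$. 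Taking $\prg$ to be the conjunction of all such rules, I would then verify that $\modse{\prg} = \stri$.

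The verification splits into the two inclusions. For $\stri \subseteq \modse{\prg}$ I would check that each rule I added is satisfied, in the sense of Definition~\ref{def:se-models}, by every member of $\stri$; here well-definedness of $\stri$ is essential, because the reduct-based satisfaction of the second kind of rule at $\twiab \in \stri$ relies on $\twibb$ also lying in $\stri$, which is guaranteed precisely by the well-definedness hypothesis. For $\modse{\prg} \subseteq \stri$ I would argue contrapositively: if $\twiab \notin \stri$, then by construction $\prg$ contains a rule specifically designed to be violated by $\twiab$, so $\twiab \notin \modse{\prg}$.

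I expect the main obstacle to be the second kind of rule, namely correctly designing, for a pair $\twiab \notin \stri$ with $\twibb \in \stri$, a rule that is classically satisfied by $\twib$ (so as not to disturb $\twibb \in \stri$, and more generally any \SE-model with top component $\twib$) yet whose reduct relative to $\twib$ rules out $\twia$. The delicate point is that a single such rule must simultaneously exclude the unwanted $\twiab$ while not accidentally excluding any genuinely wanted \SE-interpretation sharing the same top component $\twib$; getting the head and body exactly right so that the reduct behaves correctly at every $\twia' \subseteq \twib$ is where the care lies, and where the well-definedness assumption and the precise definition of the reduct must be used in concert.
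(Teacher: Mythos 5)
The paper never proves this proposition at all --- it is imported verbatim from \cite{Delgrande2008} --- so your argument can only be measured against the construction standard in that literature, which it resembles in outline (one exclusion rule per \SE-interpretation outside $\stri$) but deviates from at the decisive point. Your forward direction is fine, and your constraint $\lpif \twib, \lpnot(\atms \setminus \twib).$ for the case $\tpl{\twib,\twib} \notin \stri$ is exactly right: it eliminates precisely the pairs with top component $\twib$, all of which lie outside $\stri$ by well-definedness.

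The genuine gap is the rule shape you commit to in the remaining case, where $\tpl{\twia,\twib} \notin \stri$, $\twia \subsetneq \twib$, but $\tpl{\twib,\twib} \in \stri$: you insist its head consist of \emph{atoms} from $\twib \setminus \twia$. No rule of that shape can work in general; the difficulty you flag in your last paragraph is not a matter of care but an impossibility. Take $\atms = \set{\atma}$ and $\stri = \Set{ \tpl{\emptyset,\emptyset}, \tpl{\set{\atma},\set{\atma}} }$, a well-defined set whose only excluded pair is $\tpl{\emptyset,\set{\atma}}$. Suppose a rule $\rl$ with atom-only head excludes $\tpl{\emptyset,\set{\atma}}$. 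Either $\set{\atma} \nent \tofrm{\rl}$, which already expels $\tpl{\set{\atma},\set{\atma}}$; or $\emptyset$ falsifies $\rl^{\set{\atma}}$, which forces $\brlp = \emptyset$ (the head condition $\hrlp \cap \emptyset = \emptyset$ being vacuous) and $\brln \cap \set{\atma} = \emptyset$, hence $\brln = \emptyset$, so that $\rl^{\emptyset} = \rl^{\set{\atma}}$ and $\tpl{\emptyset,\emptyset}$ is expelled as well. Since $\modse{\prg} = \bigcap_{\rl \in \prg} \modse{\set{\rl}}$, no program whose heads contain only atoms has $\stri$ as its set of \SE-models. This is precisely why the paper's syntax admits default negation in rule heads, and why the construction behind the cited result uses it: for each excluded $\tpl{\twia,\twib}$ with $\twia \subsetneq \twib$ one takes
\[
	(\twib \setminus \twia) \,;\, \lpnot (\twib \setminus \twia)
	\lpif
	\twia, \lpnot(\atms \setminus \twib).
\]
Its classical counterpart is a tautology, and its reduct relative to $\twib'$ equals $(\twib \setminus \twia) \lpif \twia.$ exactly when $\twib \setminus \twia \subseteq \twib' \subseteq \twib$; an eliminated pair $\tpl{\twia',\twib'}$ must then satisfy $\twia \subseteq \twia'$, $\twia' \cap (\twib \setminus \twia) = \emptyset$ and $\twib \setminus \twia \subseteq \twib' \subseteq \twib$, which forces $\twia' = \twia$ and $\twib' = \twib$. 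So each such rule eliminates exactly $\tpl{\twia,\twib}$ (in the example above: $\atma \,;\, \lpnot \atma \lpif.$), and can never collide with $\stri$. With this replacement your two-inclusion verification goes through, and indeed becomes trivial, since the rules' exclusion sets exactly cover the complement of $\stri$.
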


As a consequence, whenever $\twia \subsetneq \twib$, there is no program that
has the single \SE-model $\tri = \twiab$, though there is a program that has
the pair of \SE-models $\tri$, $\tri^*$. The following notion of a \emph{basic
program} is thus analogous to the concept of a \emph{complete formula} that is
used in the formulation of belief update postulate \bu{7}.

\begin{definition}
	[Basic program]
	\label{def:se:basic program}
	We say that a program $\prg$ is \emph{basic} if $\modse{\prg} = \set{\tri,
	\tri^*}$ for some \SE-interpretation $\tri$.
\end{definition}
Note that a program is basic if either it has a unique \SE-model $\twibb$, or
a pair of \SE-models $\twiab$ and $\twibb$. In the former case, the program
exactly determines the truth values of all atoms -- the atoms in $\twib$ are
true and the remaining atoms are false. In the latter case, the program makes
atoms in $\twia$ true, the atoms in $\twib \setminus \twia$ may either be
undefined or true, as long as they all have the same truth value, and the
remaining atoms are false.

\section{Semantic Rule Updates Based on \SE-Models}

\label{sect:km operators}

With the necessary concepts defined, we are ready to step forward and tailor
the belief update postulates and operators to the context of logic programs
viewed through their sets of \SE-models. Since \SE-models provide a
\emph{monotonic} characterisation of logic programs, the analysis provided in
\cite{Eiter2002}, which showed KM postulates not appropriate for use with
non-monotonic semantics, no longer applies. In the following we reformulate
the belief update postulates as well as a constructive characterisation of
semantic rule update operators, and finally show a counterpart of the
representation theorem for belief updates. The studied operators are semantic
in their very nature and in line with KM postulates, in contrast with the
traditional syntax-based approaches to rule updates
\cite{Leite1997,Alferes2000,Eiter2002,Sakama2003,Zhang2005,Alferes2005,Zhang2006,Delgrande2007,Sefranek2011}.

Similarly as in the case of belief updates, we liberally define a rule update
operator as any function that takes two inputs, the original program and its
update, and returns the updated program.

\begin{definition}
	[Rule update operator]
	A \emph{rule update operator} is a binary function on the set of all
	programs.
\end{definition}

In order to reformulate postulates \bu{1} -- \bu{8} for logic programs under
the \SE-model semantics, we first need to specify what a conjunction and
disjunction of logic programs is. To this end, we introduce program
conjunction and disjunction operators. These are required to assign, to each
pair of programs, a program whose set of \SE-models is the intersection and
union, respectively, of the sets of \SE-models of argument programs.

\begin{definition}
	[Program conjunction and disjunction]
	A binary operator $\prgand$ on the set of all programs is a \emph{program
	conjunction operator} if for all programs $\prga$, $\prgb$,
	\[
		\modse{\prga \prgand \prgb} = \modse{\prga} \cap \modse{\prgb} \enspace.
	\]
	A binary operator $\prgor$ on the set of all programs is a \emph{program
	disjunction operator} if for all programs $\prga$, $\prgb$,
	\[
		\modse{\prga \prgor \prgb} = \modse{\prga} \cup \modse{\prgb} \enspace.
	\]
\end{definition}

In the following we assume that some program conjunction and disjunction
operators $\prgand$, $\prgor$ are given. Note that the program conjunction
operator may simply return the union of argument programs; it is the same as
the \emph{expansion operator} defined in \cite{Delgrande2008}. A program
disjunction operator can be defined by translating the argument programs into
the logic of Here-and-There \cite{Heyting1930,Lukasiewicz1941,Pearce1997},
taking their disjunction and transforming the resulting formula back into a
logic program (using results from \cite{Cabalar2007}).

The final obstacle before we can proceed with introducing the new postulates
is the following: We need to substitute the notion of a \emph{complete
formula} used in \bu{7} with a suitable class of logic programs. It turns out
that the notion of a \emph{basic program}, as introduced in
Definition~\ref{def:se:basic program}, is a natural candidate for this
purpose. While a complete formula is defined as having a unique model, a
program is basic if it has either a unique \SE-model $\twibb$, or a pair of
\SE-models $\twiab$ and $\twibb$. The latter case needs to be allowed in order
to make the new postulate applicable to \SE-interpretations $\twiab$ with
$\twia \subsetneq \twib$ because no program has the single \SE-model $\twiab$
(c.f.\ Proposition~\ref{prop:se:well-defined}).

The following are the reformulated postulates for a rule update operator
$\uopr$ and programs $\prga$, $\prgb$, $\prgu$, $\prgv$:
\begin{enumerate}
	\renewcommand{\theenumi}{\labpuse{P\arabic{enumi}}}
	\renewcommand{\labelenumi}{\theenumi\hfill}
	\setlength{\itemsep}{.5ex}

	\item $\prga \uopr \prgu \entSE \prgu$.
		\label{pstl:puse:1}

	\item If $\prga \entSE \prgu$, then $\prga \uopr \prgu \eqSE \prga$.
		\label{pstl:puse:2}

	\item If $\modse{\prga} \neq \emptyset$ and $\modse{\prgu} \neq \emptyset$,
		then $\modse{\prga \uopr \prgu} \neq \emptyset$.
		\label{pstl:puse:3}

	\item If $\prga \eqSE \prgb$ and $\prgu \eqSE \prgv$, then $\prga \uopr
		\prgu \eqSE \prgb \uopr \prgv$.
		\label{pstl:puse:4}

	\item $(\prga \uopr \prgu) \prgand \prgv \entSE \prga \uopr (\prgu \prgand
		\prgv)$.
		\label{pstl:puse:5}

	\item If $\prga \uopr \prgu \entSE \prgv$ and $\prga \uopr \prgv \entSE
		\prgu$, then $\prga \uopr \prgu \eqSE \prga \uopr \prgv$.
		\label{pstl:puse:6}

	\item If $\prga$ is basic, then $(\prga \uopr \prgu) \prgand (\prga \uopr
		\prgv) \entSE \prga \uopr (\prgu \prgor \prgv)$.
		\label{pstl:puse:7}

	\item $(\prga \prgor \prgb) \uopr \prgu \eqSE (\prga \uopr \prgu) \prgor
		(\prgb \uopr \prgu)$.
		\label{pstl:puse:8}
\end{enumerate}

Now we turn to a constructive characterisation of rule update operators
satisfying conditions \puse{1} -- \puse{8}. Analogically to belief updates, it
is based on an order assignment, but this time over the set of all
\SE-interpretations $\tris$. Since the set of \SE-models of a program
must be well-defined, not every order assignment characterises a rule update
operator. We thus additionally define \emph{well-defined order assignments}
as those that do.

\begin{definition}
	[Rule update operator characterised by an order assignment]
	Let $\uopr$ be a rule update operator and $\oas$ a preorder assignment
	over $\tris$. We say that $\uopr$ is \emph{characterised by $\oas$} if for
	all programs $\prga$, $\prgu$,
	\[
		\modse{\prga \uopr \prgu}
		=
		\bigcup_{\tria \in \modse{\prga}}
		\min \left( \modse{\prgu}, \potrd \right)
		\enspace.
	\]
	We say that a preorder assignment over $\tris$ is \emph{well-defined} if
	some rule update operator is characterised by it.
\end{definition}

Similarly as with belief update, we require the order assignment to be
faithful, i.e.\ to consider each \SE-interpretation the closest to itself.

\begin{definition}
	[Faithful order assignment]
	A preorder assignment $\oas$ over $\tris$ is \emph{faithful} if for every
	\SE-interpretation $\tria$ the following condition is satisfied:
	\[
		\text{For every } \trib \in \tris \text{ with } \trib \neq \tria
		\text{ it holds that } \tria \spotrd \trib
		\enspace.
	\]
\end{definition}

Interestingly, faithful assignments characterise the same class of operators
as the larger class of semi-faithful assignments, defined as follows:

\begin{definition}
	[Semi-faithful order assignment]
	A preorder assignment $\oas$ over $\tris$ is \emph{semi-faithful} if for
	every \SE-interpretation $\tria$ the following conditions are satisfied:
	\begin{enumerate}
		\item For every $\trib \in \tris$ with $\trib \neq \tria$ and $\trib \neq
			\tria^*$, either $\tria \spotrd \trib$ or $\tria^* \spotrd \trib$.

		\item If $\tria^* \potrd \tria$, then $\tria \potrd \tria^*$.
	\end{enumerate}
\end{definition}

Finally, we require the preorder assignment to satisfy one further condition,
related to the well-definedness of sets of \SE-models of every program. It can
be seen as the natural semantic counterpart of \puse{7}.

\begin{definition}
	[Organised order assignment]
	A preorder assignment $\oas$ is \emph{organised} if for all
	\SE-interpretations $\tria$, $\trib$ and all well-defined sets of
	\SE-interpretations $\stria, \strib$ the following condition is satisfied:
	\begin{align*}
		& \text{If }
		\trib \in \min ( \stria, \potrd )
			\cup \min ( \stria, \potrsd )
		\text{ and }
		\trib \in \min ( \strib, \potrd )
			\cup \min ( \strib, \potrsd ), \\
		& \text{then }
		\trib \in \min ( \stria \cup \strib, \potrd )
			\cup \min (\stria \cup \strib, \potrsd ).
	\end{align*}
\end{definition}

Now we are ready to formulate the main result of this section:

\begin{theorem}
	[Representation theorem for rule updates]
	\label{thm:representation}
	Let $\uopr$ be a rule update operator. The following conditions are
	equivalent:
	\begin{enumerate}[a)]
		\item The operator $\uopr$ satisfies conditions \puse{1} -- \puse{8}.

		\item The operator $\uopr$ is characterised by a semi-faithful and
			organised preorder assignment.

		\item The operator $\uopr$ is characterised by a faithful and organised
			partial order assignment.
	\end{enumerate}
\end{theorem}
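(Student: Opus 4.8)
The plan is to establish the cycle of implications $c) \Rightarrow b) \Rightarrow a) \Rightarrow c)$, following the structure of Katsuno and Mendelzon's proof of Theorem~\ref{thm:bu:representation} but accounting for the fact that, in the \SE-model setting, only \emph{well-defined} sets of \SE-interpretations arise as sets of \SE-models of programs (Proposition~\ref{prop:se:well-defined}). This single constraint is responsible for every departure from the classical argument: it forces the weaker \emph{semi-faithful} condition in place of faithfulness, and it is the reason the \emph{organised} condition is phrased in terms of the combined minima $\min(\stria, \potrd) \cup \min(\stria, \potrsd)$.

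The implication $c) \Rightarrow b)$ is immediate: a partial order is in particular a preorder, and a faithful assignment is semi-faithful, since faithfulness places $\tri$ strictly below every other \SE-interpretation, so the first semi-faithfulness clause holds with $\tri \spotrd \trib$ and the second holds vacuously. For the soundness direction $b) \Rightarrow a)$ I would verify the eight postulates directly from the characterisation formula. Postulates \puse{1}, \puse{3}, \puse{4} and \puse{8} reduce to elementary properties of minima over the finite set $\tris$: \puse{1} because each $\min(\modse{\prgu}, \potrd) \subseteq \modse{\prgu}$; \puse{4} because the formula depends only on the sets of \SE-models; \puse{3} because minima of nonempty subsets of a finite preorder are nonempty; and \puse{8} because the union over $\modse{\prga} \cup \modse{\prgb}$ splits. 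Postulates \puse{2}, \puse{5} and \puse{6} follow by the usual manipulations of minima, with \puse{2} additionally using semi-faithfulness together with the well-definedness of $\modse{\prga}$ to show that each $\tri \in \modse{\prga} \subseteq \modse{\prgu}$ is already a minimum. The only postulate requiring the extra hypothesis is \puse{7}: when $\prga$ is basic with $\modse{\prga} = \set{\tri, \tri^*}$, the characterisation gives $\modse{\prga \uopr \prgu} = \min(\modse{\prgu}, \potrd) \cup \min(\modse{\prgu}, \potrsd)$, and applying the organised condition with $\stria = \modse{\prgu}$ and $\strib = \modse{\prgv}$ yields \puse{7} at once. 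Here the well-definedness of the output is automatic, since $\uopr$ is assumed to be a genuine rule update operator.

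For the completeness direction $a) \Rightarrow c)$ I would construct the assignment from the operator itself, using the idea behind the Katsuno--Mendelzon construction: the order $\potrd$ at $\tri$ is read off from how $\uopr$ updates a program \emph{centered} at $\tri$. Concretely, for each \SE-interpretation I fix a basic program whose \SE-models are $\set{\tri, \tri^*}$ (one exists by Definition~\ref{def:se:basic program} and Proposition~\ref{prop:se:well-defined}), and for two \SE-interpretations $\trib, \tric$ I declare $\trib \potrd \tric$ exactly when $\trib = \tric$ or $\trib$ survives as an \SE-model after updating that basic program by a program whose \SE-models are the well-defined closure of $\set{\trib, \tric}$. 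Using \puse{8} to split an arbitrary program into a disjunction of basic programs, one for each of its \SE-models, together with \puse{1}, \puse{2}, \puse{5} and \puse{6} to identify the surviving \SE-models with the minima, one shows that $\uopr$ is indeed characterised by this assignment; semi-faithfulness follows from \puse{2}, and organisedness from \puse{7}. To reach the stronger statement $c)$ I would then refine the semi-faithful preorder assignment into a faithful partial order one, placing each $\tri$ strictly below all other \SE-interpretations and linearising the remaining ties, checking that neither the minima appearing in the characterisation nor the organised condition are affected.

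The main obstacle is this completeness direction, and specifically the tension between the well-definedness constraint and the order construction. Classically one probes $\uopr$ with the two-point model set $\set{\trib, \tric}$; here no program realises that set unless it is already well-defined, so the probe is forced to use $\set{\trib, \trib^*, \tric, \tric^*}$, and the argument must show that the minima over this enlarged set still encode the intended comparison of $\trib$ and $\tric$. The refinement step is equally subtle: passing from a semi-faithful preorder to a faithful partial order must leave the characterisation intact---the union $\bigcup_{\tri \in \modse{\prga}} \min(\modse{\prgu}, \potrd)$ must not change---and must preserve organisedness, which quantifies over all well-defined $\stria$ and $\strib$. Establishing that faithful and semi-faithful assignments characterise exactly the same operators, with enough uniformity for the organised condition to survive the refinement, is where the real work lies.
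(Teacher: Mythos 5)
Your cycle c) $\Rightarrow$ b) $\Rightarrow$ a) $\Rightarrow$ c) is a legitimate rearrangement of the paper's own cycle (Propositions~\ref{prop:if postulates then ordering}, \ref{prop:if semi-faithful then faithful} and \ref{prop:if ordering then postulates} establish a) $\Rightarrow$ b) $\Rightarrow$ c) $\Rightarrow$ a)), and your steps c) $\Rightarrow$ b) and b) $\Rightarrow$ a) are sound: your treatment of \puse{2} under mere semi-faithfulness is exactly the content of Lemma~\ref{lemma:semi-faithful consequence}, and \puse{7} from organisedness matches the paper's argument. The genuine gap is in a) $\Rightarrow$ c), in the definition of the generated order, and it is precisely the obstacle you name at the end without resolving. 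You set $\trib \potrd \tric$ iff $\trib = \tric$ or $\trib \in \modse{\synt{\tria} \uopr \synt{\trib, \tric}}$. When $\tria \neq \tria^*$, the program $\synt{\tria}$ has the two \SE-models $\tria$ and $\tria^*$, so for any assignment characterising $\uopr$ the set of survivors of this probe is $\min(S, \potrd) \cup \min(S, \potrsd)$ with $S = \set{\trib, \trib^*, \tric, \tric^*}$. Survival therefore conflates minimality with respect to the order at $\tria$ with minimality with respect to the order at $\tria^*$: if $\trib$ is minimal at $\tria$ while $\tric$ is minimal at $\tria^*$, your relation records a two-way tie between $\trib$ and $\tric$ even when the order at $\tria$ places $\trib$ strictly below $\tric$; over larger model sets such spurious ties produce spurious minima, so the assignment you generate need not characterise $\uopr$ at all. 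This is exactly what the two extra clauses of the paper's Definition~\ref{def:preorder assignment generated by operator} repair: $\trib \prec^{\tria}_{\uopr} \tric$ additionally requires that $\tric$ does \emph{not} survive the probe, and that (when $\trib \neq \trib^*$) $\tric$ \emph{does} survive the probe against $\synt{\trib^*, \tric}$, certifying that it is $\trib$ itself, rather than its companion $\trib^*$, that eliminates $\tric$. The paper then takes the reflexive--transitive closure of $\prec^{\tria}_{\uopr}$, which settles transitivity (a point your proposal never addresses), at the cost of the new obligations discharged by Lemma~\ref{lemma:if less then not model} -- a delicate induction using \puse{1} and \puse{4}--\puse{7} -- and by Proposition~\ref{prop:characterisation for basic programs}; your proposal has no counterpart of either.

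A second, smaller error: in the final refinement you propose to obtain antisymmetry by ``linearising the remaining ties''. That would break the characterisation: if $\trib$ and $\tric$ are tied and both are minimal in some $\modse{\prgu}$, both must remain \SE-models of $\synt{\tria} \uopr \prgu$, but after linearisation one of them drops out of the minima. The correct move, made in Proposition~\ref{prop:if semi-faithful then faithful}, is to delete ties rather than break them -- set $\trib \potrp \tric$ iff $\trib = \tria$ or $\trib = \tric$ or $\trib \spotrd \tric$ -- and then to verify that the union $\min(\stri, \potrd) \cup \min(\stri, \potrsd)$ is unchanged for every well-defined $\stri$, which is where the pairing of $\tria$ with $\tria^*$ enforced by well-definedness enters once more.
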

\begin{proof}
	See \ref{app:proofs}, page~\pageref{proof:representation}.
\end{proof}

This theorem provides a constructive characterisation of rule update operators
satisfying the defined postulates. It facilitates the analysis of their
properties, both semantic as well as computational. Note also that it implies
that the larger class of \emph{semi-faithful} and organised \emph{preorder}
assignments is equivalent to the smaller class of \emph{faithful} and
organised \emph{partial order} assignments. Furthermore, it offers a strategy
for defining operators satisfying the postulates that can be directly applied
whenever an order assignment is known or can be approximated. This strategy is
also complete in the sense that, up to strong equivalence, all operators
satisfying the postulates can be characterised and distinguished by applying
this strategy.


In what follows, we define a specific update operator based on the ideas
underlying Winslett's update semantics \cite{Keller1985,Winslett1990} defined
Section~\ref{sect:preliminaries}. Similarly as was argued in
\cite{Delgrande2008}, since we are working with well-defined sets of
\SE-interpretations, preference needs to be given to their second component.
Thus, we extend the assignment $\oasa$ to all \SE-interpretations $\tria =
\twiab$, $\trib = \tpl{\twic_1, \twid_1}$, $\tric = \tpl{\twic_2, \twid_2}$ as
follows: $\trib \potra \tric$ if and only if the following conditions are
satisfied:
\begin{enumerate}
	\item $(\twid_1 \div \twib) \subseteq (\twid_2 \div \twib)$;

	\item If $(\twid_1 \div \twib) = (\twid_2 \div \twib)$, then $(\twic_1 \div
		\twia) \setminus \Delta \subseteq (\twic_2 \div \twia) \setminus \Delta$
		where $\Delta = \twid_1 \div \twib$.
\end{enumerate}
Intuitively, first we compare the differences between the second components of
$\trib$ and $\tric$ w.r.t.\ $\tria$. If they are equal, we compare the
differences between the first components of $\trib$ and $\tric$ w.r.t.\
$\tria$, but now ignoring the differences between the second components. A
concrete illustration of these comparisons is presented next:

\begin{example}
	[Assignment $\oasa$ for \SE-interpretations]
	Let the \SE-interpretations $\tria$, $\trib$, $\tric_1$, $\tric_2$,
	$\tric_3$ be as follows:\footnote{%
		For the sake of readability, we omit the usual set notation when listing
		\littleSE-interpretations. For example, instead of $\tpl{ \set{\atma},
		\set{\atma, \atmb} }$ we simply write $\tpl{\atma, \atma\atmb}$.
	}
	\begin{align*}
		\tria &= \tpl{\twia, \twib} = \tpl{\atma, \atma\atmb}
		\enspace,
		&
		\trib &= \tpl{\twic, \twid} = \tpl{\atma, \atma\atmc}
		\enspace,
		\\
		\tric_1 &= \tpl{\twic_1, \twid_1} = \tpl{\atma, \atma\atmc\atmd}
		\enspace,
		&
		\tric_2 &= \tpl{\twic_2, \twid_2} = \tpl{\emptyset, \atma\atmc}
		\enspace,
		&
		\tric_3 &= \tpl{\twic_3, \twid_3} = \tpl{\atma\atmc, \atma\atmc}
		\enspace.
	\end{align*}
	We can see that $(\twid \div \twib) = \set{\atmb, \atmc} \subsetneq
	\set{\atmb, \atmc, \atmd} = (\twid_1 \div \twib)$, so it follows that $\trib
	\potra \tric_1$ holds and it is not the case that $\tric_1 \potra \trib$.
	Thus, $\trib \spotra \tric_1$.

	On the other hand, $(\twid \div \twib) = (\twid_2 \div \twib) = (\twid_3
	\div \twib) = \Delta = \set{\atmb, \atmc}$, so $\trib$ and $\tric_2$ can
	only be distinguished based on the second condition. Furthermore, we have
	$(\twic \div \twia) \setminus \Delta = \emptyset \subsetneq \set{\atma} =
	(\twic_2 \div \twia) \setminus \Delta$. Similarly as before, we obtain
	$\trib \spotra \tric_2$.

	A slightly different case occurs with $\tric_3$ because $(\twic_3 \div
	\twia) \setminus \Delta = \set{\atmc} \setminus \set{\atmb, \atmc} =
	\emptyset$ and it follows that both $\trib \potra \tric_3$ and $\tric_3
	\potra \trib$ hold, despite the fact that $\trib \neq \tric_3$.
\end{example}

%
%
%

Our following result shows that $\oasa$ indeed satisfies the necessary
conditions to characterise rule update operators satisfying the reformulated
postulates.

\begin{proposition}
	\label{prop:oasa is well-defined, faithful and organised}
	The assignment $\oasa$ is a well-defined, faithful and organised preorder
	assignment.
\end{proposition}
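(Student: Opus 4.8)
The plan is to establish four things in turn: that each assigned relation $\potra$ is a preorder, and that the assignment $\oasa$ is faithful, well-defined, and organised. The first two are direct. Each $\potra$ is reflexive and transitive because its definition is the lexicographic combination of two instances of set inclusion---first on the symmetric differences of the second components with $\twib$, and, on ties, on the trimmed symmetric differences of the first components with $\twia$ (each reduced by the common second-component difference $\Delta$); transitivity follows since along any chain that ties on the first layer all the intermediate second-component differences coincide, so the second layer chains by transitivity of $\subseteq$. For faithfulness I would fix the reference $\tria = \twiab$: since both symmetric differences of $\tria$ with itself are empty, $\tria \potra \trib$ holds for every $\trib$, whereas $\trib \potra \tria$ forces first the second and then the first component of $\trib$ to coincide with those of $\tria$, i.e.\ $\trib = \tria$; hence $\tria \spotra \trib$ whenever $\trib \neq \tria$.

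For well-definedness, by Proposition~\ref{prop:se:well-defined} it suffices to show that for all programs $\prga, \prgu$ the set $\bigcup_{\tria \in \modse{\prga}} \min(\modse{\prgu}, \potra)$ is closed under $(\cdot)^*$. I would take $\trib = \twicd \in \min(\modse{\prgu}, \potra)$ for some $\tria = \twiab \in \modse{\prga}$ and show that $\trib^* = \twidd \in \min(\modse{\prgu}, \potrsa)$; since $\modse{\prga}$ and $\modse{\prgu}$ are well-defined, $\tria^* = \twibb \in \modse{\prga}$ and $\trib^* \in \modse{\prgu}$, which places $\trib^*$ into the union. The engine of the argument is that $\tria$ and $\tria^*$ share the second component $\twib$, so $\potra$ and $\potrsa$ agree on the second-component comparison layer. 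If some $\tri'' \in \modse{\prgu}$ were $\spotrsa$-below $\trib^*$, then either it is strictly smaller on that shared layer, in which case it would also be $\spotra$-below $\trib$ and contradict the minimality of $\trib$; or it ties there and beats $\trib^*$ on the trimmed first-component layer, which is impossible because $\trib^*$ has equal components and hence an empty trimmed difference.

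The main obstacle is the organised condition, where the well-definedness of $\stria$ and $\strib$ does the real work. Writing $\trib = \tpl{\twic_0, \twid_0}$ and $\Delta_0 = \twid_0 \div \twib$, I would first reduce the statement to propositional shape. The hypotheses give $\trib \in \stria \cap \strib$, and minimality in $\stria \cup \strib$ with respect to a fixed order is just joint minimality in $\stria$ and in $\strib$; so, abbreviating $a = [\trib \in \min(\stria, \potra)]$, $a^* = [\trib \in \min(\stria, \potrsa)]$, $b = [\trib \in \min(\strib, \potra)]$, and $b^* = [\trib \in \min(\strib, \potrsa)]$, the hypotheses read $a \lor a^*$ and $b \lor b^*$, while the goal is $(a \land b) \lor (a^* \land b^*)$. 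This can fail only in the crossed situation $a \land \lnot a^* \land \lnot b \land b^*$ or its mirror, so it remains to exclude these.

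The tool for this is the lemma that, for any well-defined set $\stri$, one has $\trib \in \min(\stri, \potrsa)$ if and only if $\Delta_0$ is $\subseteq$-minimal among the second-component symmetric differences occurring in $\stri$ and the trimmed difference $(\twic_0 \div \twib) \setminus \Delta_0$ is empty. The forward direction is exactly where well-definedness enters: $\trib \in \stri$ forces $\tpl{\twid_0, \twid_0} \in \stri$, an element with the same second-component difference $\Delta_0$ but empty trimmed first-component difference, which is therefore $\spotrsa$-below $\trib$ unless $\trib$ itself already has empty trimmed difference. The crucial feature is that emptiness of $(\twic_0 \div \twib) \setminus \Delta_0$ is a property of $\trib$ and the reference alone, independent of the set. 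Hence, in the crossed case, the lemma applied to $\strib$ turns $b^*$ into the emptiness of $(\twic_0 \div \twib) \setminus \Delta_0$, while $a$ supplies the $\subseteq$-minimality of $\Delta_0$ in $\stria$ (necessary for $\potra$-minimality); the lemma applied to $\stria$ then turns these two facts into $a^*$, contradicting $\lnot a^*$. The mirror case is symmetric, so both crossed situations are impossible and the organised condition follows.
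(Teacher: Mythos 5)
Your proposal is correct, and while it uses the same four-part decomposition as the paper (preorder, faithful, well-defined, organised --- the paper's Propositions~\ref{prop:oasa is preorder assignment}, \ref{prop:oasa is faithful}, \ref{prop:oasa is well-defined} and \ref{prop:oasa is organised}), the two harder parts are argued along genuinely different lines. For well-definedness, the paper reduces the problem to showing that every pair-set $\min(\stri, \potra) \cup \min(\stri, \potrsa)$ is closed under $(\cdot)^*$, which forces two cases (one for elements minimal w.r.t.\ $\potra$, one for elements minimal w.r.t.\ $\potrsa$); you instead work with the full union $\bigcup_{\tria \in \modse{\prga}} \min(\modse{\prgu}, \potra)$ and only need the single implication ``$\trib \in \min(\modse{\prgu},\potra)$ implies $\trib^* \in \min(\modse{\prgu},\potrsa)$'', which coincides with the paper's first case and renders its second case unnecessary --- a mild economy. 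For the organised condition the divergence is larger: the paper argues by contraposition, takes witnesses $\tric_1 \spotra \trib$ in $\stria$ and $\tric_2 \spotrsa \trib$ in $\strib$, and in the nontrivial case stars the $\potra$-witness, using well-definedness of $\stria$ to conclude $\tric_1^* \in \stria$ and $\tric_1^* \spotrsa \trib$; you instead reduce the claim propositionally to excluding the two ``crossed'' cases and prove a reusable characterisation: over a well-defined set, $\potrsa$-minimality of $\trib$ is equivalent to $\subseteq$-minimality of its second-component difference plus emptiness of its trimmed first-component difference, where well-definedness enters through $\trib^* \in \stri$. Since the emptiness condition is independent of the set, minimality transfers between $\stria$ and $\strib$, which is exactly what kills the crossed cases. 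Both proofs ultimately rest on the same two facts --- the second-component layer of $\potra$ and $\potrsa$ is shared, and starred interpretations have empty trimmed difference --- but your lemma isolates the role of well-definedness more transparently, at the cost of some propositional bookkeeping; the paper's witness-chasing is more direct. One small caveat: throughout, you implicitly use the explicit description of the strict relations $\spotra$, $\spotrsa$ (proper inclusion on the second layer, or equality there and proper inclusion on the trimmed first layer); this is the paper's Lemma~\ref{lemma:oasa strict} and must be verified in a full write-up, though the verification is routine.
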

\begin{proof}
	See \ref{app:proofs}, page~\pageref{proof:oasa is well-defined, faithful and
	organised}.
\end{proof}
Furthermore, as a consequence of Theorem~\ref{thm:representation} and
Proposition~\ref{prop:oasa is well-defined, faithful and organised}:

\begin{corollary}
	Every rule update operator characterised by $\oasa$ satisfies conditions
	\puse{1} -- \puse{8}.
\end{corollary}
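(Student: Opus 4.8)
The plan is to verify the four requirements in turn, the first two being essentially routine and the last two resting on one structural observation about the star operation. Throughout, fix a reference SE-interpretation $\tria = \twiab$ and read the two defining clauses of $\oasa$ as a lexicographic combination of two subset preorders: a \emph{primary} comparison of $\twid \div \twib$ (the symmetric difference of the second component with $\twib$) and, on primary ties, a \emph{secondary} comparison of $(\twic \div \twia) \setminus (\twid \div \twib)$. Reflexivity and transitivity of $\potra$ then follow immediately, since each layer is a subset preorder and the lexicographic combination of preorders is a preorder; this shows $\oasa$ is a preorder assignment. For faithfulness a direct check suffices: $\tria \potra \trib$ holds for every $\trib$ (both keys of $\tria$ are $\emptyset$), whereas $\trib \potra \tria$ forces the second component of $\trib$ to be $\twib$ via the primary clause and then its first component to be $\twia$ via the secondary clause, i.e.\ $\trib = \tria$; hence $\tria \spotra \trib$ whenever $\trib \neq \tria$.

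The two remaining properties both hinge on the following observation, where I write $\potrsa$ for the order that $\oasa$ assigns to $\tria^* = \twibb$. Both $\potra$ and $\potrsa$ use the \emph{same} primary key $\twid \div \twib$, so strict dominance at the primary level transfers between the two orders. Moreover, for any $\tric = \twicd$ its star $\tric^* = \tpl{\twid, \twid}$ has the same primary key as $\tric$, while its $\potrsa$-secondary key is $(\twid \div \twib) \setminus (\twid \div \twib) = \emptyset$; thus $\tric^*$ is $\potrsa$-minimal among all SE-interpretations sharing the primary key of $\tric$.

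For well-definedness I must exhibit a rule update operator characterised by $\oasa$, which by Proposition~\ref{prop:se:well-defined} reduces to showing that $\bigcup_{\tria \in \modse{\prga}} \min(\modse{\prgu}, \potra)$ is well-defined for all programs $\prga, \prgu$. Given $\tric \in \min(\modse{\prgu}, \potra)$ with $\tria \in \modse{\prga}$, I use well-definedness of $\modse{\prga}$ to obtain $\tria^* \in \modse{\prga}$ and of $\modse{\prgu}$ to obtain $\tric^* \in \modse{\prgu}$, and then argue $\tric^* \in \min(\modse{\prgu}, \potrsa)$: no element of $\modse{\prgu}$ beats $\tric^*$ at the primary level (otherwise it would primary-beat $\tric$, contradicting $\tric \in \min(\modse{\prgu}, \potra)$ via the shared primary key), and nothing can undercut the empty $\potrsa$-secondary key of $\tric^*$. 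Hence $\tric^*$ lies in the union, which is therefore closed under the star and well-defined.

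For the organised condition I would argue by contradiction, assuming $\trib \in \min(\stria, \potra) \cup \min(\stria, \potrsa)$ and $\trib \in \min(\strib, \potra) \cup \min(\strib, \potrsa)$ yet $\trib$ is minimal under neither $\potra$ nor $\potrsa$ in $\stria \cup \strib$; this yields witnesses $\tric_1 \spotra \trib$ and $\tric_2 \spotrsa \trib$ with $\tric_1, \tric_2 \in \stria \cup \strib$. When $\trib$ is minimal under the \emph{same} order in both sets the conclusion is immediate, since single-order minimality is preserved under union, so the crux — and the main obstacle — is the genuinely mixed case, say $\trib \in \min(\stria, \potra)$ and $\trib \in \min(\strib, \potrsa)$. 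Here the minimality hypotheses force $\tric_1 \in \strib$ and $\tric_2 \in \stria$; any witness that beats $\trib$ at the primary level beats it in both orders (shared primary key), contradicting one hypothesis, so both witnesses must tie $\trib$ at the primary level and win at the secondary level. The secondary win of $\tric_2$ forces the $\potrsa$-secondary key of $\trib$ to be nonempty, whence the star $\tric_1^* \in \strib$, having empty $\potrsa$-secondary key and the same primary key as $\trib$, satisfies $\tric_1^* \spotrsa \trib$, contradicting $\trib \in \min(\strib, \potrsa)$. The symmetric mixed case is handled identically with $\stria$ and $\strib$ interchanged, which completes the argument.
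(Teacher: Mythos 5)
Your proposal is correct and follows essentially the same route as the paper: you re-establish Proposition~\ref{prop:oasa is well-defined, faithful and organised} (that $\oasa$ is a well-defined, faithful and organised preorder assignment) and the corollary then follows via Theorem~\ref{thm:representation}. Your verification of the four properties runs on the same mechanism as the paper's appendix proofs (Lemma~\ref{lemma:oasa strict} and Propositions~\ref{prop:oasa is preorder assignment}--\ref{prop:oasa is organised}) --- the shared primary key of $\potra$ and $\potrsa$, and the empty secondary key of starred \SE-interpretations --- merely packaged as a single lexicographic observation and with the case analysis organised slightly more uniformly.
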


As regards the computational complexity of query answering for rule update
operators characterised by $\oasa$, it follows the same pattern as query
answering for Winslett's belief update operator (c.f.\ Theorems~\ref{thm:oasw
complexity general} and \ref{thm:oasw complexity Horn}). In the general case,
it resides in the second level of the polynomial hierarchy while for definite
programs it drops to the first level. Formally:

\begin{theorem}
	[Computational complexity of rule updates characterised by $\oasa$]
	\label{thm:oasa complexity general}
	Let $\uopr$ be a rule update operator characterised by $\oasa$. Deciding
	whether $\prga \uopr \prgu \entSE \prgb$ for programs $\prga$, $\prgu$,
	$\prgb$ is $\Pi^\ccP_2$-complete. Hardness holds even if $\prga$ is a set of
	positive facts, $\prgu$ is a non-disjunctive program and $\prgb$ contains a
	single fact from $\prga$.
\end{theorem}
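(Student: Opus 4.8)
The plan is to prove the two halves of $\Pi^\ccP_2$-completeness separately: membership in $\Pi^\ccP_2$ and $\Pi^\ccP_2$-hardness under the stated syntactic restrictions.

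\textbf{Membership.} I would show that the complementary problem of deciding $\prga \uopr \prgu \nentSE \prgb$ lies in $\Sigma^\ccP_2$. Expanding the characterisation underlying Theorem~\ref{thm:representation}, $\modse{\prga \uopr \prgu} = \bigcup_{\tria \in \modse{\prga}} \min(\modse{\prgu}, \potra)$, so non-entailment holds exactly when there are SE-interpretations $\tria, \trib$ with $\tria \in \modse{\prga}$, $\trib \in \modse{\prgu}$, $\trib \notin \modse{\prgb}$, and $\trib$ minimal in $\modse{\prgu}$ with respect to $\potra$. Each SE-interpretation is a pair of subsets of $\atms$ and hence of polynomial size, and deciding membership in $\modse{\cdot}$ is polynomial (check the classical-model condition on the upper component and the reduct condition on the lower one). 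The only source of higher complexity is minimality, which is the universally quantified $\cccoNP$ predicate stating that for every $\tric \in \modse{\prgu}$ it is not the case that $\tric \spotra \trib$. Guessing $\tria$ and $\trib$ and then verifying this predicate with a single call to an $\ccNP$ oracle is an $\exists\forall$ computation, placing non-entailment in $\Sigma^\ccP_2$ and therefore the problem itself in $\Pi^\ccP_2$.

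\textbf{Hardness.} I would reduce from query answering for Winslett's belief update operator, which by Theorem~\ref{thm:oasw complexity general} is already $\Pi^\ccP_2$-hard when $\frma$ is a conjunction of atoms and $\frmb$ is one of those atoms. Given such an instance with $\frma = \bigland_{\atm \in A} \atm$ and $\frmb = \atmb \in A$, I would put $\prga = \Set{\atm \lpif | \atm \in A}$ (a set of positive facts) and $\prgb = \set{\atmb \lpif}$ (a single fact of $\prga$), and translate $\frmu$ into a non-disjunctive program $\prgu$. Writing $\frmu$ in CNF, each clause $\lnot \atma_1 \lor \dots \lor \lnot \atma_p \lor \atmb_1 \lor \dots \lor \atmb_q$ can be encoded by the single non-disjunctive rule $\atmb_1 \lpif \atma_1, \dots, \atma_p, \lpnot \atmb_2, \dots, \lpnot \atmb_q$ (and by the constraint $\lpif \atma_1, \dots, \atma_p$ when $q = 0$), since the formula $\tofrm{\cdot}$ of this rule is exactly the clause. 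This yields a non-disjunctive $\prgu$ whose classical models coincide with $\mod{\frmu}$. The key link is then that the primary tier of $\oasa$ compares upper SE-components by symmetric difference, so for a fixed $\tria = \twiab$ the upper components of $\min(\modse{\prgu}, \potra)$ are precisely the $\pow{\twib}$-minimal (Winslett-closest) classical models of $\prgu$; ranging $\twib$ over the upper components of $\modse{\prga}$, which are exactly the models of $\frma$, reconstructs $\mod{\frma \uopb \frmu}$ on the upper components.

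\textbf{Main obstacle.} The delicate point, and the step I expect to dominate the proof, is bridging the gap between this upper-component reconstruction and the query $\prgb$, which, being a positive fact, constrains only the \emph{lower} SE-component: one must show that every minimal SE-model $\trib = \tpl{\twic, \twid}$ carries $\atmb$ in $\twic$ iff it carries it in $\twid$. Two ingredients drive this. First, $\oasa$ restricted to total (diagonal) SE-interpretations coincides with Winslett's classical assignment, because on the diagonal condition~(2) in the definition of $\oasa$ becomes vacuous; hence if the minimal SE-models are total, the lower-component test reads the same value as the classical model. Second, since $\atmb \in A \subseteq \twia$ for every relevant $\tria$ and $\atmb \notin \twid \div \twib$ whenever $\atmb \in \twid$, the secondary tier of $\oasa$ penalises dropping $\atmb$ from the lower component, so $\atmb$ tends to be retained. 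The technical crux is to rule out the remaining danger, namely that some reduct $\prgu^\twid$ admits a subset-minimal model omitting $\atmb$ incomparably; the cleanest remedy is to augment $\prgu$ so that its SE-models become total (equivalently, each reduct $\prgu^\twid$ has $\twid$ as its unique subset-model), and then to verify that this totality-enforcing machinery does not perturb the symmetric-difference minimisation underlying the correspondence. Establishing this invariance—so that the constructed instance satisfies $\prga \uopr \prgu \entSE \prgb$ exactly when $\frma \uopb \frmu \ent \frmb$—is the heart of the reduction; once it is in place, correctness together with the polynomial bound on the construction follow routinely.
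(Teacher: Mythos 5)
Your membership argument is correct and coincides with the paper's: guess $\tria$ and $\trib$, verify $\tria \in \modse{\prga}$, $\trib \in \modse{\prgu}$ and $\trib \notin \modse{\prgb}$ in polynomial time, and spend one $\ccNP$-oracle call on the $\potra$-minimality of $\trib$. The genuine gap is in the hardness half, and it is exactly the step you defer: bridging the upper-component (Winslett-style) correspondence with the query $\prgb$, which, being a positive fact, constrains \emph{lower} components. You correctly call this ``the heart of the reduction'' but do not prove it, and the remedy you sketch --- augmenting $\prgu$ so that all of its SE-models are total --- is incompatible with the theorem's requirement that $\prgu$ be non-disjunctive. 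The natural totality-enforcing rules $\atm ; \lpnot \atm.$ are disjunctive, and no non-disjunctive program can do the job: if a non-disjunctive program has classical models $\twid' \subsetneq \twid$, then its reduct w.r.t.\ $\twid$ is a Horn program whose least model is contained in $\twid'$ (every reduct rule fired in the bottom-up computation has its negative body disjoint from $\twid$, hence from $\twid'$, so $\twid'$, being a classical model, must contain each derived atom), which yields a non-total SE-model with second component $\twid$. Since the hard instances of $\frmu$ do have nested models (for the construction of \cite{Eiter1992}, every positive QBF instance produces them), your plan cannot be completed as stated; at best, adding the disjunctive rules $\atm ; \lpnot \atm.$ gives hardness with a \emph{disjunctive} $\prgu$, which proves $\Pi^\ccP_2$-completeness but not the stronger syntactic claim of the theorem.

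The paper closes this gap quite differently, with no totality requirement on $\modse{\prgu}$ at all. Lemma~\ref{lemma:positive facts remain two-valued} shows that if $\prga$ is a set of facts and $\atmsof{\prgu} \subseteq \atmsof{\prga}$, then every SE-model of $\prga \uopr \prgu$ is automatically two-valued on each atom $\atma$ with $(\atma.) \in \prga$: were a $\potra$-minimal $\trib$ to leave $\atma$ undefined, then $\trib^*$ --- which belongs to $\modse{\prgu}$ by \emph{well-definedness} --- would satisfy $\trib^* \spotra \trib$ through the \emph{second} tier of $\oasa$ (identical second components, strictly smaller first-component difference, since $\atma$ is true in both components of the relevant $\tria$), contradicting minimality. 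This is the rigorous form of the ``penalises dropping $\atmb$'' intuition you gesture at, and combined with the diagonal correspondence of Lemma~\ref{lemma:oasa vs oasw} it gives Proposition~\ref{prop:oasa vs oasw query answering}, from which the reduction follows. Note finally that this machinery needs $\atmsof{\prgu} \subseteq \atmsof{\prga}$ and a polynomial-size clausal encoding of $\frmu$; neither is guaranteed by the black-box statement of Theorem~\ref{thm:oasw complexity general}, which is precisely why the paper reproduces the explicit construction of \cite{Eiter1992} (where $\frmu$ mentions only atoms occurring in $\frma$) instead of invoking that theorem abstractly, as your reduction does.
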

\begin{proof}
	See \ref{app:proofs}, page~\pageref{proof:oasa complexity general}.
\end{proof}
\begin{theorem}
	[Computational complexity of definite rule updates characterised by $\oasa$]
	\label{thm:oasa complexity definite}
	Let $\uopr$ be a rule update operator characterised by $\oasa$. Deciding
	whether $\prga \uopr \prgu \entSE \prgb$ for definite programs $\prga$,
	$\prgu$, $\prgb$ is $\cccoNP$-complete. Hardness holds even if $\prga$ is a
	set of facts and $\prgb$ contains a single fact from $\prga$.
\end{theorem}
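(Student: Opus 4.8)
The plan is to establish membership in $\cccoNP$ and $\cccoNP$-hardness separately, following the pattern of Theorems~\ref{thm:oasw complexity general}--\ref{thm:oasw complexity Horn} for Winslett's operator but exploiting the special shape of \SE-models of definite programs to push the complexity down one level. The structural observation driving the drop is that for a definite program $\prgu$ the reduct $\prgu^\twib$ equals $\prgu$ for every $\twib$ (no rule is ever deleted, since bodies and heads contain only atoms), so $\tpl{\twic,\twid} \in \modse{\prgu}$ holds exactly when $\twic \subseteq \twid$ and both $\twic$ and $\twid$ are classical models of $\prgu$. In particular, membership in each of $\modse{\prga}$, $\modse{\prgu}$ and $\modse{\prgb}$ is decidable in polynomial time. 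To place the complement problem — whether $\prga \uopr \prgu \nentSE \prgb$ — in $\ccNP$, I would guess $\tria = \twiab \in \modse{\prga}$ and $\trib = \tpl{\twic,\twid} \in \modse{\prgu}$ with $\trib \notin \modse{\prgb}$, and then verify $\trib \in \min(\modse{\prgu},\potra)$; every step except the minimality test is immediately polynomial by the structural observation, so the crux is to show that minimality is polynomially verifiable.

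To verify minimality I would unfold $\potra$ into its two lexicographic layers. Because symmetric difference with a fixed set is an involution, $(\twid' \div \twib) = (\twid \div \twib)$ forces $\twid' = \twid$; hence $\trib$ fails to be minimal exactly when one of two independent conditions holds: (a) some model $\twid'$ of $\prgu$ satisfies $(\twid' \div \twib) \subsetneq (\twid \div \twib)$, or (b) $\twid$ already has $\subseteq$-minimal difference to $\twib$ and some model $\twic' \subseteq \twid$ of $\prgu$ satisfies $(\twic' \div \twia)\setminus\Delta \subsetneq (\twic \div \twia)\setminus\Delta$, where $\Delta = \twid \div \twib$. Each condition is a ``can we strictly improve agreement on one coordinate'' question, and for definite (hence Horn) programs it decomposes into polynomially many satisfiability checks: for each atom that might be corrected, fix the other coordinates to the required truth values (and, in case (b), additionally impose $\twic' \subseteq \twid$ as unit constraints), and test satisfiability of the definite program under those unit assignments, which is polynomial. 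Thus minimality, and with it the whole guess-and-check, lies in $\ccNP$, giving membership of the original problem in $\cccoNP$.

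For hardness I would reduce from the Horn instance of Winslett's update, which is already $\cccoNP$-hard when $\frma$ is a conjunction of objective literals and $\frmb$ is one of its literals (Theorem~\ref{thm:oasw complexity Horn}), mirroring the reduction used for the general case in Theorem~\ref{thm:oasa complexity general} but starting from this Horn-restricted source. The guiding idea is that the second-component comparison of $\oasa$ is exactly Winslett's symmetric-difference minimisation, so I would translate $\frma,\frmu,\frmb$ into definite programs $\prga,\prgu,\prgb$ in such a way that the surviving minima under $\potra$ are ``total'' \SE-interpretations $\twibb$, on which the first-component tie-breaking layer is inert and on which \SE-entailment of the single fact in $\prgb$ coincides with classical entailment of the literal $\frmb$. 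Composing the translation with the resulting equivalence $\prga \uopr \prgu \entSE \prgb \iff \frma \uopb \frmu \ent \frmb$ transfers $\cccoNP$-hardness, with $\prga$ a set of positive facts and $\prgb$ a single such fact, as claimed.

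The main obstacle I anticipate is the hardness reduction: definite programs cannot directly express classical constraints or negated literals, and the \SE-semantics replaces single interpretations by $\subseteq$-ordered pairs compared lexicographically, so the delicate point is to design the translation so that the two-component minima under $\potra$ collapse precisely onto the total interpretations corresponding to Winslett's minimal models, and so that the second (tie-breaking) layer never spuriously deletes or introduces a minimum that would change the entailment status of $\prgb$. A secondary technical point, on the membership side, is to confirm that the ``ignore $\Delta$'' restriction and the upper bound $\twic' \subseteq \twid$ in case (b) keep the associated satisfiability subproblems inside the Horn fragment, so that they indeed remain polynomial.
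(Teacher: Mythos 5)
Your overall strategy coincides with the paper's proof: for membership, a guess-and-check algorithm for the complement problem whose minimality test unfolds the two lexicographic layers of $\potra$ into polynomially many Horn satisfiability checks (your structural observation that $\prgu^\twib = \prgu$ for definite $\prgu$ is exactly the paper's Lemma~\ref{lemma:se models of definite program}); for hardness, a reduction from the Horn case of Winslett's update (Theorem~\ref{thm:oasw complexity Horn}) through a translation of $\frma$, $\frmu$, $\frmb$ into programs together with the equivalence that $\prga \uopr \prgu \entSE \prgb$ holds if and only if $\tofrm{\prga} \uopb \tofrm{\prgu} \ent \tofrm{\prgb}$ --- which is precisely what the paper isolates beforehand as Proposition~\ref{prop:oasa vs oasw query answering}, with your ``minima are total, so the tie-breaking layer is inert'' observation appearing there as Lemma~\ref{lemma:positive facts remain two-valued}. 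Your membership half is sound: imposing $\twic' \subseteq \twid$ via negative unit constraints is a legitimate variant of the paper's check (the paper omits these constraints and instead relies, implicitly, on the closure of Horn models under intersection), and the only omission is the appeal to Corollary~\ref{cor:irrelevant atoms} needed to keep the guessed \SE-interpretations polynomial in the size of the input.

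The genuine problem is in your hardness conclusion: you claim the reduction produces $\prga$ as ``a set of positive facts'', but that is impossible from the source you start from. Theorem~\ref{thm:oasw complexity Horn} gives hardness only for $\frma$ a conjunction of \emph{objective literals}, which may contain negative literals $\lnot \atm$; any set of positive facts has the all-true interpretation among its classical models, so its formula can never be classically equivalent to such an $\frma$, and no clever design of the translation or of the tie-breaking behaviour can circumvent this. (This is also why you correctly sensed an obstacle around ``negated literals'', but your proposed resolution does not address it.) The way out --- and the paper's actual choice --- is to let $\prga$ contain negative facts $\lpnot \atm.$, so that $\tofrm{\prga} \lequiv \frma$ holds literally; note that the theorem's hardness clause claims only ``a set of facts'', not positive facts (in contrast to Theorem~\ref{thm:oasa complexity general}), that Proposition~\ref{prop:oasa vs oasw query answering} and Lemma~\ref{lemma:positive facts remain two-valued} are stated and proved for arbitrary sets of facts including negative ones, and that the membership algorithm needs definiteness only of $\prgu$ for the minimality test. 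With that correction, your argument matches the paper's.
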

\begin{proof}
	See \ref{app:proofs}, page~\pageref{proof:oasa complexity definite}.
\end{proof}

\section{Support in Semantic Rule Updates}

\label{sect:comparison}

In this section we take a closer look at the behaviour of semantic rule
update operators.

One of the benefits of dealing with rule updates on the semantic level is
that semantic properties that are rather difficult to show for syntax-based
update operators are much easier to analyse and prove. For example, one of the
most widespread and counterintuitive side effects of syntactic updates is that
they are sensitive to tautological updates. In case of semantic update
operators, such a behaviour is easily shown to be impossible given that the
operator satisfies \puse{2}.

However, semantic update operators do not always behave the way we expect.
Consider first an example using some update operator $\uopr$
characterised by the order assignment $\oasa$ defined in the previous
section:\footnote{%
	It has been shown that Winslett's update semantics has some drawbacks, just
	as other update operators previously proposed in the context of Classical
	Logic do (see \cite{Herzig1999} for a survey). Nevertheless, we decided to
	choose Winslett's update operator as the basis to define a rule update
	operator and illustrate its properties because it is one of the most
	extensively studied and understood update operators, and because the
	undesired behaviour illustrated in this example is shared by all update
	operators based on KM postulates and \littleSE-models -- as we shall see --
	and not a specific problem due to our choice of Winslett's operator.
}

\begin{example}
	\label{ex:support and fact update vs PU4}
	Let the programs $\prga$, $\prgb$ and $\prgu$ be as follows:
	\begin{align*}
		\prga:\quad \atma&.
		&
		\prgb:\quad \atma &\lpif \atmb.
		&
		\prgu:\quad \lpnot \atmb. \\
		\atmb&.
		&
		\atmb&.
	\end{align*}
	It can be easily verified that:
	\begin{align*}
		\Modse{\prga \uopr \prgu}
		=
		\Modse{\prgb \uopr \prgu}
		=
		\Set{ \tpl{\atma, \atma} }
		\enspace.
	\end{align*}
	Hence, both $\prga \uopr \prgu$ and $\prgb \uopr \prgu$ have the single
	answer set $\twib = \set{\atma}$. In case of $\prga \uopr \prgu$ this is
	indeed the expected result. But in case of $\prgb \uopr \prgu$ we can see
	that $\atma$ is true in $\twib$ even though there is no rule in $\prgb \cup
	\prgu$ justifying it, i.e.\ there is no rule with $\atma$ in its head and
	its body satisfied in $\twib$. This means that the behaviour of $\uopr$ is
	in discord with intuitions underlying most Logic Programming semantics.
\end{example}

In the following we show that such counterintuitive behaviour is not specific
to $\uopr$, but extends to all semantic update operators for answer-set
programs based on the well-established notions of \SE-models and KM
postulates. This is especially interesting from the point of view of
comparison with syntax-based approaches to rule updates that, as we formally
pinpoint in what follows, do not suffer from such drawbacks.

The property of \emph{support} \cite{Apt1988,Dix1995a} is one of the basic
conditions that Logic Programming semantics are intuitively designed to
satisfy. In the static case, this property can be formulated as follows:

\begin{definition}
	[Static support]
	Let $\prg$ be a program, $\atm$ an atom and $\twib$ an interpretation. We
	say that \emph{$\prg$ supports $\atm$ in $\twib$} if there is some rule $\rl
	\in \prg$ such that $\atm \in \hrl$ and $\twib \ent \brl$.

	A Logic Programming semantics \S{} is \emph{supported} if for each model
	$\twib$ of a program $\prg$ under \S{} the following condition is
	satisfied: Every atom $\atm \in \twib$ is supported by $\prg$ in $\twib$.
\end{definition}

A supported semantics thus requires all atoms in an assigned model to be in
the head of some rule with a satisfied body, ensuring that no atom is true
without at least \emph{some} justification. Note that the widely accepted
Logic Programming semantics, such as the answer-set and well-founded
semantics, are supported (see \cite{Dix1995,Dix1995a} for more on properties
of Logic Programming semantics).

It is only natural to require that rule update operators do not neglect this
essential property which also gives rise to much of the intuitive appeal of
Logic Programming systems. As it turns out, it is not difficult to verify that
despite the substantial differences between various syntax-based approaches to
rule updates and revision, all of the semantics introduced in
\cite{Leite1997,Alferes2000,Eiter2002,Sakama2003,Alferes2005,Zhang2006,Delgrande2007,Delgrande2010}
respect support in the following sense:

\begin{definition}
	[Dynamic support]
	\label{def:dynamic support}
	We say that a rule update operator $\uopr$ \emph{respects support} if the
	following condition is satisfied for all programs $\prga$, $\prgu$ and all
	answer sets $\twib$ of $\prga \uopr \prgu$: Every atom $\atm \in \twib$ is
	supported by $\prga \cup \prgu$ in $\twib$.
\end{definition}

So an update operator respects support if it returns only programs whose
answer sets are supported by rules from either the original program or from
its update. Similarly as in the case of static support, this amounts to the
requirement that an atom may be true only if at least \emph{some}
justification can be found for it. 

Another basic expectation from an update operator is the usual intuition
regarding how facts should be updated by newer facts. It enforces the
principle of literal inertia, but only for the case when both the initial
program and its update are consistent sets of facts. Similarly as with
support, a variety of different syntax-based approaches to rule updates and
revision, in particular the semantics introduced in
\cite{Leite1997,Alferes2000,Eiter2002,Sakama2003,Alferes2005,Zhang2006,Delgrande2007,Delgrande2010},
satisfy fact update in the following sense:

\begin{definition}
	[Fact update]
	\label{def:fact update}
	We say that a rule update operator $\uopr$ \emph{respects fact update} if
	for all consistent sets of facts $\prga$, $\prgu$, the unique answer set of
	$\prga \uopr \prgu$ is the interpretation
	\[
		\Set{
			\atm
			|
			(\atm.) \in \prga \cup \prgu
				\land (\lpnot \atm.) \notin \prgu
		} \enspace.
	\]
\end{definition}

Thus, a rule update operator respects fact update if it is well-behaved
w.r.t.\ consistent sets of facts: it provides the answer set that contains
exactly those atoms that are asserted as true in either the original program
or its update, and are not asserted as false in the update. This behaviour is
widely accepted -- it stems from the intuitions regarding database updates and
is uncontroversial in both the belief change and rule change communities.

We conjecture that any reasonable update operator for answer-set programs
should satisfy support and fact update since these two properties place basic
constraints on its behaviour and are based on fundamental and widely accepted
intuitions. They are by no means exhaustive or sufficient -- it is not
difficult to define rule update operators that satisfy both of them but are
sensitive to tautological updates or quickly end up in an inconsistent state
without a possibility of recovery -- but they both seem necessary, even
elementary, properties of a well-behaved rule update operator. However, it
turns out that every rule update operator based on \SE-models, even if it
satisfies only the basic postulate that enforces syntax independence, fails to
comply with at least one of these two basic expectations.

\begin{theorem}
	\label{thm:impossibility}
	A rule update operator that satisfies \puse{4} either does not respect
	support or it does not respect fact update.
\end{theorem}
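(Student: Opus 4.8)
The plan is to prove this impossibility result by exhibiting a concrete counterexample: a single pair of programs on which no $\uopr$ satisfying \puse{4} can simultaneously respect support and fact update. The key observation that makes \puse{4} so powerful is that it forces the result of an update to depend only on the \emph{sets of \SE-models} of the inputs, not on their syntactic form. So if I can find two programs $\prga$ and $\prgb$ that are strongly equivalent (i.e.\ $\prga \eqSE \prgb$) but for which fact update demands one answer set while support forbids that same answer set, I am done. The natural candidates are drawn directly from Example~\ref{ex:support and fact update vs PU4}: take $\prga$ to be the facts $\set{\atma.,\ \atmb.}$ and $\prgb$ to be $\set{\atma \lpif \atmb.,\ \atmb.}$, updated by $\prgu = \set{\lpnot \atmb.}$.

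First I would verify that $\prga \eqSE \prgb$, that is, $\modse{\prga} = \modse{\prgb}$. Both programs force $\atmb$ true (hence $\atma$ true in $\prgb$ via the rule, and $\atma$ true in $\prga$ directly), so their sets of \SE-models coincide; this is the crucial syntactic-independence hook. By \puse{4}, since $\prga \eqSE \prgb$ and trivially $\prgu \eqSE \prgu$, we conclude $\prga \uopr \prgu \eqSE \prgb \uopr \prgu$, and therefore these two updated programs have exactly the same \SE-models and hence the same answer sets. Now I invoke the two properties in tension. Because $\prga$ and $\prgu$ are consistent sets of facts, \emph{fact update} pins down the unique answer set of $\prga \uopr \prgu$ to be $\set{\atma}$ (the atom $\atma$ is asserted in $\prga$ and not denied in $\prgu$, while $\atmb$ is denied by $\prgu$). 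By the equivalence just derived, $\set{\atma}$ must therefore also be the answer set of $\prgb \uopr \prgu$.

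Next I would apply the definition of \emph{respecting support} to $\prgb \uopr \prgu$ with the answer set $\twib = \set{\atma}$. Support requires that the atom $\atma \in \twib$ be supported by $\prgb \cup \prgu$ in $\twib$, i.e.\ that some rule in $\prgb \cup \prgu$ has $\atma$ in its head and a body satisfied by $\twib$. But $\prgb \cup \prgu = \set{\atma \lpif \atmb.,\ \atmb.,\ \lpnot \atmb.}$, and the only rule with $\atma$ in its head is $\atma \lpif \atmb.$, whose body $\atmb$ is \emph{not} satisfied in $\twib = \set{\atma}$ since $\atmb \notin \twib$. Hence $\atma$ is unsupported, and $\uopr$ violates support on $\prgb \uopr \prgu$. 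The only escape is if $\prgb \uopr \prgu$ fails to have $\set{\atma}$ as an answer set at all --- but that would contradict either the equivalence forced by \puse{4} or the fact-update requirement on $\prga \uopr \prgu$. Thus any $\uopr$ satisfying \puse{4} must fail support or fail fact update.

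The only subtlety --- and the step I would treat most carefully --- is making the disjunction airtight rather than assuming the operator satisfies both properties and deriving a single contradiction. Concretely, I would argue: suppose for contradiction that $\uopr$ satisfies \puse{4} and respects \emph{both} support and fact update. Fact update applied to the consistent fact sets $\prga,\prgu$ forces $\modas{\prga \uopr \prgu} = \set{\set{\atma}}$; \puse{4} with $\prga \eqSE \prgb$ then transfers this to $\prgb \uopr \prgu$; and support applied to $\prgb \uopr \prgu$ is violated at $\atma$, the contradiction. I expect no genuine obstacle here --- the argument is short --- but care is needed to confirm that $\prgb$ really is a legal input (it is a non-disjunctive program, though not a set of facts, so fact update is \emph{not} directly invoked on $\prgb$, only support is) and to confirm the $\eqSE$ claim at the level of \SE-models rather than merely classical models, since support and the answer-set extraction both live at the \SE-level.
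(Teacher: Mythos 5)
Your proposal is correct and follows essentially the same argument as the paper's own proof: it uses exactly the programs $\prga$, $\prgb$, $\prgu$ from Example~\ref{ex:support and fact update vs PU4}, applies \puse{4} to transfer the fact-update-forced answer set $\set{\atma}$ from $\prga \uopr \prgu$ to $\prgb \uopr \prgu$, and observes that $\atma$ is then unsupported by $\prgb \cup \prgu$. The extra care you take (verifying $\prga \eqSE \prgb$ at the \SE-model level and framing the conclusion as a disjunction rather than a contradiction of both properties) is sound but does not change the substance of the argument.
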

\begin{proof}
	Let $\uopr$ be a rule update operator that satisfies \puse{4} and
	consider again the programs $\prga$, $\prgb$ and $\prgu$ from
	Example~\ref{ex:support and fact update vs PU4}. Since $\prga$ is strongly
	equivalent to $\prgb$, by \puse{4} we obtain that $\prga \uopr \prgu$ is
	strongly equivalent to $\prgb \uopr \prgu$. Consequently, $\prga \uopr
	\prgu$ has the same answer sets as $\prgb \uopr \prgu$. It only remains to
	observe that if $\uopr$ respects fact update, then $\prga \uopr \prgu$ has
	the unique answer set $\set{\atm}$. But then $\set{\atm}$ is an answer set
	of $\prgb \uopr \prgu$ in which $\atm$ is unsupported by $\prgb \cup \prgu$.
	Hence $\uopr$ does not respect support.
\end{proof}

So any answer-set program update operator based on \SE-models and the KM
approach to belief update, as materialised in the fundamental principle
\puse{4}, cannot respect two basic and desirable properties: support and fact
update. We believe that this is a major drawback of such operators, severely
diminishing their applicability.

Moreover, the principle \puse{4} is also adopted for \emph{revision} of
answer-set programs based on \SE-models in \cite{Delgrande2008}.\footnote{%
	Note that the belief update postulate \smallbu{4}, from which \smallpuse{4}
	originates, is also one of the reformulated AGM postulates for belief
	\emph{revision} \cite{Katsuno1992}. The original AGM framework
	\cite{Alchourron1985} assumes that the initial knowledge base $\kb$ is
	closed w.r.t.\ logical consequence and the first AGM postulate requires that
	the result of revision also be a closed set. Under these assumptions,
	different knowledge bases cannot be equivalent and, as a consequence, the
	original AGM postulate corresponding to \smallbu{4} is
	\pstl[\scriptsize]{$\uoprev$5}: If $\cn(\frmu) = \cn(\frmv)$, then $\kb
	\uoprev \frmu = \kb \uoprev \frmv$ (where $\cn$ is the logical consequence
	operator and $\uoprev$ the revision operator.)
}
This means that Theorem~\ref{thm:impossibility} extends to semantic program
revision operators, such as those defined in \cite{Delgrande2008}: Whenever
support and fact update are expected to be satisfied by a rule revision
operator, it cannot be defined by purely manipulating the sets of \SE-models
of the underlying programs.



One question that suggests itself is whether a weaker version of the principle
\puse{4} can be combined with properties such as support and fact update. Its
two immediate weakenings, analogous to the weakenings of \bu{4} in
\cite{Herzig1999}, are as follows:
\begin{enumerate}
	\renewcommand{\theenumi}{\labpuse{P4.\arabic{enumi}}}
	\renewcommand{\labelenumi}{\theenumi\hfill}
	\setlength{\itemsep}{.5ex}

	\item If $\prga \eqSE \prgb$, then $\prga \uopr \prgu \eqSE \prgb \uopr
		\prgu$.
		\label{pstl:puse:4.1}

	\item If $\prgu \eqSE \prgv$, then $\prga \uopr \prgu \eqSE \prga \uopr
		\prgv$.
		\label{pstl:puse:4.2}
\end{enumerate}
In case of \puse{4.1}, it is easy to see that the proof of
Theorem~\ref{thm:impossibility} applies in the same way as with \puse{4}, so
\puse{4.1} is likewise incompatible with support and fact update.

On the other hand, principle \puse{4.2}, also referred to as Weak Independence
of Syntax (WIS) \cite{Osorio2007}, does not suffer from such severe
limitations. It is, nevertheless, violated by syntax-based rule update
semantics that assign a special meaning to occurrences of default literals in
heads of rules, as illustrated in the following example:
\begin{example}
	Let the programs $\prga$, $\prgu$ and $\prgv$ be as follows:
	\begin{align*}
		\prga:\quad \atma&.
		&
		\prgu:\quad \lpnot \atma &\lpif \atmb.
		&
		\prgv:\quad \lpnot \atmb &\lpif \atma. \\
		\atmb&.
	\end{align*}
	Since $\prgu$ is strongly equivalent to $\prgv$, \puse{4.2} requires that
	$\prga \uopr \prgu$ be strongly equivalent to $\prga \uopr \prgv$. This is
	in contrast with the rule update semantics of
	\cite{Leite1997,Alferes2000,Alferes2005} where a default literal $\lpnot
	\atma$ in the head of a rule indicates that whenever the body of the rule is
	satisfied, there is a \emph{reason for $\atma$ to cease being true}. A
	consequence of this is that an update of $\prga$ by $\prgu$ results in the
	single answer set $\set{\atmb}$ while an update by $\prgv$ leads to the
	single answer set $\set{\atma}$.
\end{example}
Thus, when considering the principle \puse{4.2}, benefits of the
declarativeness that it brings with it need to be weighed against the loss of
control over the results of updates by rules with default literals in their
heads.

The problems we identified might be mitigated if a richer semantic
characterisation of logic programs was used instead of \SE-models. Such a
characterisation would have to be able to distinguish between programs such as
$\prga = \set{\atma., \atmb.}$ and $\prgb = \set{\atma \lpif \atmb., \atmb.}$
because they are expected to behave differently when subject to evolution.

Another alternative is to use one of the syntactic approaches to rule
updates, e.g.\ \cite{Alferes2005}, that have matured over the years.

\section{Conclusion}

\label{sect:conclusion}

In this paper we revisited the problem of updates of answer-set programs, in
an attempt to change the focus from the syntactic representation of a program
to its semantic content and to facilitate the analysis of semantic properties
of defined update operators. We did so by applying the established approach to
updates following Katsuno and Mendelzon's postulates in the context of logic
programs. Whereas until recently this was not possible since these postulates
were simply not applicable (nor adaptable) when considering non-monotonic
Logic Programming semantics, as shown in \cite{Eiter2002}, the introduction of
\emph{\SE-models} \cite{Turner2003}, which provide a monotonic
characterisation of logic programs that is strictly more expressive than the
answer-set semantics, provided a new opportunity to cast KM postulates into
Logic Programming.

We adapted the KM postulates to be used for answer-set program updates and
showed a representation theorem which provides a constructive characterisation
of rule update operators satisfying the postulates. This characterisation
not only facilitates the investigation of these operators' properties, both
semantic as well as computational, but it also provides an intuitive strategy
for constructively defining these operators. This is one of the major
contributions of the paper since it brings, for the first time, updates of
answer-set programs in line with KM postulates. We illustrated this result
with a definition of a specific rule update operator which is a counterpart
of Winslett's belief update operator.

The second important contribution of this paper is the uncovering of a serious
drawback that extends to all answer-set program update operators based on
\SE-models and AGM-style approach to program revision and update. All such
operators violate at least one of two basic and very desirable properties. The
first one consists of respecting \emph{support}, a property that is enjoyed,
in the static case, by all widely accepted Logic Programming semantics. The
second property, \emph{fact update}, is concerned with the answer set assigned
to a consistent set of facts after it is updated by another consistent set of
facts. This contribution is very important as it should guide further research
on updates of answer-set programs
\begin{enumerate}[a)]
	\item away from the purely semantic approach materialised in AGM and KM
		postulates, or

	\item to the development of semantic characterisations of answer-set
		programs that are richer than \SE-models and appropriately capture their
		dynamic behaviour, such as in \cite{Slota2012}, or even

	\item turning back to the more syntactic approaches, such as
		\cite{Alferes2005}, and see whether they indeed offer a viable
		alternative.
\end{enumerate}
Either way, updating answer-set programs is a very important theoretical and
practical problem that is still waiting for a definite solution. Also, despite
the issues with the syntax independence postulate \puse{4}, other principles
based on \SE-models play an important role with regards to the classification
and evaluation of different approaches to rule change. For instance, the
reformulations of rule change principles from \cite{Eiter2002} in terms of
strong equivalence, considered already in \cite{Delgrande2008}, can be
formulated as follows:
\begin{enumerate}
	\renewcommand{\theenumi}{\labpuse{Initialisation}}
	\renewcommand{\labelenumi}{\hfill\theenumi}
	\setlength{\itemsep}{.5ex}
	\setlength{\labelwidth}{7em}
	\setlength{\itemindent}{5em}

	\item $\emptyset \uopr \prgu \eqSE \prgu$.
		\label{pstl:puse:initialisation}

	\renewcommand{\theenumi}{\labpuse{Idempotence}}
	\item $\prga \uopr \prga \eqSE \prga$.
		\label{pstl:puse:idempotence}

	\renewcommand{\theenumi}{\labpuse{Tautology}}
	\item If $\prgu \eqSE \emptyset$, then $\prga \uopr \prgu \eqSE \prga$.
		\label{pstl:puse:tautology}

	\renewcommand{\theenumi}{\labpuse{Absorption}}
	\item If $\prgu \eqSE \prgv$, then $(\prga \uopr \prgu) \uopr \prgv \eqSE
		\prga \uopr \prgv$.
		\label{pstl:puse:absorption}

	\renewcommand{\theenumi}{\labpuse{Augmentation}}
	\item If $\prgv \entSE \prgu$, then $(\prga \uopr \prgu) \uopr \prgv \eqSE
		\prga \uopr \prgv$.
		\label{pstl:puse:augmentation}
\end{enumerate}
We believe that all of these properties are indeed desirable and strengthen
their original formulations in an interesting way. Investigation of operators
with these properties, as well as a further analysis of the postulates
\puse{1} -- \puse{8}, remains an important research topic. This paper
contains, we believe, a relevant contribution to a better understanding of
rule change that will help guide future research.


\section*{Acknowledgement}

We would like to thank the anonymous reviewers for their valuable comments.
M.\ Slota was supported by FCT scholarship SFRH/BD/38214/2007. J.\ Leite was
partially supported by FCT funded project ERRO (PTDC/EIA-CCO/121823/2010).

\appendix

\section{Proofs: Representation Theorem}

\label{app:proofs}

\begin{definition}
	[Program corresponding to a set of \SE-interpretations]
	Let $\stri$ be a set of \SE-interpretations. We denote by $\synt{\stri}$
	some arbitrary but fixed program $\prg$ such that
	\[
		\modse{\prg}
		=
		\Set{\tria, \tria^* | \tria \in \stri}
		\enspace.
	\]
	Instead of $\synt{\set{\tria_1, \tria_2, \dotsc, \tria_\lng}}$ we usually
	write $\synt{\tria_1, \tria_2, \dotsc, \tria_\lng}$.
\end{definition}

\begin{definition}
	[Order assignment generated by an update operator]
	\label{def:preorder assignment generated by operator}
	Let $\uopr$ be a rule update operator and $\tria$ an \SE-interpretation.
	We define the binary relation $\prec^\tria_{\uopr}$ for all
	\SE-interpretations $\trib$, $\tric$ as follows: $\trib \prec^\tria_{\uopr}
	\tric$ if and only if the following conditions are satisfied:
	\begin{align}
		& \trib \in \modse{\synt{\tria} \uopr \synt{\trib, \tric}}
			\label{eq:preorder assignment generated by operator:1} \\
		& \tric \notin \modse{\synt{\tria} \uopr \synt{\trib, \tric}}
			\label{eq:preorder assignment generated by operator:2} \\
		& \text{If } \trib \neq \trib^* \text{, then }
			\tric \in \modse{\synt{\tria} \uopr \synt{\trib^*, \tric}}
			\label{eq:preorder assignment generated by operator:3}
	\end{align}
	The \emph{preorder assignment generated by $\uopr$} assigns to every
	\SE-interpretation $\tria$ the reflexive and transitive closure $\potro$ of
	$\prec^\tria_{\uopr}$, i.e.\ $\trib \potro \tric$ if and only if $\trib =
	\tric$ or there is some $\lng \geq 2$ and \SE-interpretations $\trib_1,
	\trib_2, \dotsc, \trib_\lng$ such that $\trib = \trib_1 \prec^\tria_{\uopr}
	\trib_2 \prec^\tria_{\uopr} \dotsb \prec^\tria_{\uopr} \trib_\lng = \tric$.
\end{definition}

\begin{lemma}
	\label{lemma:if less then not model}
	Let $\uopr$ be a rule update operator satisfying conditions \puse{1} --
	\puse{8} and $\tria$, $\trib$, $\tric$ some \SE-interpretations. If $\trib
	\potro \tric$, then either $\trib = \tric$ or $\tric \notin
	\modse{\synt{\tria} \uopr \synt{\trib, \tric}}$.
\end{lemma}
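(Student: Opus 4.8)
The plan is to expose the reflexive and transitive closure directly and induct on the length of the witnessing chain. If $\trib \potro \tric$ with $\trib \neq \tric$, then by Definition~\ref{def:preorder assignment generated by operator} there is a chain $\trib = \trib_1 \prec^{\tria}_{\uopr} \trib_2 \prec^{\tria}_{\uopr} \dotsb \prec^{\tria}_{\uopr} \trib_n = \tric$ with $n \geq 2$, and the goal is $\tric \notin \modse{\synt{\tria} \uopr \synt{\trib_1, \tric}}$. The base case $n = 2$ is nothing but the defining condition \eqref{eq:preorder assignment generated by operator:2} of a single link. The real work is the inductive step, in which I \emph{prepend} the first link $\trib_1 \prec^{\tria}_{\uopr} \trib_2$ to the shorter chain $\trib_2 \prec^{\tria}_{\uopr} \dotsb \prec^{\tria}_{\uopr} \tric$, whose induction hypothesis supplies the fact $\tric \notin \modse{\synt{\tria} \uopr \synt{\trib_2, \tric}}$.

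The device for the inductive step is the merged program $\synt{\trib_1, \trib_2, \tric}$, whose \SE-models form the well-defined closure of $\{\trib_1, \trib_2, \tric\}$. Write $B = \synt{\tria}$ (a basic program, so that \puse{7} is available) and $P = B \uopr \synt{\trib_1, \trib_2, \tric}$. Applying the projection postulate \puse{5} with the two-element programs $\synt{\trib_1, \trib_2}$ and $\synt{\trib_2, \tric}$ as the conjoined second argument, and using that their \SE-models are respectively the intersections of $\modse{\synt{\trib_1, \trib_2, \tric}}$ with $\{\trib_1, \trib_1^*, \trib_2, \trib_2^*\}$ and with $\{\trib_2, \trib_2^*, \tric, \tric^*\}$, I obtain $\modse{P} \cap \{\trib_2, \trib_2^*, \tric, \tric^*\} \subseteq \modse{B \uopr \synt{\trib_2, \tric}}$ and likewise for the first link. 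Combining these with condition \eqref{eq:preorder assignment generated by operator:2} for the direct link $\trib_1 \prec^{\tria}_{\uopr} \trib_2$ (giving $\trib_2 \notin \modse{B \uopr \synt{\trib_1, \trib_2}}$) and with the induction hypothesis ($\tric \notin \modse{B \uopr \synt{\trib_2, \tric}}$) yields $\trib_2 \notin \modse{P}$ and $\tric \notin \modse{P}$. I then transfer non-membership from the merged update back to the two-element one via the coherence postulate \puse{6} applied to $\prgu = \synt{\trib_1, \tric}$ and $\prgv = \synt{\trib_1, \trib_2, \tric}$: one of its two entailment premises holds by \puse{1}, and once the other is secured, \puse{6} forces $\modse{B \uopr \synt{\trib_1, \tric}} = \modse{P}$, so that $\tric \notin \modse{P}$ gives exactly $\tric \notin \modse{B \uopr \synt{\trib_1, \tric}}$, as required.

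The main obstacle is the second premise of \puse{6}, i.e.\ $\modse{P} \subseteq \{\trib_1, \trib_1^*, \tric, \tric^*\}$, which requires excluding the intermediate interpretation $\trib_2$ \emph{together with} its starred counterpart $\trib_2^*$. The unstarred $\trib_2$ is removed exactly as above, but $\trib_2^*$ is genuinely delicate: since \SE-model sets are forced to be well-defined, $\trib_2^*$ may well lie in $\modse{B \uopr \synt{\trib_1, \trib_2}}$ even when $\trib_2$ does not, so the projection argument alone does not exclude it. This is precisely the point at which the third defining condition \eqref{eq:preorder assignment generated by operator:3} of $\prec^{\tria}_{\uopr}$ (the clause governing the replacement of $\trib_1$ by $\trib_1^*$, active when $\trib_1 \neq \trib_1^*$) must be invoked, in combination with the well-definedness of the sets $\modse{B \uopr \synt{\cdot}}$, to control the second component and rule out $\trib_2^* \in \modse{P}$. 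Handling this starred interpretation---an issue with no analogue in the classical Katsuno--Mendelzon setting, where a formula has ordinary models rather than paired \SE-interpretations---is the technical heart of the argument; once it is dispatched, the remaining steps are routine applications of \puse{1}, \puse{5} and \puse{6}.
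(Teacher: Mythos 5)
Your scaffolding coincides with the paper's proof up to the decisive step: the induction on chain length, the merged update $P = \synt{\tria} \uopr \synt{\trib_1, \trib_2, \tric}$, and the use of \puse{4}/\puse{5} to transfer the fresh-link fact and the induction hypothesis into $P$, yielding $\trib_2 \notin \modse{P}$ and $\tric \notin \modse{P}$ (the paper appends the last link rather than prepending the first, but up to this point that is immaterial). The gap is exactly at what you call the technical heart, and it cannot be filled: your plan requires proving $\trib_2^* \notin \modse{P}$, and that claim is false in general. Nothing in \puse{1}--\puse{8} forces the starred interpretation out of $\modse{P}$. Concretely, take an operator characterised by a faithful and organised partial order assignment whose order for $\tria$ (with $\tria = \tria^*$, $\trib_1 = \trib_1^*$, $\trib_2 \neq \trib_2^*$) places $\trib_1$ strictly below $\trib_2$, $\trib_2$ strictly below $\tric$, and leaves $\trib_2^*$ and $\tric^*$ incomparable to all other relevant \SE-interpretations. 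Such an operator satisfies the postulates by Proposition~\ref{prop:if ordering then postulates}, and both links $\trib_1 \prec^{\tria}_{\uopr} \trib_2 \prec^{\tria}_{\uopr} \tric$ hold, yet $\modse{P} = \set{\trib_1, \trib_2^*, \tric^*}$ whereas $\modse{\synt{\tria} \uopr \synt{\trib_1, \tric}} = \set{\trib_1, \tric^*}$. So not only is $\trib_2^*$ an \SE-model of $P$, but your intended conclusion of \puse{6}, namely $\modse{\synt{\tria} \uopr \synt{\trib_1, \tric}} = \modse{P}$, fails as well (the lemma itself of course still holds here). The tool you invoke cannot repair this: condition \eqref{eq:preorder assignment generated by operator:3} of the first link is the \emph{positive} assertion $\trib_2 \in \modse{\synt{\tria} \uopr \synt{\trib_1^*, \trib_2}}$, which constrains a different update and says nothing about membership of $\trib_2^*$ in $\modse{P}$.

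The paper resolves the case $\trib_2 \neq \trib_2^*$ without ever excluding the starred interpretation. Once $\trib_2 \notin \modse{P}$ is known, we have $\modse{P} \subseteq \modse{\synt{\trib_1, \trib_2^*, \tric}}$, so \puse{1} and \puse{6} give $P \eqSE \synt{\tria} \uopr \synt{\trib_1, \trib_2^*, \tric}$ and hence $\tric \notin \modse{\synt{\tria} \uopr \synt{\trib_1, \trib_2^*, \tric}}$; then, since $\synt{\trib_1, \trib_2^*, \tric} \eqSE \synt{\trib_1, \tric} \prgor \synt{\trib_2^*, \tric}$, the contrapositive of \puse{7} (applicable because $\synt{\tria}$ is basic) forces $\tric$ to fail as an \SE-model of $\synt{\tria} \uopr \synt{\trib_1, \tric}$ or of $\synt{\tria} \uopr \synt{\trib_2^*, \tric}$, and the second disjunct is eliminated by condition \eqref{eq:preorder assignment generated by operator:3} of the link joining the middle interpretation to the endpoint. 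Note that this repair interacts with your choice of induction direction: in the paper's append-the-last-link induction, the fresh link is precisely the one from the middle interpretation to the endpoint, so its condition \eqref{eq:preorder assignment generated by operator:3} delivers exactly the needed fact $\trib_{\lng+1} \in \modse{\synt{\tria} \uopr \synt{\trib_\lng^*, \trib_{\lng+1}}}$. In your prepended-link induction the fresh link is $\trib_1 \prec^{\tria}_{\uopr} \trib_2$, and the fact you would need, $\tric \in \modse{\synt{\tria} \uopr \synt{\trib_2^*, \tric}}$, is not a defining condition of any link of your chain unless the chain has length three. So completing the proof requires both replacing the exclusion of $\trib_2^*$ by the \puse{7}-splitting argument and reversing the orientation of your induction.
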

\begin{proof*}
	Suppose that $\trib \neq \tric$. Then, by the definition of $\potro$, for
	some $\lng \leq 2$ and \SE-interpretations $\trib_1, \trib_2, \dotsc,
	\trib_\lng$ it holds that $\trib = \trib_1 \prec^\tria_{\uopr} \trib_2
	\prec^\tria_{\uopr} \dotsb \prec^\tria_{\uopr} \trib_\lng = \tric$. We will
	prove by induction on $\lng$ that $\trib_\lng \notin \modse{\synt{\tria}
	\uopr \synt{\trib_1, \trib_\lng}}$ from which the desired result follows
	directly.

	\begin{enumerate}[1$^\circ$]
		\item For $\lng = 2$ this follows from $\trib_1 \prec^\tria_{\uopr} \trib_2$
			by \eqref{eq:preorder assignment generated by operator:2}.

		\item We inductively assume that
			\begin{equation}
				\label{eq:proof:if less then not model:1}
				\trib_\lng \notin \modse{\synt{\tria} \uopr \synt{\trib_1, \trib_\lng}}
			\end{equation}
			and prove that $\trib_{\lng+1} \notin \modse{\synt{\tria} \uopr
			\synt{\trib_1, \trib_{\lng+1}}}$.

			We know that $\trib_\lng \prec^\tria_{\uopr} \trib_{\lng+1}$, so by
			\eqref{eq:preorder assignment generated by operator:2} we obtain
			\begin{equation}
				\label{eq:proof:if less then not model:2}
				\trib_{\lng+1}
				\notin
				\modse{\synt{\tria} \uopr \synt{\trib_\lng, \trib_{\lng+1}}}
				\enspace.
			\end{equation}
			Considering that the program $\synt{\trib_1, \trib_\lng, \trib_{\lng+1}}
			\prgand \synt{\trib_1, \trib_\lng}$ is strongly equivalent to
			$\synt{\trib_1, \trib_\lng}$, by \puse{5} and \puse{4} we conclude that
			\[
				(\synt{\tria} \uopr \synt{\trib_1, \trib_\lng, \trib_{\lng+1}})
				\prgand
				\synt{\trib_1, \trib_\lng}
				\entSE
				\synt{\tria} \uopr \synt{\trib_1, \trib_\lng}
			\]
			which, together with \eqref{eq:proof:if less then not model:1}, implies
			that
			\begin{equation}
				\label{eq:proof:if less then not model:3}
				\trib_\lng
				\notin
				\modse{\synt{\tria} \uopr \synt{\trib_1, \trib_\lng, \trib_{\lng+1}}}
				\enspace.
			\end{equation}

			Similarly, since the program $\synt{\trib_1, \trib_\lng, \trib_{\lng+1}}
			\prgand \synt{\trib_\lng, \trib_{\lng+1}}$ is strongly equivalent to
			$\synt{\trib_\lng, \trib_{\lng+1}}$, by \puse{5} and \puse{4} we obtain
			that
			\[
				(\synt{\tria} \uopr \synt{\trib_1, \trib_\lng, \trib_{\lng+1}})
				\prgand
				\synt{\trib_\lng, \trib_{\lng+1}}
				\entSE
				\synt{\tria} \uopr \synt{\trib_\lng, \trib_{\lng+1}}
				\enspace,
			\]
			and so due to \eqref{eq:proof:if less then not model:2} it holds that
			\begin{equation}
				\label{eq:proof:if less then not model:4}
				\trib_{\lng+1}
				\notin
				\modse{\synt{\tria} \uopr \synt{\trib_1, \trib_\lng, \trib_{\lng+1}}}
				\enspace.
			\end{equation}

			Now we consider two cases:
			\begin{enumerate}[a)]
				\item If $\trib_\lng = \trib_\lng^*$, then \eqref{eq:proof:if less
					then not model:3} and \puse{1} imply that
					\begin{align*}
						\synt{\tria} \uopr \synt{\trib_1, \trib_\lng, \trib_{\lng+1}}
						&\entSE
						\synt{\trib_1, \trib_{\lng+1}}
						\enspace; \\
						\synt{\tria} \uopr \synt{\trib_1, \trib_{\lng+1}}
						&\entSE
						\synt{\trib_1, \trib_\lng, \trib_{\lng+1}}
						\enspace,
					\end{align*}
					so by \puse{6} we can conclude that $\synt{\tria} \uopr
					\synt{\trib_1, \trib_\lng, \trib_{\lng+1}}$ is strongly equivalent
					to $\synt{\tria} \uopr \synt{\trib_1, \trib_{\lng+1}}$. But then the
					desired conclusion follows from \eqref{eq:proof:if less then not
					model:4}.

				\item If $\trib_\lng \neq \trib_\lng^*$, then from \eqref{eq:preorder
					assignment generated by operator:3} we infer that
					\begin{equation}
						\label{eq:proof:if less then not model:5}
						\trib_{\lng+1}
						\in
						\modse{\synt{\tria} \uopr \synt{\trib_\lng^*, \trib_{\lng+1}}}
						\enspace.
					\end{equation}
					Furthermore, from \eqref{eq:proof:if less then not model:3} and
					\puse{1} we obtain
					\begin{align*}
						\synt{\tria} \uopr \synt{\trib_1, \trib_\lng, \trib_{\lng+1}}
						&\entSE
						\synt{\trib_1, \trib_\lng^*, \trib_{\lng+1}}
						\enspace; \\
						\synt{\tria} \uopr \synt{\trib_1, \trib_\lng^*, \trib_{\lng+1}}
						&\entSE
						\synt{\trib_1, \trib_\lng, \trib_{\lng+1}}
						\enspace,
					\end{align*}
					so by \puse{6} we can conclude that $\synt{\tria} \uopr
					\synt{\trib_1, \trib_\lng, \trib_{\lng+1}}$ is strongly equivalent
					to $\synt{\tria} \uopr \synt{\trib_1, \trib_\lng^*, \trib_{\lng+1}}$
					and, due to \eqref{eq:proof:if less then not model:4},
					\[
						\trib_{\lng + 1}
						\notin
						\modse{
							\synt{\tria}
							\uopr
							\synt{\trib_1, \trib_\lng^*, \trib_{\lng+1}}
						}
						\enspace.
					\]
					Since $\synt{\trib_1, \trib_\lng^*, \trib_{\lng+1}}$ is strongly
					equivalent to $\synt{\trib_1, \trib_{\lng+1}} \prgor
					\synt{\trib_\lng^*, \trib_{\lng+1}}$, it follows from \puse{4} and
					\puse{7} that either $\trib_{\lng+1} \notin \modse{\synt{\tria}
					\uopr \synt{\trib_1, \trib_{\lng+1}}}$ or $\trib_{\lng+1} \notin
					\modse{\synt{\tria} \uopr \synt{\trib_\lng^*, \trib_{\lng+1}}}$. The
					latter is impossible due to \eqref{eq:proof:if less then not
					model:5}. \qedhere
			\end{enumerate}
	\end{enumerate}
\end{proof*}

\begin{lemma}
	\label{lemma:if not model then less}
	Let $\uopr$ be a rule update operator satisfying conditions \puse{1} --
	\puse{8} and $\tria$, $\trib$, $\tric$, some \SE-interpretations. If $\trib
	\nless^\tria_{\uopr} \tric$, then the following conditions are satisfied:
	\begin{enumerate}[(1)]
		\item If $\trib = \tric^*$, then $\tric \in \modse{\synt{\tria} \uopr
			\synt{\trib, \tric}}$.

		\item If $\trib = \trib^*$ and $\tric \in \modse{\synt{\tria} \uopr
			\synt{\tric}}$, then $\tric \in \modse{\synt{\tria} \uopr \synt{\trib,
			\tric}}$.

		\item If $\trib \neq \trib^*$ and $\tric \in \modse{\synt{\tria} \uopr
			\synt{\trib^*, \tric}}$, then $\tric \in \modse{\synt{\tria} \uopr
			\synt{\trib, \tric}}$.
	\end{enumerate}
\end{lemma}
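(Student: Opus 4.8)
The plan is to prove all three items by contraposition, in a uniform way. Recall from Definition~\ref{def:preorder assignment generated by operator} that $\trib \prec^\tria_{\uopr} \tric$ holds exactly when the three conditions \eqref{eq:preorder assignment generated by operator:1}, \eqref{eq:preorder assignment generated by operator:2} and \eqref{eq:preorder assignment generated by operator:3} all hold, so the hypothesis $\trib \nless^\tria_{\uopr} \tric$ just says that at least one of them fails. Writing $M = \modse{\synt{\tria} \uopr \synt{\trib, \tric}}$, in each item I would assume the stated ``if''-hypothesis together with the negation of the conclusion, namely $\tric \notin M$, and then derive that all of \eqref{eq:preorder assignment generated by operator:1}--\eqref{eq:preorder assignment generated by operator:3} hold, i.e.\ that $\trib \prec^\tria_{\uopr} \tric$ --- contradicting $\trib \nless^\tria_{\uopr} \tric$.

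Two of the three conditions come almost for free. Condition \eqref{eq:preorder assignment generated by operator:2} is precisely the assumed $\tric \notin M$. Condition \eqref{eq:preorder assignment generated by operator:3} is vacuous in items (1) and (2), where $\trib = \trib^*$ (in item (1) because $\trib = \tric^*$ forces $\trib^* = \trib$), and it is exactly the ``if''-hypothesis $\tric \in \modse{\synt{\tria} \uopr \synt{\trib^*, \tric}}$ in item (3). Throughout I would exploit that $M$ is the set of \SE-models of a program, hence well-defined, that $M \subseteq \modse{\synt{\trib, \tric}} = \set{\trib, \trib^*, \tric, \tric^*}$ by \puse{1}, and that $M \neq \emptyset$ by \puse{3}, since both $\synt{\tria}$ and $\synt{\trib, \tric}$ are consistent.

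The crux is therefore condition \eqref{eq:preorder assignment generated by operator:1}, namely $\trib \in M$, and this is where I expect the real work. I would split on whether $\trib \in M$. If $\trib \in M$, then \eqref{eq:preorder assignment generated by operator:1} holds outright. If $\trib \notin M$, then together with $\tric \notin M$ and well-definedness $M$ is confined to the models of the ``smaller'' update program $\prgv$ --- taken to be $\synt{\tric}$ in items (1) and (2), and $\synt{\trib^*, \tric}$ in item (3) --- so that $\synt{\tria} \uopr \synt{\trib, \tric} \entSE \prgv$; the reverse entailment $\synt{\tria} \uopr \prgv \entSE \synt{\trib, \tric}$ is automatic, since by \puse{1} the models of $\synt{\tria} \uopr \prgv$ already lie inside $\modse{\prgv} \subseteq \modse{\synt{\trib, \tric}}$. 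Postulate \puse{6} then forces $\synt{\tria} \uopr \synt{\trib, \tric} \eqSE \synt{\tria} \uopr \prgv$, whence $M = \modse{\synt{\tria} \uopr \prgv}$; but the ``if''-hypothesis of items (2) and (3) asserts exactly $\tric \in \modse{\synt{\tria} \uopr \prgv}$, contradicting $\tric \notin M$. Item (1) is the degenerate case $\trib = \tric^*$: here $\synt{\trib, \tric} \eqSE \synt{\tric}$ outright, so no appeal to \puse{6} is needed, and under the supposition $\tric \notin M$, \puse{3} together with well-definedness leave only $M = \set{\tric^*} = \set{\trib}$, giving \eqref{eq:preorder assignment generated by operator:1} directly. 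I expect the main obstacle to be lining up the two entailments so that \puse{6} applies; the remainder is careful bookkeeping about which of $\trib, \trib^*, \tric, \tric^*$ can lie in the well-defined set $M$, together with routine uses of \puse{4} to pass between strongly equivalent forms of $\synt{\trib, \tric}$, $\synt{\tric}$ and $\synt{\trib^*, \tric}$.
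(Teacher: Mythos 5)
Your core manipulations are sound, and in fact they reproduce, in contrapositive form, essentially the auxiliary statement with which the paper's own proof begins: assuming $\tric \notin M$ (where $M = \modse{\synt{\tria} \uopr \synt{\trib, \tric}}$) together with the respective ``if''-hypothesis, your uses of \puse{1}, \puse{3}, \puse{4}, \puse{6} and well-definedness do correctly force $\trib \in M$. The genuine gap is in the final step. You assert that the hypothesis $\trib \nless^\tria_{\uopr} \tric$ ``just says that at least one of'' the three defining conditions of $\prec^\tria_{\uopr}$ fails, so that deriving $\trib \prec^\tria_{\uopr} \tric$ yields a contradiction. This conflates the one-step generating relation $\prec^\tria_{\uopr}$ with the strict relation $<^\tria_{\uopr}$: by Definition~\ref{def:preorder assignment generated by operator} and the conventions of Section~\ref{sect:preliminaries}, $<^\tria_{\uopr}$ is the strict preorder induced by the reflexive and transitive \emph{closure} $\potro$ of $\prec^\tria_{\uopr}$, i.e.\ $\trib \spotro \tric$ means $\trib \potro \tric$ and not $\tric \potro \trib$. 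Hence the negation of $\trib \spotro \tric$ is the disjunction ``$\trib \nleq^\tria_{\uopr} \tric$ or $\tric \potro \trib$''. Under the first disjunct your contradiction goes through (since $\trib \prec^\tria_{\uopr} \tric$ implies $\trib \potro \tric$), but under the second it does not: at this stage of the development nothing excludes that $\trib \prec^\tria_{\uopr} \tric$ and $\tric \potro \trib$ hold simultaneously (a cycle in the preorder), in which case $\trib \nless^\tria_{\uopr} \tric$ is true and you have derived no contradiction whatsoever.

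The missing case is precisely where the paper invokes the preceding Lemma~\ref{lemma:if less then not model}, which your proposal never uses: from $\tric \potro \trib$ that lemma yields that either $\tric = \trib$ or $\trib \notin \modse{\synt{\tria} \uopr \synt{\trib, \tric}}$, and either alternative clashes with what you have already established ($\trib \in M$ and $\tric \notin M$). So your argument is repairable by appending exactly this appeal to Lemma~\ref{lemma:if less then not model} in the case $\tric \potro \trib$; but as written, the contraposition rests on a false reading of $\nless^\tria_{\uopr}$, and without the previous lemma --- whose role in the paper is exactly to rule out such cycles --- the hole cannot be closed.
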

\begin{proof}
	First we show the following auxiliary statement: If $\trib = \tric$ or
	$\trib \notin \modse{\synt{\tria} \uopr \synt{\trib, \tric}}$, then all
	three conditions are satisfied.

	First suppose that $\trib = \tric$. If $\trib = \tric^*$, then $\trib =
	\trib^* = \tric = \tric^*$, so it follows from \puse{1} and \puse{3} that
	$\modse{\synt{\tria} \uopr \synt{\trib, \tric}} = \modse{\synt{\tria} \uopr
	\synt{\tric^*}} = \set{\tric^*}$, verifying condition (1). Furthermore,
	conditions (2) and (3) are satisfied because $\synt{\tric} = \synt{\trib^*,
	\tric} = \synt{\trib, \tric}$.

	Now suppose that $\trib \notin \modse{\synt{\tria} \uopr \synt{\trib,
	\tric}}$. If $\trib = \tric^*$, then it follows from \puse{1} and \puse{3}
	that $\tric \in \modse{\synt{\tria} \uopr \synt{\trib, \tric}}$. If $\trib =
	\trib^*$, then it follows from \puse{1} that
	\begin{align*}
		\synt{\tria} \uopr \synt{\trib, \tric}
		&\entSE
		\synt{\tric}
		& \text{and} &
		& \synt{\tria} \uopr \synt{\tric}
		&\entSE
		\synt{\trib, \tric}
		\enspace,
	\end{align*}
	so by \puse{6} we obtain that $\synt{\tria} \uopr \synt{\trib, \tric} \eqSE
	\synt{\tria} \uopr \synt{\tric}$. Hence, it follows from $\tric \in
	\modse{\synt{\tria} \uopr \synt{\tric}}$ that $\tric \in \modse{\synt{\tria}
	\uopr \synt{\trib, \tric}}$. On the other hand, if $\trib \neq \trib^*$,
	then it follows from \puse{1} that
	\begin{align*}
		\synt{\tria} \uopr \synt{\trib, \tric}
		&\entSE
		\synt{\trib^*, \tric}
		& \text{and} &
		& \synt{\tria} \uopr \synt{\trib^*, \tric}
		&\entSE
		\synt{\trib, \tric}
		\enspace,
	\end{align*}
	so by \puse{6} we obtain that $\synt{\tria} \uopr \synt{\trib, \tric} \eqSE
	\synt{\tria} \uopr \synt{\trib^*, \tric}$. Hence it follows from $\tric \in
	\modse{\synt{\tria} \uopr \synt{\trib^*, \tric}}$ that $\tric \in
	\modse{\synt{\tria} \uopr \synt{\trib, \tric}}$.

	Turning to the proof of the lemma, note that since $\trib \nless^\tria_{\uopr}
	\tric$, either $\trib \nleq^\tria_{\uopr} \tric$ or $\tric \potro \trib$. In
	the former case, $\trib \nprec^\tria_{\uopr} \tric$, so, by the definition of
	$\prec^\tria_{\uopr}$, either $\trib \notin \modse{\synt{\tria} \uopr
	\synt{\trib, \tric}}$, so we can apply our auxiliary statement, or $\tric
	\in \modse{\synt{\tria} \uopr \synt{\trib, \tric}}$ as desired, or $\trib
	\neq \trib^*$ and $\tric \notin \modse{\synt{\tria} \uopr \synt{\trib^*,
	\tric}}$, in which case all three conditions are trivially satisfied. In the
	latter case it follows from Lemma~\ref{lemma:if less then not model} that
	either $\trib = \tric$ or $\trib \notin \modse{\synt{\tria} \uopr
	\synt{\trib, \tric}}$, so the rest follows once again from the auxiliary
	statement.
\end{proof}

\begin{proposition}
	\label{prop:characterisation for basic programs}
	Let $\uopr$ be a rule update operator satisfying conditions \puse{1} --
	\puse{8}, $\tria$ an \SE-interpretation and $\prgu$ a program. Then,
	\[
		\modse{\synt{\tria} \uopr \prgu}
		=
		\min \left( \modse{\prgu}, \potro \right)
		\enspace.
	\]
\end{proposition}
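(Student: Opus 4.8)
The plan is to establish the two inclusions separately, after the preliminary observation that both sides are contained in $\modse{\prgu}$: the left-hand side by \puse{1}, the right-hand side by the definition of $\min$. Hence it suffices, for a fixed $\trib \in \modse{\prgu}$, to show that $\trib \in \modse{\synt{\tria} \uopr \prgu}$ holds if and only if no $\tric \in \modse{\prgu}$ satisfies $\tric \spotro \trib$. The two preceding lemmas, which convert facts about $\spotro$ into (non-)membership statements for the two-element updates $\synt{\tria} \uopr \synt{\trib, \tric}$, are the engine driving both directions, and throughout I would lean on two structural facts: $\modse{\prgu}$ is well-defined (so $\tric \in \modse{\prgu}$ forces $\tric^* \in \modse{\prgu}$), and $\synt{\tria}$ is basic.

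For the inclusion $\modse{\synt{\tria} \uopr \prgu} \subseteq \min(\modse{\prgu}, \potro)$ I would argue by contradiction. Suppose $\trib \in \modse{\synt{\tria} \uopr \prgu}$ while $\tric \spotro \trib$ for some $\tric \in \modse{\prgu}$. From $\tric \potro \trib$ and $\tric \neq \trib$, Lemma~\ref{lemma:if less then not model} gives $\trib \notin \modse{\synt{\tria} \uopr \synt{\tric, \trib}}$. On the other hand, since $\tric, \trib \in \modse{\prgu}$ and $\modse{\prgu}$ is well-defined, we have $\prgu \prgand \synt{\tric, \trib} \eqSE \synt{\tric, \trib}$, so \puse{5} and \puse{4} yield $(\synt{\tria} \uopr \prgu) \prgand \synt{\tric, \trib} \entSE \synt{\tria} \uopr \synt{\tric, \trib}$. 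As $\trib$ is a model of both $\synt{\tria} \uopr \prgu$ and $\synt{\tric, \trib}$, this forces $\trib \in \modse{\synt{\tria} \uopr \synt{\tric, \trib}}$, a contradiction.

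The reverse inclusion is the harder half and is where I expect the real work. Given $\trib \in \min(\modse{\prgu}, \potro)$, I would reduce the goal through the program disjunction operator: since $\modse{\prgu}$ is finite, well-defined, and contains $\trib$ and $\trib^*$, we have $\prgu \eqSE \bigprgor_{\tric \in \modse{\prgu}} \synt{\trib, \tric}$, with every disjunct containing $\trib$. Because $\synt{\tria}$ is basic, iterating \puse{7} across this finite disjunction gives $\bigcap_{\tric \in \modse{\prgu}} \modse{\synt{\tria} \uopr \synt{\trib, \tric}} \subseteq \modse{\synt{\tria} \uopr \prgu}$. Thus it suffices to prove $\trib \in \modse{\synt{\tria} \uopr \synt{\trib, \tric}}$ for every $\tric \in \modse{\prgu}$, and each such fact should follow from Lemma~\ref{lemma:if not model then less}, whose hypothesis $\tric \nless^{\tria}_{\uopr} \trib$ (equivalently, the failure of $\tric \spotro \trib$) holds by minimality of $\trib$ together with well-definedness.

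The delicate step is discharging the side conditions of the three cases of Lemma~\ref{lemma:if not model then less}. I would first secure the base fact $\trib \in \modse{\synt{\tria} \uopr \synt{\trib}}$ by applying case~(1) to the pair $\trib^*, \trib$ --- legitimate because $\trib^* \in \modse{\prgu}$ makes the hypothesis hold --- and observing $\synt{\trib^*, \trib} \eqSE \synt{\trib}$. With this in hand, for a \emph{total} $\tric$ (one with $\tric = \tric^*$) I would invoke case~(1) when $\tric = \trib^*$ and case~(2) otherwise; for a general $\tric$ with $\tric \neq \tric^*$ I would first obtain $\trib \in \modse{\synt{\tria} \uopr \synt{\tric^*, \trib}}$ from the total case applied to $\tric^*$, then feed it into case~(3). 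Assembling these facts through the iterated \puse{7} bound closes the inclusion. I expect the main obstacle to be precisely this bookkeeping: selecting the disjunctive decomposition so that \puse{7} applies (each disjunct must retain $\trib$ as a model), and then chaining the conditional cases of Lemma~\ref{lemma:if not model then less} correctly by passing through the well-definedness closure of $\modse{\prgu}$.
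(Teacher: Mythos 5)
Your proposal is correct and takes essentially the same route as the paper's own proof: the same contradiction argument combining Lemma~\ref{lemma:if less then not model} with \puse{4} and \puse{5} for the first inclusion, and the same chaining of the three cases of Lemma~\ref{lemma:if not model then less} (each case feeding the next via well-definedness of $\modse{\prgu}$) followed by iterated \puse{7} and \puse{4} over the disjunction of two-element programs for the second. The differences are purely cosmetic --- variable naming and the order in which the cases are assembled --- and your explicit appeals to well-definedness where the paper leaves them implicit make the write-up, if anything, slightly more careful.
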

\begin{proof}
	First take some $\tric \in \modse{\synt{\tria} \uopr \prgu}$. By \puse{1},
	$\tric \in \modse{\prgu}$. Suppose that $\tric$ is not minimal in
	$\modse{\prgu}$ w.r.t.\ $\potro$. Then there is some $\trib \in
	\modse{\prgu}$ such that $\trib \spotro \tric$. Thus, $\trib \neq \tric$,
	and by Lemma~\ref{lemma:if less then not model} we conclude that $\tric
	\notin \modse{\synt{\tria} \uopr \synt{\trib, \tric}}$. Considering that
	$\prgu \prgand \synt{\trib, \tric}$ is strongly equivalent to $\synt{\trib,
	\tric}$, it follows from \puse{4} and \puse{5} that $(\synt{\tria} \uopr
	\prgu) \prgand \synt{\trib, \tric} \entSE \synt{\tria} \uopr \synt{\trib,
	\tric}$. Consequently, $\tric \notin \modse{\synt{\tria} \uopr \prgu}$,
	contrary to our assumption.  Therefore, $\modse{\synt{\tria} \uopr \prgu}$
	is a subset of $\min( \modse{\prgu}, \potro)$.

	To prove the converse inclusion, assume that $\tric$ is minimal in
	$\modse{\prgu}$ w.r.t.\ $\potro$ and take some $\trib \in \modse{\prgu}$.
	Note that $\trib \nless^\tria_{\uopr} \tric$, so we can use
	Lemma~\ref{lemma:if not model then less}. We will show that $\tric \in
	\modse{\synt{\tria} \uopr \synt{\trib, \tric}}$. We consider three cases:
	\begin{enumerate}[a)]
		\item If $\trib = \tric^*$, then $\tric \in \modse{\synt{\tria} \uopr
			\synt{\trib, \tric}}$ follows immediately from condition (1) of
			Lemma~\ref{lemma:if not model then less}.

		\item If $\trib = \trib^*$, then the previous case together with the fact
			that $\modse{\prgu}$ is well-defined entails that $\tric \in
			\modse{\synt{\tria} \uopr \synt{\tric}}$ and by condition (2) of
			Lemma~\ref{lemma:if not model then less} it follows that $\tric \in
			\modse{\synt{\tria} \uopr \synt{\trib, \tric}}$.

		\item If $\trib \neq \trib^*$, then the previous case together with the
			fact that $\modse{\prgu}$ is well-defined entails that $\tric \in
			\modse{\synt{\tria} \uopr \synt{\trib^*, \tric}}$ and by condition (3)
			of Lemma~\ref{lemma:if not model then less} it follows that $\tric \in
			\modse{\synt{\tria} \uopr \synt{\trib, \tric}}$.
	\end{enumerate}
	The choice of $\trib$ was arbitrary, so we have proven that $\tric \in
	\modse{\synt{\tria} \uopr \synt{\trib, \tric}}$ for all $\trib \in
	\modse{\prgu}$. This means that by repeated application of \puse{7}, $\tric$
	is an \SE-model of the program
	\[
		\synt{\tria}
		\uopr
		\bigprgor_{\trib \in \modse{\prgu}} \synt{\trib, \tric}
	\]
	and since $\prgu$ is strongly equivalent to the program $\bigprgor_{\trib
	\in \modse{\prgu}} \synt{\trib, \tric}$, it follows from \puse{4} that
	$\tric \in \modse{\synt{\tria} \uopr \prgu}$.
\end{proof}

\begin{proposition}
	\label{prop:if postulates then ordering}
	If a rule update operator $\uopr$ satisfies conditions \puse{1} --
	\puse{8}, then the preorder assignment generated by $\uopr$ is semi-faithful
	and organised and it characterises $\uopr$.
\end{proposition}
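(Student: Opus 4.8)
The plan is to build everything on Proposition~\ref{prop:characterisation for basic programs}, which already carries out the analytic work for \emph{basic} programs $\synt{\tria}$, and to bootstrap from it to the three claims. I write $\potro$ and $\potrso$ for the preorders that the generated assignment attaches to $\tria$ and $\tria^*$, and I repeatedly use that a finite well-defined set $\stria$ equals $\modse{\synt{\stria}}$, so that Proposition~\ref{prop:characterisation for basic programs} reads $\modse{\synt{\tric} \uopr \synt{\stria}} = \min(\stria, \poo{\tric})$ for every \SE-interpretation $\tric$. Given this, each of the three assertions reduces to a short argument.

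First I would show that $\uopr$ is \emph{characterised} by the generated assignment. If $\modse{\prga} = \emptyset$, then $\prga \entSE \prgu$ holds vacuously, so \puse{2} gives $\modse{\prga \uopr \prgu} = \modse{\prga} = \emptyset$, matching the empty union on the right-hand side. If $\modse{\prga} \neq \emptyset$, I would use that $\prga \eqSE \bigprgor_{\tria \in \modse{\prga}} \synt{\tria}$ (valid since $\modse{\prga}$ is finite and well-defined) and apply \puse{4} together with repeated use of \puse{8} to obtain $\prga \uopr \prgu \eqSE \bigprgor_{\tria \in \modse{\prga}} (\synt{\tria} \uopr \prgu)$. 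Taking \SE-models and invoking Proposition~\ref{prop:characterisation for basic programs} on each disjunct yields $\modse{\prga \uopr \prgu} = \bigcup_{\tria \in \modse{\prga}} \min(\modse{\prgu}, \potro)$, as required.

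The same decomposition applied to the basic program $\synt{\tria}$ is the crux. Since $\synt{\tria} \eqSE \synt{\tria} \prgor \synt{\tria^*}$ (both have \SE-models $\set{\tria, \tria^*}$), \puse{4} and \puse{8} give $\modse{\synt{\tria} \uopr \prgu} = \modse{\synt{\tria} \uopr \prgu} \cup \modse{\synt{\tria^*} \uopr \prgu}$, hence $\modse{\synt{\tria^*} \uopr \prgu} \subseteq \modse{\synt{\tria} \uopr \prgu}$; by Proposition~\ref{prop:characterisation for basic programs} this is precisely the containment $\min(M, \potrso) \subseteq \min(M, \potro)$ for every well-defined $M$. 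With it in hand, \emph{organised}-ness collapses: the containment makes $\min(M, \potro) \cup \min(M, \potrso) = \min(M, \potro)$ for every well-defined $M$, and a union of well-defined sets is well-defined, so the organised condition reduces to the claim that if $\trib$ is $\potro$-minimal in $\stria$ and in $\strib$, then it is $\potro$-minimal in $\stria \cup \strib$ — which holds for any relation.

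It remains to verify \emph{semi-faithfulness}. For its second condition, I would note that $\synt{\tria} \entSE \synt{\tria}$, so \puse{2} and Proposition~\ref{prop:characterisation for basic programs} give $\min(\set{\tria, \tria^*}, \potro) = \modse{\synt{\tria} \uopr \synt{\tria}} = \set{\tria, \tria^*}$; thus $\tria$ is $\potro$-minimal, which rules out $\tria^* \spotro \tria$ and forces $\tria \potro \tria^*$ whenever $\tria^* \potro \tria$. For its first condition, given $\trib \neq \tria, \tria^*$, I would take $M = \modse{\synt{\tria, \trib}} = \set{\tria, \tria^*, \trib, \trib^*}$; since $\synt{\tria} \entSE \synt{\tria, \trib}$, \puse{2} and Proposition~\ref{prop:characterisation for basic programs} give $\min(M, \potro) = \set{\tria, \tria^*}$, so $\trib$ is non-minimal. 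As $M$ is finite and $\spotro$ (the strict part of a preorder) is acyclic, $\trib$ lies strictly above some minimal element, i.e.\ $\tria \spotro \trib$ or $\tria^* \spotro \trib$. The main obstacle is the third paragraph: recognising that the $\tria$/$\tria^*$ asymmetry encoded in $\min(M, \potrso) \subseteq \min(M, \potro)$ is exactly what Proposition~\ref{prop:characterisation for basic programs} delivers and is precisely what trivialises the organised condition; everything else is routine once that proposition is available.
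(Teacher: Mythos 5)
Your proof is correct, and while your first step --- decomposing $\prga$ into $\bigprgor_{\tria \in \modse{\prga}} \synt{\tria}$ and applying \puse{4}, \puse{8} and Proposition~\ref{prop:characterisation for basic programs} --- coincides with the paper's, the remainder follows a genuinely different route. The paper never isolates your key containment $\min(\stria, \potrso) \subseteq \min(\stria, \potro)$; instead it proves organisedness by translating the two minimality hypotheses into $\trib \in \modse{\synt{\tria} \uopr \synt{\stria}}$ and $\trib \in \modse{\synt{\tria} \uopr \synt{\strib}}$ and then applying \puse{7} and \puse{4} directly (organisedness being the designed semantic counterpart of \puse{7}), and it proves the first semi-faithfulness condition with two update instances, $\synt{\tria} \uopr \synt{\trib^*, \tria}$ and $\synt{\tria} \uopr \synt{\trib, \tria}$, discharging the residual alternative $\trib^* \spotro \trib$ by transitivity of $\spotro$, where you use the single instance $\synt{\tria} \uopr \synt{\tria, \trib}$ together with the fact that in a finite set every non-minimal element lies strictly above some minimal one (valid because the strict part of a preorder is transitive and irreflexive). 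Your containment lemma --- obtained from $\synt{\tria} \eqSE \synt{\tria} \prgor \synt{\tria^*}$ via \puse{4} and \puse{8}, and consistent with the paper's own equations --- is what makes this work: it collapses every union $\min(\cdot, \potro) \cup \min(\cdot, \potrso)$ to $\min(\cdot, \potro)$, so organisedness reduces to the purely order-theoretic observation that joint minimality in $\stria$ and in $\strib$ yields minimality in $\stria \cup \strib$, with \puse{7} now entering only indirectly through Proposition~\ref{prop:characterisation for basic programs}. What the paper's route buys is a transparent correspondence between postulates and semantic properties; what yours buys is economy --- one structural containment plus elementary order theory --- together with an explicit treatment of the case $\modse{\prga} = \emptyset$ via \puse{2}, a boundary case the paper glosses over when it forms the disjunction over $\modse{\prga}$.
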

\begin{proof}
	First we show that the assignment generated by $\uopr$ characterises
	$\uopr$. We know that $\prga$ is strongly equivalent to the program
	$\bigprgor_{\tria \in \modse{\prga}} \synt{\tria}$, so by \puse{4} and
	repeated application of \puse{8} we obtain that $\prga \uopr \prgu$ is
	strongly equivalent to the program
	\[
		\bigprgor_{\tria \in \modse{\prga}} (\synt{\tria} \uopr \prgu)
		\enspace.
	\]
	Furthermore, Proposition~\ref{prop:characterisation for basic programs}
	implies that $\modse{\synt{\tria} \uopr \prgu} = \min \left( \modse{\prgu},
	\potro \right)$, so indeed
	\begin{equation}
		\label{eq:proof:prop:if postulates then ordering}
		\modse{\prga \uopr \prgu}
		=
		\bigcup_{\tria \in \modse{\prga}} \modse{\synt{\tria} \uopr \prgu}
		=
		\bigcup_{\tria \in \modse{\prga}}
		\min \left( \modse{\prgu}, \potro \right)
		\enspace.
	\end{equation}

	To see that the assignment generated by $\uopr$ is semi-faithful, first take
	some \SE-interpretations $\tria$, $\trib$ such that $\trib \neq \tria$ and
	$\trib \neq \tria^*$. We need to show that either $\tria \spotro \trib$ or
	$\tria^* \spotro \trib$. The equation \eqref{eq:proof:prop:if postulates
	then ordering} together with \puse{2} imply that
	\begin{align*}
		\modse{\synt{\tria} \uopr \synt{\trib^*, \tria}}
			&=
			\min \left( \set{\trib^*, \tria, \tria^*}, \potro \right)
			\cup
			\min \left( \set{\trib^*, \tria, \tria^*}, \potrso \right) \\
			&= \set{\tria, \tria^*}
			\enspace, \\
		\modse{\synt{\tria} \uopr \synt{\trib, \tria}}
			&=
			\min \left(
				\set{\trib, \trib^*, \tria, \tria^*},
				\potro
			\right)
			\cup
			\min \left(
				\set{\trib, \trib^*, \tria, \tria^*},
				\potrso
			\right) \\
			&= \set{\tria, \tria^*}
			\enspace.
	\end{align*}
	Thus, $\trib^*$ is not minimal within $\set{\trib^*, \tria, \tria^*}$ and
	$\trib$ is not minimal within $\set{\trib, \trib^*, \tria, \tria^*}$ w.r.t.\
	$\potro$. In other words:
	\begin{align}
		& \text{either }
			\tria \spotro \trib^* \text{ or } \tria^* \spotro \trib^*
			\text{ and}
			\label{eq:proof:semi-faithful:1} \\
		& \text{either }
			\tria \spotro \trib \text{ or } \tria^* \spotro \trib
			\text{ or }
			\trib^* \spotro \trib
			\label{eq:proof:semi-faithful:2} \enspace.
	\end{align}
	In case of the first two alternatives of \eqref{eq:proof:semi-faithful:2},
	we have already achieved our goal. The third alternative together with
	\eqref{eq:proof:semi-faithful:1} and transitivity of $\spotro$ also
	concludes the proof of the first condition of semi-faithfulness. To see that
	the second condition holds as well, consider that by \puse{2},
	$\modse{\synt{\tria^*} \uopr \synt{\tria}} = \set{\tria^*}$ and
	$\modse{\synt{\tria} \uopr \synt{\tria}} = \set{\tria, \tria^*}$, so it
	follows from \eqref{eq:proof:prop:if postulates then ordering} that
	\begin{align*}
		& \tria \notin \min(\set{\tria, \tria^*}, \potrso)
		& \text{and} &
		& \tria \in
			\min(\set{\tria, \tria^*}, \potro)
			\cup
			\min(\set{\tria, \tria^*}, \potrso)
		\enspace.
	\end{align*}
	Hence, $\tria \in \min(\set{\tria, \tria^*}, \potro)$. In other words, if
	$\tria^* \potro \tria$, then it must also be the case that $\tria \potro
	\tria^*$. Consequently, the order assignment generated by $\uopr$ is
	semi-faithful.

	To show that it is also organised, consider well-defined sets of
	\SE-interpretations $\stria$, $\strib$, and \SE-interpretations $\tria$,
	$\trib$ such that
	\begin{align*}
		& \trib \in
			\min \left( \stria, \potro \right)
			\cup \min \left( \stria, \potrso \right)
		& \text{and} &
		& \trib \in
			\min \left( \strib, \potro \right)
			\cup \min \left( \strib, \potrso \right)
		\enspace.
	\end{align*}
	By \eqref{eq:proof:prop:if postulates then ordering} we obtain that $\trib
	\in \modse{\synt{\tria} \uopr \synt{\stria}}$ and $\trib \in
	\modse{\synt{\tria} \uopr \synt{\strib}}$. Applying \puse{7} and \puse{4}
	yields that $\trib \in \modse{\synt{\tria} \uopr \synt{\stria \cup
	\strib}}$. Consequently, by \eqref{eq:proof:prop:if postulates then
	ordering}, either $\trib \in \min \left( \stria \cup \strib, \potro \right)$
	or $\trib \in \min \left( \stria \cup \strib, \potrso \right)$, so the order
	assignment generated by $\uopr$ is organised.
\end{proof}

\begin{lemma}
	\label{lemma:semi-faithful consequence}
	Let $\oas$ be a semi-faithful preorder assignment and $\tria$ an
	\SE-interpretation. Then there is no \SE-interpretation $\trib$ such that
	$\trib \spotrd \tria$.
\end{lemma}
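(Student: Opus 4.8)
The plan is to argue by contradiction: suppose there is an \SE-interpretation $\trib$ with $\trib \spotrd \tria$, and derive a contradiction by examining how $\trib$ relates to $\tria$ and $\tria^*$. First I would dispose of two degenerate cases. If $\trib = \tria$, then $\trib \spotrd \tria$ would assert $\tria \spotrd \tria$, which is impossible because the strict preorder $\spotrd$ induced by a preorder is irreflexive. If $\trib = \tria^*$, then $\trib \spotrd \tria$ unfolds into $\tria^* \potrd \tria$ together with ``not $\tria \potrd \tria^*$''; but the second condition of semi-faithfulness says precisely that $\tria^* \potrd \tria$ forces $\tria \potrd \tria^*$, a contradiction. (When $\tria = \tria^*$ this case collapses into the previous one.)

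The remaining case is $\trib \neq \tria$ and $\trib \neq \tria^*$, where the first condition of semi-faithfulness applies and yields either $\tria \spotrd \trib$ or $\tria^* \spotrd \trib$. If $\tria \spotrd \trib$, then in particular not $\trib \potrd \tria$, directly contradicting the assumption $\trib \spotrd \tria$, whose definition includes $\trib \potrd \tria$. So the only genuinely delicate situation is when semi-faithfulness provides $\tria^* \spotrd \trib$ rather than $\tria \spotrd \trib$; I expect this to be the main obstacle, since it does not immediately clash with $\trib \spotrd \tria$.

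To resolve it I would route the argument through $\tria^*$, invoking transitivity together with the second condition of semi-faithfulness a second time. From $\tria^* \potrd \trib$ (part of $\tria^* \spotrd \trib$) and $\trib \potrd \tria$ (part of $\trib \spotrd \tria$), transitivity of $\potrd$ gives $\tria^* \potrd \tria$; the second condition then yields $\tria \potrd \tria^*$, and composing this with $\tria^* \potrd \trib$ by transitivity produces $\tria \potrd \trib$. This contradicts $\trib \spotrd \tria$, whose strictness part is exactly ``not $\tria \potrd \trib$''. Having reached a contradiction in every case, I would conclude that no such $\trib$ exists.
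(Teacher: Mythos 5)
Your proof is correct and follows essentially the same route as the paper's: rule out $\trib = \tria$ by irreflexivity, rule out $\trib = \tria^*$ by the second semi-faithfulness condition, then apply the first condition and refute both alternatives. The only cosmetic difference is that in the last branch the paper invokes transitivity of the strict relation $\spotrd$ directly to get $\tria^* \spotrd \tria$, whereas you unfold to the non-strict preorder $\potrd$ and use its transitivity together with the second condition, which is an equally valid bookkeeping choice.
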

\begin{proof}
	We prove by contradiction. Suppose that $\trib \spotrd \tria$ for some
	\SE-interpretation $\trib$. Clearly, $\trib \neq \tria$ due to irreflexivity
	of $\spotrd$ and $\trib \neq \tria^*$ due to the second condition of
	semi-faithfulness. Hence, $\trib \neq \tria$ and $\trib \neq \tria^*$, so by
	the first condition of semi-faithfulness, either $\tria \spotrd \trib$ or
	$\tria^* \spotrd \trib$. The former is in conflict with the irreflexivity of
	$\spotrd$ and in the latter case it follows by transitivity of $\spotrd$
	that $\tria^* \spotrd \tria$, contrary to the second condition of
	semi-faithfulness.
\end{proof}

\begin{proposition}
	\label{prop:if semi-faithful then faithful}
	Let $\uopr$ be a rule update operator. If $\uopr$ is characterised by a
	semi-faithful and organised preorder assignment, then it is also
	characterised by a faithful and organised partial order assignment.
\end{proposition}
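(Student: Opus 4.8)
The plan is to transform the given semi-faithful and organised preorder assignment $\oas$ into a faithful and organised \emph{partial order} assignment $\oas'$ that characterises the same operator $\uopr$. I would obtain $\oas'$ by turning every centre into a strict bottom element while simultaneously breaking all ties: for each \SE-interpretation $\tria$ and all $\trib, \tric \in \tris$, put
\[
	\trib \potrp \tric
	\quad\text{iff}\quad
	\trib = \tric
	\ \lor\
	\trib = \tria
	\ \lor\
	\br{\trib \neq \tria \land \tric \neq \tria \land \trib \spotrd \tric}
	\enspace.
\]
It is routine that each $\potrp$ is a partial order: reflexivity is immediate; antisymmetry holds because the last clause never makes $\tria$ a target and two distinct elements of $\tris \setminus \set{\tria}$ cannot satisfy both $\trib \spotrd \tric$ and $\tric \spotrd \trib$; and transitivity follows from transitivity of the strict preorder $\spotrd$. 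Faithfulness is built in: $\tria \potrp \trib$ holds for every $\trib$, while $\trib \potrp \tria$ fails for $\trib \neq \tria$, so $\tria \spotrp \trib$.

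The heart of the argument is to show that $\oas'$ remains organised and still characterises $\uopr$; both follow from a single statement about minimal elements. By construction the strict part $\spotrp$ coincides with $\spotrd$ on $\tris \setminus \set{\tria}$ and in addition places $\tria$ strictly below everything, so
\[
	\min \br{\stri, \potrp}
	=
	\begin{cases}
		\set{\tria} & \text{if } \tria \in \stri, \\
		\min \br{\stri, \potrd} & \text{if } \tria \notin \stri,
	\end{cases}
\]
for every $\stri \subseteq \tris$. The key claim I would then establish is that, for every \emph{well-defined} set $\stri$, the combined minima at $\tria$ and $\tria^*$ are unchanged:
\[
	\min \br{\stri, \potrp} \cup \min \br{\stri, \potrsp}
	=
	\min \br{\stri, \potrd} \cup \min \br{\stri, \potrsd}
	\enspace.
\]
The only interesting case is $\tria \in \stri$, which forces $\tria^* \in \stri$ by well-definedness. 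Lemma~\ref{lemma:semi-faithful consequence} makes each centre minimal in its own order, and the first semi-faithfulness condition rules out any $\trib \in \stri$ other than $\tria, \tria^*$ being minimal w.r.t.\ $\potrd$; hence the right-hand side equals $\set{\tria, \tria^*}$, and by the displayed case distinction applied at the centres $\tria$ and $\tria^*$ the left-hand side equals $\set{\tria} \cup \set{\tria^*}$.

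Since the organised condition is phrased solely through the combined minima $\min(\cdot, \potrd) \cup \min(\cdot, \potrsd)$ over well-defined sets, the claim shows $\oas'$ to be organised exactly because $\oas$ is. For the characterisation, I would use that in
\[
	\modse{\prga \uopr \prgu}
	=
	\bigcup_{\tria \in \modse{\prga}} \min \br{\modse{\prgu}, \potrd}
\]
the index set $\modse{\prga}$ is well-defined, hence closed under $\tria \mapsto \tria^*$; grouping the centres into the pairs $\set{\tria, \tria^*}$ (with $\tria \neq \tria^*$) and the fixed points $\tria = \tria^*$, each group contributes precisely its combined minima over $\modse{\prgu}$, which the claim shows to agree for $\oas$ and $\oas'$. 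Thus both assignments yield the same $\modse{\prga \uopr \prgu}$, so $\oas'$ characterises $\uopr$. The main obstacle is exactly this last step: the faithfulness upgrade genuinely changes the per-centre minima, since a centre $\tria$ with $\tria \in \modse{\prgu}$ now contributes only $\set{\tria}$ and may lose $\tria^*$; the argument hinges on observing that the lost interpretation $\tria^*$ is always recovered from the centre $\tria^*$, which is present precisely because $\modse{\prga}$ is closed under $\tria \mapsto \tria^*$.
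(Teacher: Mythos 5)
Your proposal is correct and follows essentially the same route as the paper's proof: the same construction of $\oas'$ (your guard conditions $\trib \neq \tria \land \tric \neq \tria$ are redundant given Lemma~\ref{lemma:semi-faithful consequence}, which is exactly how the paper's unguarded definition stays consistent), the same key identity $\min(\stri, \potrp) \cup \min(\stri, \potrsp) = \min(\stri, \potrd) \cup \min(\stri, \potrsd)$ on well-defined sets, and the same use of well-definedness of $\modse{\prga}$ to transfer the characterisation. The only difference is presentational: the paper splits the key identity into three cases, while you fold the case $\tria \notin \stri$, $\tria^* \in \stri$ into the ``uninteresting'' ones, though it still needs the observation that $\min(\stri, \potrsd) = \set{\tria^*}$ by semi-faithfulness at the centre $\tria^*$ -- an ingredient your main case already supplies.
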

\begin{proof*}
	Let $\uopr$ be characterised by a semi-faithful and organised preorder
	assignment $\oas$. We define the assignment $\oas'$ over $\tris$ as follows:
	\[
		\trib \potrp \tric
		\quad \text{ if and only if } \quad
		\trib = \tria \lor \trib = \tric \lor \trib \spotrd \tric
		\enspace.
	\]

	We need to show that $\potrp$ is a partial order for all $\tria \in \tris$,
	that $\oas'$ is faithful and organised and that for all programs $\prga$,
	$\prgu$,
	\[
		\modse{\prga \uopr \prgu}
		=
		\bigcup_{\tria \in \modse{\prga}}
		\min \left( \modse{\prgu}, \potrp \right)
		\enspace.
	\]

	Note that due to Lemma~\ref{lemma:semi-faithful consequence}, the following
	holds for all \SE-interpretations $\tria$, $\trib$:
	\begin{equation}
		\label{eq:proof:prop:if semi-faithful then faithful}
		\text{If } \trib \potrp \tria \text{, then } \trib = \tria.
	\end{equation}
	Otherwise we would obtain that $\trib \spotrd \tria$ which is in conflict
	with Lemma \ref{lemma:semi-faithful consequence}.

	Turning back to the main proof, reflexivity of $\potrp$ follows directly by
	its definition.

	To show that $\potrp$ is antisymmetric, take some \SE-interpretations
	$\trib_1, \trib_2$ such that $\trib_1 \potrp \trib_2$ and $\trib_2 \potrp
	\trib_1$. If $\trib_1 = \tria$, then $\trib_2 \potrp \tria$ and it follows
	from \eqref{eq:proof:prop:if semi-faithful then faithful} that $\trib_2 =
	\tria = \trib_1$. The case when $\trib_2 = \tria$ is symmetric. If $\trib_1
	\neq \tria$ and $\trib_2 \neq \tria$, then, by the definition of $\potrp$,
	either $\trib_1 = \trib_2$ as desired, or $\trib_1 \spotrd \trib_2$ and
	$\trib_2 \spotrd \trib_1$, which is in conflict with the transitivity and
	irreflexivity of $\spotrd$.

	Turning to transitivity of $\potrp$, suppose that $\trib_1 \potrp \trib_2$
	and $\trib_2 \potrp \trib_3$. We need to show that $\trib_1 \potrp \trib_3$.
	We consider three cases:
	\begin{enumerate}[a)]
		\item If $\trib_1 = \tria$, then $\trib_1 \potrp \trib_3$ by the
			definition of $\potrp$.

		\item If $\trib_2 = \tria$, then $\trib_1 \potrp \tria$, so $\trib_1 =
			\tria$ due to \eqref{eq:proof:prop:if semi-faithful then faithful} and
			the previous case applies.

		\item If $\trib_1 \neq \tria$ and $\trib_2 \neq \tria$, then the desired
			conclusion follows from the transitivity of equality and of $\spotrd$.
	\end{enumerate}

	As for faithfulness of $\oas'$, suppose that $\trib \neq \tria$. We have
	$\tria \potrp \trib$ by definition and $\trib \nleq^\tria_{\oas'} \tria$
	follows from \eqref{eq:proof:prop:if semi-faithful then faithful}.

	To show that $\oas'$ is organised, we prove the following property: For any
	well-defined set of \SE-interpretations $\stri$ and any \SE-interpretation
	$\tria$,
	\begin{equation}
		\label{eq:proof:prop:if semi-faithful then faithful:2}
		\min \left( \stri, \potrp \right)
		\cup \min \left( \stri, \potrsp \right)
		=
		\min \left( \stri, \potrd \right)
		\cup \min \left( \stri, \potrsd \right)
		\enspace.
	\end{equation}
	From \eqref{eq:proof:prop:if semi-faithful then faithful:2} it follows that
	since $\oas$ is organised, $\oas'$ must also be.

	Before we prove \eqref{eq:proof:prop:if semi-faithful then faithful:2}, we
	need to note that $\trib \spotrp \tric$ holds if and only if $\trib \potrp
	\tric$ and $\tric \nleq^\tria_{\oas'} \trib$, so according to the definition
	of $\potrp$,
	\[
		\trib \spotrp \tric
		\quad \text{if and only if} \quad
		(\trib = \tria \lor \trib = \tric \lor \trib \spotrd \tric)
		\land
		(\tric \neq \tria \land \tric \neq \trib
			\land \tric \not\spotrd \trib)
		\enspace.
	\]
	Due to Lemma \ref{lemma:semi-faithful consequence} and the transitivity and
	irreflexivity of $\spotrd$, this can be simplified to
	\begin{equation}
		\label{eq:proof:prop:if semi-faithful then faithful:3}
		\trib \spotrp \tric
		\quad \text{if and only if} \quad
		(\trib = \tria \land \trib \neq \tric) \lor \trib \spotrd \tric
		\enspace.
	\end{equation}
	
	Coming back to the proof of \eqref{eq:proof:prop:if semi-faithful then
	faithful:2}, we need to consider three cases:
	\begin{enumerate}[a)]
		\item If $\tria \notin \stri$ and $\tria^* \notin \stri$, then for all
			$\trib, \tric \in \stri$, $\trib \neq \tria$ and $\trib \neq \tria^*$,
			so by \eqref{eq:proof:prop:if semi-faithful then faithful:3},
			\begin{align*}
				\trib \spotrp \tric
				&\text{ if and only if }
				\trib \spotrd \tric
				&\text{and}&
				& \trib \spotrsp \tric
				&\text{ if and only if }
				\trib \spotrsd \tric
				\enspace,
			\end{align*}
			from which the desired conclusion follows directly.

		\item If $\tria \notin \stri$ and $\tria^* \in \stri$, then for all
			$\trib, \tric \in \stri$, $\trib \neq \tria$, so by
			\eqref{eq:proof:prop:if semi-faithful then faithful:3},
			\[
				\trib \spotrp \tric
				\text{ if and only if }
				\trib \spotrd \tric
				\enspace.
			\]
			Consequently, $\min(\stri, \potrp) = \min(\stri, \potrd)$, and by
			\eqref{eq:proof:prop:if semi-faithful then faithful:3} and
			semi-faithfulness of $\oas$ we obtain $\min(\stri, \potrsp) =
			\set{\tria^*} = \min(\stri, \potrsd)$.

		\item If $\tria \in \stri$, then $\tria^* \in \stri$, and by
			\eqref{eq:proof:prop:if semi-faithful then faithful:3} and
			semi-faithfulness of $\oas$,
			\begin{align*}
				\set{\tria}
				\subseteq
				\min \left( \stri, \potrd \right)
				&\subseteq
				\set{\tria, \tria^*} \enspace,
				& \min \left( \stri, \potrsd \right)
				&=
				\set{\tria^*} \enspace,
				& \\
				\min \left( \stri, \potrp \right)
				&=
				\set{\tria} \enspace,
				& \min \left( \stri, \potrsp \right)
				&=
				\set{\tria^*} \enspace,
			\end{align*}
			from which the desired conclusion follows straightforwardly.
	\end{enumerate}

	Finally, it follows from the assumption that $\oas$ characterises $\uopr$
	and from \eqref{eq:proof:prop:if semi-faithful then faithful:2} that
	\begin{align*}
		\modse{\prga \uopr \prgu}
		&= \bigcup_{\tria \in \modse{\prga}}
			\min \left( \modse{\prgu}, \potrd \right) \\
		&= \bigcup_{\tria \in \modse{\prga}}
			\left(
				\min \left( \modse{\prgu}, \potrd \right)
				\cup \min \left( \modse{\prgu}, \potrsd \right)
			\right) \\
		&= \bigcup_{\tria \in \modse{\prga}}
			\left(
				\min \left( \modse{\prgu}, \potrp \right)
				\cup \min \left( \modse{\prgu}, \potrsp \right)
			\right) \\
		&= \bigcup_{\tria \in \modse{\prga}}
			\min \left( \modse{\prgu}, \potrp \right)
			\enspace. \qedhere
	\end{align*}
\end{proof*}

\begin{proposition}
	\label{prop:if ordering then postulates}
	Let $\uopr$ be a rule update operator. If $\uopr$ is characterised by a
	faithful and organised partial order assignment, then $\uopr$ satisfies
	conditions \puse{1} -- \puse{8}.
\end{proposition}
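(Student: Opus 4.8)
The plan is to verify each of \puse{1}--\puse{8} by unfolding the characterisation $\modse{\prga \uopr \prgu} = \bigcup_{\tria \in \modse{\prga}} \min(\modse{\prgu}, \potrd)$ and translating the postulate into an elementary set-theoretic statement about families of $\potrd$-minima. Since $\entSE$ is inclusion of SE-model sets, $\eqSE$ is equality, and $\prgand$, $\prgor$ realise intersection and union of SE-model sets (with $\modse{\prgu \prgand \prgv} = \modse{\prgu} \cap \modse{\prgv}$ and $\modse{\prgu \prgor \prgv} = \modse{\prgu} \cup \modse{\prgv}$), every postulate becomes a claim that can be checked directly. Four of them are then immediate: \puse{1} holds because each $\min(\modse{\prgu}, \potrd)$ is a subset of $\modse{\prgu}$; \puse{4} because the right-hand side of the characterisation depends only on $\modse{\prga}$ and $\modse{\prgu}$; \puse{8} because the union over $\modse{\prga \prgor \prgb} = \modse{\prga} \cup \modse{\prgb}$ splits into the union over $\modse{\prga}$ and the union over $\modse{\prgb}$; and \puse{3} because, in the finite setting, every nonempty subset of $\tris$ has a $\potrd$-minimal element, so the minima are nonempty whenever $\modse{\prga}$ and $\modse{\prgu}$ are.

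For \puse{2} I would invoke faithfulness: if $\prga \entSE \prgu$, then every $\tria \in \modse{\prga}$ lies in $\modse{\prgu}$, and since $\tria \spotrd \trib$ for all $\trib \neq \tria$, the only $\potrd$-minimal element of $\modse{\prgu}$ is $\tria$ itself, whence $\min(\modse{\prgu}, \potrd) = \set{\tria}$; taking the union over $\modse{\prga}$ recovers exactly $\modse{\prga}$. For \puse{5} I would use the standard observation that an element minimal in a set and lying in a subset is minimal in that subset: any $\trib$ in both $\modse{\prga \uopr \prgu}$ and $\modse{\prgv}$ is $\potrd$-minimal in $\modse{\prgu}$ for some $\tria \in \modse{\prga}$ and also belongs to $\modse{\prgu} \cap \modse{\prgv} = \modse{\prgu \prgand \prgv}$, hence is minimal there for the same $\tria$, giving membership in $\modse{\prga \uopr (\prgu \prgand \prgv)}$.

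Postulate \puse{7} is essentially a restatement of organisedness. When $\prga$ is basic with $\modse{\prga} = \set{\tria, \tria^*}$, the characterisation gives $\modse{\prga \uopr \prgu} = \min(\modse{\prgu}, \potrd) \cup \min(\modse{\prgu}, \potrsd)$, so any $\trib$ lying in both $\modse{\prga \uopr \prgu}$ and $\modse{\prga \uopr \prgv}$ satisfies precisely the hypothesis of the organised condition instantiated with the well-defined sets $\stria = \modse{\prgu}$ and $\strib = \modse{\prgv}$ (well-defined by Proposition~\ref{prop:se:well-defined}). Organisedness then yields $\trib \in \min(\modse{\prgu} \cup \modse{\prgv}, \potrd) \cup \min(\modse{\prgu} \cup \modse{\prgv}, \potrsd) = \modse{\prga \uopr (\prgu \prgor \prgv)}$, which is the desired inclusion.

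The main obstacle is \puse{6}, because its two hypotheses $\prga \uopr \prgu \entSE \prgv$ and $\prga \uopr \prgv \entSE \prgu$ are statements about the full unions, and one must descend to a per-$\tria$ comparison. The plan is to fix $\tria \in \modse{\prga}$ and prove $\min(\modse{\prgu}, \potrd) = \min(\modse{\prgv}, \potrd)$. Taking $\trib$ minimal in $\modse{\prgu}$, the first hypothesis places $\trib$ in $\modse{\prgv}$; if $\trib$ failed to be minimal in $\modse{\prgv}$, finiteness of $\tris$ would supply a $\potrd$-minimal element $\tric$ of $\modse{\prgv}$ with $\tric \spotrd \trib$, which by the second hypothesis lies in $\modse{\prgu}$ and contradicts the minimality of $\trib$ there. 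The argument is symmetric, so the two minima sets coincide for every $\tria$, and unioning over $\modse{\prga}$ gives $\prga \uopr \prgu \eqSE \prga \uopr \prgv$. The only points requiring care are the appeal to finiteness to guarantee a minimal element below any given one, and the transitivity and irreflexivity bookkeeping needed to turn a mere witness strictly below $\trib$ into a genuinely $\potrd$-minimal one.
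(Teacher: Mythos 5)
Your proposal is correct and follows essentially the same route as the paper's proof: each postulate is verified by unfolding the characterisation into a statement about unions of minima, with faithfulness yielding \puse{2}, organisedness yielding \puse{7}, and the minimal-witness-plus-transitivity argument yielding \puse{6}. The only cosmetic difference is that for \puse{6} you prove the per-interpretation equality of the minima sets rather than deriving a contradiction directly from membership in the unions, but the underlying argument is the same one the paper uses.
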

\begin{proof*}
	Let $\uopr$ be characterised by a faithful and organised partial order
	assignment $\oas$. We consider each condition separately:
	\begin{enumerate}[a))))))]
		\renewcommand{\labelenumi}{\puse{\arabic{enumi}}}

		\item Since $\oas$ characterises $\uopr$, for all programs $\prga$,
			$\prgu$,
			\[
				\modse{\prga \uopr \prgu}
				=
				\bigcup_{\tria \in \modse{\prga}}
				\min \left( \modse{\prgu}, \potrd \right)
				\enspace,
			\]
			so all elements of $\modse{\prga \uopr \prgu}$ belong to
			$\modse{\prgu}$. Equivalently, $\prga \uopr \prgu \entSE \prgu$.

		\item Suppose that $\prga \entSE \prgu$ and take some $\tria \in
			\modse{\prga} \subseteq \modse{\prgu}$. Since the preorder assignment is
			faithful, for all $\trib \in \modse{\prgu}$ with $\trib \neq \tria$ we
			have $\tria \spotrd \trib$. Consequently, $\min(\modse{\prgu}, \potrd) =
			\set{\tria}$ and so
			\[
				\modse{\prga \uopr \prgu}
				=
				\bigcup_{\tria \in \modse{\prga}}
				\min \left( \modse{\prgu}, \potrd \right)
				=
				\bigcup_{\tria \in \modse{\prga}} \set{\tria}
				=
				\modse{\prga}
				\enspace.
			\]

		\item Suppose that both $\modse{\prga} \neq \emptyset$ and $\modse{\prgu}
			\neq \emptyset$. Then there is some $\tria_0 \in \modse{\prga}$ and also
			some $\trib \in \min(\modse{\prgu}, \pod{\tria_0})$, so we obtain
			\[
				\trib
				\in
				\min \left( \modse{\prgu}, \pod{\tria_0} \right)
				\subseteq
				\bigcup_{\tria \in \modse{\prga}}
				\min \left( \modse{\prgu}, \potrd \right)
				=
				\modse{\prga \uopr \prgu}
				\enspace.
			\]
			Hence, $\modse{\prga \uopr \prgu} \neq \emptyset$.

		\item If $\prga \eqSE \prgb$ and $\prgu \eqSE \prgv$, then
			\begin{align*}
				\modse{\prga \uopr \prgu}
				&=
				\bigcup_{\tria \in \modse{\prga}}
				\min \left( \modse{\prgu}, \potrd \right)
				=
				\bigcup_{\tria \in \modse{\prgb}}
				\min \left( \modse{\prgv}, \potrd \right) \\
				&=
				\modse{\prgb \uopr \prgv} \enspace.
			\end{align*}
			Therefore, $\prga \uopr \prgu \eqSE \prgb \uopr \prgv$.

		\item Suppose that $\trib$ is an \SE-model of $(\prga \uopr \prgu) \prgand
			\prgv$. Then $\trib \in \modse{\prgv}$ and there is some \SE-model
			$\tria$ of $\prga$ such that $\trib$ belongs to $\min(\modse{\prgu},
			\potrd)$. Consequently, $\trib$ also belongs to $\min(\modse{\prgu} \cap
			\modse{\prgv}, \potrd)$, so $\trib$ is an \SE-model of $\prga \uopr
			(\prgu \prgand \prgv)$.

		\item Assume that $\prga \uopr \prgu \entSE \prgv$ and $\prga \uopr \prgv
			\entSE \prgu$. We will prove by contradiction that $\prga \uopr \prgu
			\entSE \prga \uopr \prgv$. The other half can be proved similarly.

			So suppose that $\trib$ is an \SE-model of $\prga \uopr \prgu$ but not
			of $\prga \uopr \prgv$. Then there is some \SE-model $\tria$ of $\prga$
			such that
			\begin{equation}
				\label{eq:proof:KM 6}
				\trib \in \min(\modse{\prgu}, \potrd) \enspace.
			\end{equation}
			At the same time, there must be some \SE-model $\tric$ of $\prgv$ such
			that $\tric \spotrd \trib$. Let $\tric_0$ be minimal w.r.t.\ $\potrd$
			among all such $\tric$. Then by transitivity of $\spotrd$ we obtain that
			$\tric_0 \in \min(\modse{\prgv}, \potrd)$ and, consequently, $\tric_0$
			is an \SE-model of $\prga \uopr \prgv$. By the assumption we now obtain
			that $\tric_0$ is an \SE-model of $\prgu$. But since $\tric_0 \spotrd
			\trib$, this is in conflict with \eqref{eq:proof:KM 6}.

		\item Suppose that $\prga$ is strongly equivalent to $\synt{\tria}$ for
			some \SE-interpretation $\tria$ and $\trib$ is an \SE-model of both
			$\prga \uopr \prgu$ and $\prga \uopr \prgv$. We will show that $\trib$
			is an \SE-model of $\prga \uopr (\prgu \prgor \prgv)$. Let $\stria =
			\modse{\prgu}$ and $\strib = \modse{\prgv}$. It follows that
			\[
				\trib \in \min ( \stria, \potrd )
					\cup \min ( \stria, \potrsd )
				\text{ and }
				\trib \in \min ( \strib, \potrd )
					\cup \min ( \strib, \potrsd ) \enspace,
			\]
			so since $\oas$ is organised, $\trib \in \min(\stria \cup \strib,
			\potrd) \cup \min(\stria \cup \strib, \potrsd)$. Consequently, $\trib$
			is an \SE-model of $\prga \uopr (\prgu \prgor \prgv)$.

		\item The following sequence of equations establishes the property:
			\begin{align*}
				\modse{(\prga \prgor \prgb) \uopr \prgu}
				&=
				\bigcup_{\tria \in \modse{\prga \prgor \prgb}}
				\min \left( \modse{\prgu}, \potrd \right) \\
				&=
				\bigcup_{\tria \in \modse{\prga}}
				\min \left( \modse{\prgu}, \potrd \right)
				\cup
				\bigcup_{\tria \in \modse{\prgb}}
				\min \left( \modse{\prgu}, \potrd \right) \\
				&=
				\modse{\prga \uopr \prgu}
				\cup \modse{\prgb \uopr \prgu} \\
				&=
				\modse{(\prga \uopr \prgu) \prgor (\prgb \uopr \prgu)}
				\qedhere
			\end{align*}
	\end{enumerate}
\end{proof*}

\begin{theorem*}{thm:representation}
	Let $\uopr$ be a rule update operator. The following conditions are
	equivalent:
	\begin{enumerate}[a)]
		\setlength{\itemindent}{-1.1em}
		\item The operator $\uopr$ satisfies conditions \puse{1} -- \puse{8}.

		\item The operator $\uopr$ is characterised by a semi-faithful and
			organised preorder assignment.

		\item The operator $\uopr$ is characterised by a faithful and organised
			partial order assignment.
	\end{enumerate}
\end{theorem*}
\begin{proof}
	[\textit{Proof of Theorem~\ref{thm:representation}}]
	\label{proof:representation}
	Follows from Propositions~\ref{prop:if postulates then ordering},
	\ref{prop:if semi-faithful then faithful} and \ref{prop:if ordering then
	postulates}.
\end{proof}

\section{Proofs: Properties of the Assignment $\oasa$}

\begin{proposition}
	\label{prop:oasa is preorder assignment}
	The assignment $\oasa$ is a preorder assignment.
\end{proposition}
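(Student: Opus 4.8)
The plan is to verify the two requirements in the definition of a preorder assignment: that for every \SE-interpretation $\tria = \twiab$ the relation $\potra$ produced by $\oasa$ is reflexive and transitive on $\tris$. Reflexivity is immediate. For any $\trib = \tpl{\twic, \twid}$ the first defining condition, $(\twid \div \twib) \subseteq (\twid \div \twib)$, holds trivially; and since its antecedent $(\twid \div \twib) = (\twid \div \twib)$ is satisfied, the second condition reduces to $(\twic \div \twia) \setminus \Delta \subseteq (\twic \div \twia) \setminus \Delta$ with $\Delta = \twid \div \twib$, which also holds. Hence $\trib \potra \trib$.

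The real content is transitivity. I would fix three \SE-interpretations $\trib_1 = \tpl{\twic_1, \twid_1}$, $\trib_2 = \tpl{\twic_2, \twid_2}$, $\trib_3 = \tpl{\twic_3, \twid_3}$ with $\trib_1 \potra \trib_2$ and $\trib_2 \potra \trib_3$, and establish the two defining conditions for $\trib_1 \potra \trib_3$. The first condition is purely a matter of transitivity of inclusion: from $(\twid_1 \div \twib) \subseteq (\twid_2 \div \twib)$ and $(\twid_2 \div \twib) \subseteq (\twid_3 \div \twib)$ one obtains $(\twid_1 \div \twib) \subseteq (\twid_3 \div \twib)$ directly.

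The step that needs care is the second condition, and it is where I expect the only (mild) obstacle to lie: the subtracted set $\Delta$ is read off the left-hand interpretation of each comparison, so a priori the $\Delta$ witnessing $\trib_1 \potra \trib_2$ (namely $\twid_1 \div \twib$) and the one witnessing $\trib_2 \potra \trib_3$ (namely $\twid_2 \div \twib$) could differ. The resolution is that the second condition for the composite only has to be discharged under the hypothesis $(\twid_1 \div \twib) = (\twid_3 \div \twib)$; combined with the inclusion chain already obtained, this squeezes all three second-component differences together, so $(\twid_1 \div \twib) = (\twid_2 \div \twib) = (\twid_3 \div \twib)$, and I call this common set $\Delta$. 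With this single $\Delta$, the second conditions of the two assumed relations both apply and give $(\twic_1 \div \twia) \setminus \Delta \subseteq (\twic_2 \div \twia) \setminus \Delta$ and $(\twic_2 \div \twia) \setminus \Delta \subseteq (\twic_3 \div \twia) \setminus \Delta$; transitivity of inclusion then yields $(\twic_1 \div \twia) \setminus \Delta \subseteq (\twic_3 \div \twia) \setminus \Delta$, which is exactly what the second condition for $\trib_1 \potra \trib_3$ demands. As $\tria$ was arbitrary, $\potra$ is a preorder for every center, so $\oasa$ is a preorder assignment.
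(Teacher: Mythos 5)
Your proof is correct and follows essentially the same route as the paper's: reflexivity is checked directly, the first transitivity condition follows from transitivity of $\subseteq$, and the second is discharged by observing that the hypothesis $(\twid_1 \div \twib) = (\twid_3 \div \twib)$ together with the inclusion chain forces $(\twid_1 \div \twib) = (\twid_2 \div \twib) = (\twid_3 \div \twib)$, so a single common $\Delta$ serves in both assumed relations. The only difference is presentational: you explicitly flag the potential mismatch of the two $\Delta$'s as the crux, which the paper handles silently in the same squeeze step.
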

\begin{proof}
	Recall that the assignment $\oasa$ is defined for all \SE-interpretations
	$\tria = \twiab$, $\trib = \tpl{\twic_1, \twid_1}$, $\tric = \tpl{\twic_2,
	\twid_2}$ as follows: $\trib \potra \tric$ if and only if
	\begin{enumerate}[1.]
		\item $(\twid_1 \div \twib) \subseteq (\twid_2 \div \twib)$;

		\item If $(\twid_1 \div \twib) = (\twid_2 \div \twib)$, then $(\twic_1
			\div \twia) \setminus \Delta \subseteq (\twic_2 \div \twia) \setminus
			\Delta$ where $\Delta = \twid_1 \div \twib$.
	\end{enumerate}
	In order to show that $\oasa$ is a preorder assignment, we need to prove
	that given an arbitrary \SE-interpretation $\tria = \twiab$, $\potra$ is a
	preorder over $\tris$. This holds if and only if $\potra$ is reflexive and
	transitive. First we show reflexivity. Take some \SE-interpretation $\trib =
	\tpl{\twic, \twid}$. By definition, $\trib \potra \trib$ holds if and only
	if
	\begin{enumerate}[1.]
		\item $(\twid \div \twib) \subseteq (\twid \div \twib)$;

		\item If $(\twid \div \twib) = (\twid \div \twib)$, then $(\twic \div
			\twia) \setminus \Delta \subseteq (\twic \div \twia) \setminus \Delta$
			where $\Delta = \twid \div \twib$.
	\end{enumerate}
	It is not difficult to check that both conditions hold.

	To show transitivity, take some \SE-interpretations $\trib_1 = \tpl{\twic_1,
	\twid_1}, \trib_2 = \tpl{\twic_2, \twid_2}, \trib_3 = \tpl{\twic_3, \twid_3}$
	such that $\trib_1 \potra \trib_2$ and $\trib_2 \potra \trib_3$. We need to
	show that $\trib_1 \potra \trib_3$. According to the definition of $\potra$
	we obtain
	\begin{enumerate}[1.]
		\item $(\twid_1 \div \twib) \subseteq (\twid_2 \div \twib)$;

		\item If $(\twid_1 \div \twib) = (\twid_2 \div \twib)$, then $(\twic_1
			\div \twia) \setminus \Delta \subseteq (\twic_2 \div \twia) \setminus
			\Delta$ where $\Delta = \twid_1 \div \twib$;
	\end{enumerate}
	and also
	\begin{enumerate}[1']
		\item $(\twid_2 \div \twib) \subseteq (\twid_3 \div \twib)$;

		\item If $(\twid_2 \div \twib) = (\twid_3 \div \twib)$, then $(\twic_2
			\div \twia) \setminus \Delta \subseteq (\twic_3 \div \twia) \setminus
			\Delta$ where $\Delta = \twid_2 \div \twib$.
	\end{enumerate}
	We need to show the following two conditions:
	\begin{enumerate}[1$^*$]
		\item $(\twid_1 \div \twib) \subseteq (\twid_3 \div \twib)$;

		\item If $(\twid_1 \div \twib) = (\twid_3 \div \twib)$, then $(\twic_1
			\div \twia) \setminus \Delta \subseteq (\twic_3 \div \twia) \setminus
			\Delta$ where $\Delta = \twid_1 \div \twib$.
	\end{enumerate}
	It can be seen that 1$^*$ follows from 1.\ and 1' by transitivity of the
	subset relation. To show that 2$^*$ holds as well, suppose that $(\twid_1
	\div \twib) = (\twid_3 \div \twib)$. Then by 1.\ and 1' we obtain that
	$(\twid_1 \div \twib) = (\twid_2 \div \twib) = (\twid_3 \div \twib) =
	\Delta$ and so by 2.\ and 2' it holds that
	\[
		(\twic_1 \div \twia) \setminus \Delta
		\subseteq
		(\twic_2 \div \twia) \setminus \Delta
		\subseteq
		(\twic_3 \div \twia) \setminus \Delta
		\enspace.
	\]
	Consequently, 2$^*$ is also satisfied and the proof is finished.
\end{proof}

\begin{lemma}
	\label{lemma:oasa strict}
	Let $\tria = \twiab$, $\trib = \tpl{\twic_1, \twid_1}$, $\tric =
	\tpl{\twic_2, \twid_2}$ be \SE-interpretations. Then $\trib \spotra \tric$
	holds if and only if one of the following conditions is satisfied:
	\begin{enumerate}[a)]
		\item $(\twid_1 \div \twib) \subsetneq (\twid_2 \div \twib)$, or

		\item $(\twid_1 \div \twib) = (\twid_2 \div \twib)$ and $(\twic_1 \div
			\twia) \setminus \Delta \subsetneq (\twic_2 \div \twia) \setminus
			\Delta$ where $\Delta = \twid_1 \div \twib$.
	\end{enumerate}
\end{lemma}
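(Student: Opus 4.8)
The plan is to unfold the strict preorder directly: by definition $\trib \spotra \tric$ means that $\trib \potra \tric$ holds while $\tric \potra \trib$ does not. To keep the two defining clauses of $\potra$ manageable, I would abbreviate $D_1 = \twid_1 \div \twib$, $D_2 = \twid_2 \div \twib$, $C_1 = \twic_1 \div \twia$ and $C_2 = \twic_2 \div \twia$. With this notation the first defining condition of $\trib \potra \tric$ reads $D_1 \subseteq D_2$, and the second reads: if $D_1 = D_2$ then $C_1 \setminus \Delta \subseteq C_2 \setminus \Delta$, where $\Delta = D_1$. The whole argument then hinges on splitting according to whether $D_1 = D_2$, since the second condition is vacuous precisely when the second-component differences are unequal.

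For the \emph{only if} direction I would assume $\trib \spotra \tric$. Since $\trib \potra \tric$, the first condition gives $D_1 \subseteq D_2$, so either $D_1 \subsetneq D_2$, which is exactly alternative a), or $D_1 = D_2$. In the latter case, setting $\Delta = D_1 = D_2$, the second condition of $\trib \potra \tric$ yields $C_1 \setminus \Delta \subseteq C_2 \setminus \Delta$. Now $\tric \potra \trib$ fails by assumption, yet its first condition $D_2 \subseteq D_1$ holds because $D_1 = D_2$; hence the failure must lie in its second condition, i.e.\ $C_2 \setminus \Delta \subseteq C_1 \setminus \Delta$ does \emph{not} hold. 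Combining the two inclusions upgrades them to $C_1 \setminus \Delta \subsetneq C_2 \setminus \Delta$, which is alternative b).

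For the \emph{if} direction I would treat a) and b) separately. If a) holds, then $D_1 \subsetneq D_2$ makes the first condition of $\trib \potra \tric$ true and the premise of its second condition false, so $\trib \potra \tric$; moreover $\tric \potra \trib$ already fails at the first condition, since $D_2 \subseteq D_1$ would force $D_1 = D_2$. If b) holds, then $D_1 = D_2 = \Delta$, so both orderings pass the first condition; the second condition of $\trib \potra \tric$ is exactly the given $C_1 \setminus \Delta \subseteq C_2 \setminus \Delta$, whereas the second condition of $\tric \potra \trib$ would require $C_2 \setminus \Delta \subseteq C_1 \setminus \Delta$, contradicting the strictness of $C_1 \setminus \Delta \subsetneq C_2 \setminus \Delta$. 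Either way $\trib \potra \tric$ holds and $\tric \potra \trib$ does not, that is, $\trib \spotra \tric$.

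This argument is essentially bookkeeping, so no single step is a serious obstacle; the one point requiring care is the equal-difference case of the \emph{only if} direction, where one must observe that the failure of $\tric \potra \trib$ cannot come from its first condition and therefore must come from the second, which is exactly what turns the inclusion supplied by $\trib \potra \tric$ into a strict one.
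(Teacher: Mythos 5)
Your proof is correct and follows essentially the same route as the paper's: both unfold $\trib \spotra \tric$ into ``$\trib \potra \tric$ and not $\tric \potra \trib$'' and then split on whether $(\twid_1 \div \twib) = (\twid_2 \div \twib)$, the only case where the first-component comparison matters. Your write-up merely spells out the case analysis that the paper compresses into ``it is not difficult to verify,'' including the key observation that when the second-component differences coincide, the failure of $\tric \potra \trib$ must come from its second condition, which upgrades the inclusion to a strict one.
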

\begin{proof}
	By definition, $\trib \spotra \tric$ holds if and only if $\trib \potra
	\tric$ and it is not the case that $\tric \potra \trib$. This in turn holds
	if and only if the following two conditions hold
	\begin{enumerate}[1.]
		\item $(\twid_1 \div \twib) \subseteq (\twid_2 \div \twib)$;

		\item If $(\twid_1 \div \twib) = (\twid_2 \div \twib)$, then $(\twic_1
			\div \twia) \setminus \Delta \subseteq (\twic_2 \div \twia) \setminus
			\Delta$ where $\Delta = \twid_1 \div \twib$.
	\end{enumerate}
	and one of the following conditions also holds:
	\begin{enumerate}[i)]
		\item $(\twid_2 \div \twib) \nsubseteq (\twid_1 \div \twib)$, or

		\item $(\twid_2 \div \twib) = (\twid_1 \div \twib)$ and $(\twic_2 \div
			\twia) \setminus \Delta \nsubseteq (\twic_1 \div \twia) \setminus
			\Delta$ where $\Delta = \twid_2 \div \twib$.
	\end{enumerate}
	It is not difficult to verify that conditions 1., 2.\ and i) are together
	equivalent to a) and that conditions 1., 2.\ and ii) are together equivalent
	to b). This concludes our proof.
\end{proof}

\begin{proposition}
	\label{prop:oasa is well-defined}
	The assignment $\oasa$ is well-defined.
\end{proposition}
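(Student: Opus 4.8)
The plan is to exhibit a concrete rule update operator characterised by $\oasa$. By the definition of ``characterised by'', it suffices to show that for all programs $\prga$, $\prgu$ the set
\[
	\bigcup_{\tria \in \modse{\prga}} \min \left( \modse{\prgu}, \potra \right)
\]
is well-defined in the sense of Definition~\ref{def:se:well-defined}: then by Proposition~\ref{prop:se:well-defined} it equals $\modse{\prg}$ for some program $\prg$, and fixing one such $\prg$ (say $\prg = \synt{\dotsb}$) as the value of $\prga \uopr \prgu$ produces an operator $\uopr$ satisfying the characterisation identity. Since any union of well-defined sets is again well-defined, and since $\modse{\prga}$ is itself well-defined, so that $\tria \in \modse{\prga}$ implies $\tria^* \in \modse{\prga}$, the displayed union contains, for every $\tria$, both $\min(\modse{\prgu}, \potra)$ and $\min(\modse{\prgu}, \potrsa)$, where $\potrsa$ denotes the preorder that $\oasa$ assigns to $\tria^*$. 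Hence the whole union is well-defined as soon as, for every well-defined set $\stri$ and every $\tria$, the combined set $\min(\stri, \potra) \cup \min(\stri, \potrsa)$ is well-defined. This combined set is what I would analyse.

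The crux is two observations read off from Lemma~\ref{lemma:oasa strict}. Write $\tria = \twiab$ and a generic \SE-interpretation as $\trib = \tpl{\twic, \twid}$, so $\trib^* = \tpl{\twid, \twid}$. First, the \emph{primary} comparison, i.e.\ the subset comparison of $\twid \div \twib$ in clause~a) of Lemma~\ref{lemma:oasa strict}, depends only on the second component $\twib$ of the reference, which is identical for $\tria$ and for $\tria^* = \tpl{\twib, \twib}$; moreover $\trib$ and $\trib^*$ have the same second component $\twid$ and hence the same primary value $\twid \div \twib$. Thus $\trib$ and $\trib^*$ are indistinguishable under the primary comparison, with respect to both $\potra$ and $\potrsa$. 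Second, the \emph{secondary} key of $\trib^*$ with respect to $\tria^*$: here the element's first component is $\twid$ while the reference's first component is also $\twib$, so this key is $(\twid \div \twib) \setminus \Delta = \Delta \setminus \Delta = \emptyset$, where $\Delta = \twid \div \twib$. Hence $\trib^*$ attains the least possible secondary key with respect to $\potrsa$ among all \SE-interpretations sharing its primary value.

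Combining these, suppose $\trib \in \min(\stri, \potra) \cup \min(\stri, \potrsa)$. By Lemma~\ref{lemma:oasa strict}, minimality under either reference forces the primary value $\twid \div \twib$ of $\trib$ to be $\subseteq$-minimal among the primary values realised in $\stri$; by the first observation the same holds for $\trib^*$, which lies in $\stri$ because $\stri$ is well-defined and has the same primary value. So no element of $\stri$ can strictly beat $\trib^*$ on the primary key. By the second observation the secondary key of $\trib^*$ with respect to $\tria^*$ is $\emptyset$, so no element can strictly beat it on the secondary key either; by Lemma~\ref{lemma:oasa strict} (with reference $\tria^*$) this yields $\trib^* \in \min(\stri, \potrsa)$. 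Therefore $\trib^*$ also belongs to $\min(\stri, \potra) \cup \min(\stri, \potrsa)$, so this combined set is well-defined, which completes the reduction.

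The main obstacle is conceptual rather than computational: the individual sets $\min(\stri, \potra)$ need \emph{not} be well-defined, so one cannot argue termwise. The real work is to notice that pairing each $\tria \in \modse{\prga}$ with $\tria^*$, which is available precisely because $\modse{\prga}$ is well-defined, repairs this, and to isolate the asymmetric mechanism behind the repair, namely that $\trib^*$ is \emph{automatically} minimal with respect to the reference $\tria^*$ whenever $\trib$ is minimal with respect to $\tria$. Once this is seen, everything else is a direct reading of Lemma~\ref{lemma:oasa strict}.
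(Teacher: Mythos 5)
Your proof is correct and takes essentially the same approach as the paper's: the same reduction of well-definedness of $\oasa$ to well-definedness of $\min(\stri, \potra) \cup \min(\stri, \potrsa)$ for every well-defined $\stri$ and every $\tria$, followed by showing $\trib^* \in \min(\stri, \potrsa)$ using exactly the two facts you isolate from Lemma~\ref{lemma:oasa strict} (the primary comparison depends only on second components, and the secondary key of $\trib^*$ relative to $\tria^*$ is $\emptyset$). The only difference is presentational: the paper splits into the cases $\trib \in \min(\stri, \potra)$ (handled by contradiction) and $\trib \in \min(\stri, \potrsa)$ (handled via $\trib^* \potrsa \trib$ and transitivity), whereas you unify both under the primary-minimality observation, which is a slightly cleaner write-up of the same argument.
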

\begin{proof*}
	By definition we need to show that there is a rule update operator
	$\uopr$ such that for all programs $\prga$, $\prgu$,
	\[
		\modse{\prga \uopr \prgu}
		=
		\bigcup_{\tria \in \modse{\prga}}
		\min \left( \modse{\prgu}, \potra \right) \enspace.
	\]
	This holds if and only if for every well-defined set of \SE-interpretations
	$\stri$ and every \SE-interpretation $\tria$, the set of \SE-interpretations
	\begin{equation}
		\label{eq:proof:oasa is well-defined:1}
		\min \left( \stri, \potra \right)
		\cup
		\min \left( \stri, \potrsa \right)
	\end{equation}
	is well-defined. Suppose that $\trib$ belongs to \eqref{eq:proof:oasa is
	well-defined:1}. We need to demonstrate that $\trib^*$ also belongs to
	\eqref{eq:proof:oasa is well-defined:1}. We consider two cases:
	\begin{enumerate}[(a)]
		\item Suppose that $\trib \in \min(\stri, \potra)$. If $\trib^*$ belongs
			to $\min(\stri, \potrsa)$, then we are finished. On the other hand, if
			$\trib^*$ does not belong to $\min(\stri, \potrsa)$, then there must be
			some $\tric \in \stri$ such that $\tric \spotrsa \trib^*$. Let $\trib =
			\tpl{\twic_1, \twid_1}, \tric = \tpl{\twic_2, \twid_2}$ and $\tria =
			\twiab$. By Lemma~\ref{lemma:oasa strict} we know that $\tric \spotrsa
			\trib^*$ holds if and only if one of the following conditions is
			satisfied:
			\begin{enumerate}[a)]
				\item $(\twid_2 \div \twib) \subsetneq (\twid_1 \div \twib)$, or

				\item $(\twid_2 \div \twib) = (\twid_1 \div \twib)$ and $(\twic_2 \div
					\twib) \setminus \Delta \subsetneq (\twid_1 \div \twib) \setminus
					\Delta$ where $\Delta = \twid_2 \div \twib$.
			\end{enumerate}
			If a) is satisfied, then Lemma~\ref{lemma:oasa strict} implies that $\tric
			\spotra \trib$ which is in conflict with the assumption that $\trib \in
			\min(\stri, \potra)$. So b) must hold. But in that case we infer that
			$(\twic_2 \div \twib) \setminus \Delta$ is a proper subset of
			\[
				(\twid_1 \div \twib) \setminus \Delta
				=
				(\twid_1 \div \twib) \setminus (\twid_1 \div \twib)
				=
				\emptyset
				\enspace,
			\]
			which is impossible.

		\item Suppose that $\trib \in \min(\stri, \potrsa)$ and let $\tria =
			\twiab$, $\trib = \tpl{\twic, \twid}$. First we show that $\trib^*
			\potrsa \trib$ holds -- for this, the following conditions need to be
			satisfied:
			\begin{enumerate}[1.]
				\item $(\twid \div \twib) \subseteq (\twid \div \twib)$;

				\item If $(\twid \div \twib) = (\twid \div \twib)$, then $(\twid \div
					\twib) \setminus \Delta \subseteq (\twic \div \twib) \setminus
					\Delta$ where $\Delta = \twid \div \twib$.
			\end{enumerate}
			It is not difficult to verify that both conditions hold.

			Thus, since $\trib^* \potrsa \trib$, there can be no $\tric \in \stri$
			with $\tric \spotrsa \trib^*$ because by transitivity we would obtain
			$\tric \spotrsa \trib$ which would be in conflict with the assumption
			that $\trib \in \min(\stri, \potrsa)$. So $\trib^* \in \min(\stri,
			\potrsa)$ and our proof is finished. \qedhere
	\end{enumerate}
\end{proof*}

\begin{proposition}
	\label{prop:oasa is faithful}
	The assignment $\oasa$ is faithful.
\end{proposition}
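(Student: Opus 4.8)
The plan is to unfold the definition of faithfulness for the specific assignment $\oasa$ and reduce it to a direct application of Lemma~\ref{lemma:oasa strict}. Concretely, I must show that for every \SE-interpretation $\tria = \twiab$ and every \SE-interpretation $\trib = \tpl{\twic, \twid}$ with $\trib \neq \tria$, the strict relation $\tria \spotra \trib$ holds. So first I would fix such $\tria$ and $\trib$ and observe that $\trib \neq \tria$ means precisely that $\twic \neq \twia$ or $\twid \neq \twib$.

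The key observation is that Lemma~\ref{lemma:oasa strict} already packages a clean criterion for $\spotra$, and that when the candidate smaller element is $\tria$ itself the relevant self-differences collapse. Applying the lemma with base point $\tria$, first argument $\tria = \tpl{\twia, \twib}$, and second argument $\trib = \tpl{\twic, \twid}$, the comparison quantities become $\twib \div \twib = \emptyset$ and $\twia \div \twia = \emptyset$. Thus condition a) of the lemma reads $\emptyset \subsetneq (\twid \div \twib)$, i.e.\ $\twid \neq \twib$, while condition b) reads $\twid = \twib$ together with $\emptyset \subsetneq (\twic \div \twia)$, i.e.\ $\twic \neq \twia$ (here $\Delta = \emptyset$, so the set-difference-by-$\Delta$ operations are vacuous).

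It then remains to carry out the case split dictated by $\trib \neq \tria$. If $\twid \neq \twib$, then condition a) is satisfied and $\tria \spotra \trib$ follows immediately. Otherwise $\twid = \twib$, and since $\trib \neq \tria$ now forces $\twic \neq \twia$, condition b) is satisfied and again $\tria \spotra \trib$. In both cases the conclusion holds, which establishes that $\oasa$ is faithful.

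I do not expect a genuine obstacle here: the entire content is that the ``distance from $\tria$'' of $\tria$ itself is trivial, both symmetric differences being empty, so every other \SE-interpretation lies strictly farther. The only point requiring a little care is the bookkeeping of the two components under $\oasa$ — making sure the second component is compared first and that $\Delta$ degenerates to $\emptyset$ when measuring from $\tria$ — but since Lemma~\ref{lemma:oasa strict} already encapsulates the strict comparison, the proof reduces to the two short cases above.
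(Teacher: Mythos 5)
Your proof is correct and follows essentially the same route as the paper's: both apply Lemma~\ref{lemma:oasa strict} with $\tria$ as the first argument so that the self-differences $\twib \div \twib$ and $\twia \div \twia$ collapse to $\emptyset$ (and $\Delta = \emptyset$), and then split on whether $\twid = \twib$, using $\trib \neq \tria$ to force $\twic \neq \twia$ in the second case. No gaps; the bookkeeping you flag is handled exactly as in the paper.
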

\begin{proof*}
	Take some \SE-interpretations $\tria = \twiab$, $\trib = \tpl{\twic, \twid}$
	such that $\trib \neq \tria$. We need to show that $\tria \spotra \trib$. By
	Lemma~\ref{lemma:oasa strict} this holds if and only if one of the following
	conditions is satisfied:
	\begin{enumerate}[a)]
		\item $(\twib \div \twib) \subsetneq (\twid \div \twib)$, or

		\item $(\twib \div \twib) = (\twid \div \twib)$ and $(\twia \div \twia)
			\setminus \Delta \subsetneq (\twic \div \twia) \setminus \Delta$ where
			$\Delta = \twib \div \twib$.
	\end{enumerate}
	We consider two cases:
	\begin{enumerate}[i)]
		\item If $\twid \div \twib = \emptyset$, then $\twid = \twib$ and since
			$\trib \neq \tria$, we conclude that $\twic \neq \twia$. Consequently,
			the second condition is satisfied because $\twia \div \twia = \twib \div
			\twib = \emptyset$ and $\twic \div \twia$ is non-empty.

		\item If $\twid \div \twib \neq \emptyset$, then a) holds since $\twib
			\div \twib = \emptyset$. \qedhere
	\end{enumerate}
\end{proof*}

\begin{proposition}
	\label{prop:oasa is organised}
	The assignment $\oasa$ is organised.
\end{proposition}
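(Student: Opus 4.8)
The plan is to exploit that both orders appearing in the definition of \emph{organised}, namely $\potra$ and $\potrsa$, are lexicographic refinements sharing the same primary criterion, and to reduce the statement to a short case analysis on a single ``level'' of that criterion. Write $\tria = \twiab$, so that $\tria^* = \twibb$. By Lemma~\ref{lemma:oasa strict}, both $\trib \spotra \tric$ and $\trib \spotrsa \tric$ first compare the second-component differences $(\twid_1 \div \twib)$ and $(\twid_2 \div \twib)$ of $\trib = \tpl{\twic_1, \twid_1}$ and $\tric = \tpl{\twic_2, \twid_2}$, and only when these coincide with a common value $\Delta$ do they break the tie: $\potra$ by comparing $(\twic_1 \div \twia) \setminus \Delta$ with $(\twic_2 \div \twia) \setminus \Delta$, and $\potrsa$ by comparing $(\twic_1 \div \twib) \setminus \Delta$ with $(\twic_2 \div \twib) \setminus \Delta$.

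First I would fix $\trib = \tpl{\twic_1, \twid_1}$ and the well-defined sets $\stria, \strib$ as in the definition, and put $\Delta = (\twid_1 \div \twib)$. Membership of $\trib$ in $\min(\stria, \potra) \cup \min(\stria, \potrsa)$ rules out any element of $\stria$ with a strictly smaller second-component difference, so $\Delta$ is $\subseteq$-minimal among the second-component differences occurring in $\stria$; the same holds for $\strib$ and therefore for $\stria \cup \strib$. Hence no element can dominate $\trib$ through the primary criterion, and the whole question reduces to the ``level set'' consisting of the interpretations whose second-component difference equals $\Delta$. Because $\tri$ and $\tri^*$ have the same second component, this level set is again closed under $\tri \mapsto \tri^*$, i.e.\ well-defined, inside both $\stria$ and $\strib$, and it contains $\trib$.

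The crux is the following computation: for any $\tric = \tpl{\twic, \twid}$ in the level set, its dual $\tric^* = \tpl{\twid, \twid}$ has $\potrsa$-tie-breaker value $(\twid \div \twib) \setminus \Delta = \Delta \setminus \Delta = \emptyset$. Thus in any non-empty well-defined level set the $\subseteq$-least $\potrsa$-tie-breaker value attained is $\emptyset$, so an element is $\potrsa$-minimal there precisely when its own tie-breaker value $(\twic_1 \div \twib) \setminus \Delta$ is empty. Crucially, this is a property of $\trib$ alone, independent of whether we test it against $\stria$ or against $\strib$. This dissolves the only delicate configuration. If $(\twic_1 \div \twib) \setminus \Delta = \emptyset$, then $\trib$ is $\potrsa$-minimal both in the level set of $\stria$ and in that of $\strib$, hence in their union. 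Otherwise $\trib$ is $\potrsa$-minimal in neither, so the hypothesis forces $\trib$ to be $\potra$-minimal in both level sets, and again it is $\potra$-minimal in their union.

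The remaining ingredient is the elementary monotonicity fact that an element which is $\subseteq$-minimal for a fixed tie-breaker inside each of two sets is also minimal inside their union (passing to the union introduces no new dominators). Combining this with the minimality of $\Delta$ in $\stria \cup \strib$ yields $\trib \in \min(\stria \cup \strib, \potra) \cup \min(\stria \cup \strib, \potrsa)$, which is exactly the conclusion. I expect the main obstacle to be the ``mixed'' configuration in which $\trib$ appears minimal via $\potra$ in one set and via $\potrsa$ in the other; the computation $(\twid \div \twib) \setminus \Delta = \emptyset$, which makes $\potrsa$-minimality over a well-defined level set a set-independent condition on $\trib$, is precisely what excludes it, and this is where well-definedness of $\stria$ and $\strib$ — closure under $\tri \mapsto \tri^*$ — is essential.
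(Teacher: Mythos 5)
Your proof is correct, and its architecture genuinely differs from the paper's. The paper proceeds by contraposition: assuming $\trib$ is minimal in $\stria \cup \strib$ under neither order, it extracts dominators $\tric_1 \spotra \trib$ and $\tric_2 \spotrsa \trib$, reduces without loss of generality to $\tric_1 \in \stria$ and $\tric_2 \in \strib$, and in the only non-trivial case --- where $\tric_2$ dominates through the tie-breaker, so that the $\potrsa$-tie-breaker of $\trib$ is non-empty --- it converts the $\potra$-dominator into a $\potrsa$-dominator by passing to $\tric_1^* \in \stria$ (using well-definedness of $\stria$), thereby exhibiting both kinds of domination inside a single set. You argue forward instead: after noting that $\potra$ and $\potrsa$ share their primary criterion, you restrict attention to the level set of $\Delta$ and prove that, over a well-defined level set, $\potrsa$-minimality is an intrinsic, set-independent property of $\trib$ (emptiness of its tie-breaker), because every dual $\tric^*$ realises the value $\emptyset$; the resulting dichotomy either yields $\potrsa$-minimality in the union outright or forces $\potra$-minimality in both sets, which passes to unions for trivial reasons. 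The pivotal computation is the same in both arguments --- a starred interpretation at its own level has $\potrsa$-tie-breaker $(\twid \div \twib) \setminus \Delta = \emptyset$ --- but it is deployed differently: the paper uses it to manufacture a second dominator and keeps the argument short, with no extra machinery; your version spends a little more set-up to make the structural reason visible, namely that the ``mixed'' configuration (minimal via $\potra$ in one set, via $\potrsa$ in the other) can never be the critical case, and in exchange the only union-specific step left is the elementary fact that minimality with respect to a single fixed order is preserved when passing to a union of sets containing the element.
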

\begin{proof*}
	Recall that by definition $\oasa$ is organised if for all
	\SE-interpretations $\tria$, $\trib$ and all well-defined sets of
	\SE-interpretations $\stria, \strib$ the following condition is satisfied:
	\begin{align*}
		& \text{If }
		\trib \in \min ( \stria, \potra )
			\cup \min ( \stria, \potrsa )
		\text{ and }
		\trib \in \min ( \strib, \potra )
			\cup \min ( \strib, \potrsa ), \\
		& \text{then }
		\trib \in \min ( \stria \cup \strib, \potra )
			\cup \min (\stria \cup \strib, \potrsa ).
	\end{align*}
	Suppose that $\trib \notin \min(\stria \cup \strib, \potra) \cup
	\min(\stria \cup \strib, \potrsa)$. We need to show that at least one of
	the following holds:
	\begin{enumerate}[i)]
		\item $\trib \notin \min(\stria, \potra) \cup \min(\stria, \potrsa)$;

		\item $\trib \notin \min(\strib, \potra) \cup \min(\strib, \potrsa)$.
	\end{enumerate}

	If $\trib \notin \stria$, then i) is trivially satisfied. Similarly, if
	$\trib \notin \strib$, then ii) is trivially satisfied. So we can assume
	that $\trib \in \stria \cap \strib$. It follows from the assumption that
	there must be some $\tric_1, \tric_2 \in \stria \cup \strib$ such that
	$\tric_1 \spotra \trib$ and $\tric_2 \spotrsa \trib$. If $\tric_1$ and
	$\tric_2$ both belong to $\stria$, then i) is satisfied; if they both belong
	to $\strib$, then ii) is satisfied. So let's assume, without loss of
	generality, that $\tric_1 \in \stria$ and $\tric_2 \in \strib$.
	Furthermore, let $\tria = \twiab$, $\trib = \tpl{\twic, \twid}$, $\tric_1 =
	\tpl{\twic_1, \twid_1}$ and $\tric_2 = \tpl{\twic_2, \twid_2}$. It follows
	from $\tric_2 \spotrsa \trib$ and Lemma~\ref{lemma:oasa strict} that we need
	to consider two cases:
	\begin{enumerate}[a)]
		\item If $(\twid_2 \div \twib) \subsetneq (\twid \div \twib)$, then by
			Lemma~\ref{lemma:oasa strict} we also have $\tric_2 \spotra \trib$ and,
			consequently, ii) is satisfied.

		\item If $(\twid_2 \div \twib) = (\twid \div \twib)$ and $(\twic_2 \div
			\twib) \setminus \Delta \subsetneq (\twic \div \twib) \setminus \Delta$
			where $\Delta = \twid_2 \div \twib$, then it follows that $(\twic \div
			\twib) \setminus \Delta \neq \emptyset$ and by using $\Delta = \twid_2
			\div \twib = \twid \div \twib$ we obtain
			\begin{equation}
				\label{eq:proof:oasa is organised:1}
				(\twic \div \twib) \setminus (\twid \div \twib)
				\neq
				\emptyset
				\enspace.
			\end{equation}
			Furthermore, from $\tric_1 \spotra \trib$ we know that one of the
			following cases occurs:
			\begin{enumerate}[a')]
				\item $(\twid_1 \div \twib) \subsetneq (\twid \div \twib)$, or

				\item $(\twid_1 \div \twib) = (\twid \div \twib)$ and $(\twic_1 \div
					\twia) \setminus \Delta \subsetneq (\twic \div \twia) \setminus
					\Delta$, where $\Delta = \twid_1 \div \twib$.
			\end{enumerate}
			We will show that $\tric_1^* \spotrsa \trib$. By Lemma~\ref{lemma:oasa
			strict} this holds if and only if one of the following conditions is
			satisfied:
			\begin{enumerate}[a$^*$)]
				\item $(\twid_1 \div \twib) \subsetneq (\twid \div \twib)$, or

				\item $(\twid_1 \div \twib) = (\twid \div \twib)$ and $(\twid_1 \div
					\twib) \setminus \Delta \subsetneq (\twic \div \twib) \setminus
					\Delta$, where $\Delta = \twid_1 \div \twib$.
			\end{enumerate}
			We see that a') implies a$^*$) and b') together with
			\eqref{eq:proof:oasa is organised:1} implies b$^*$). Also, since
			$\stria$ is well-defined, we have $\tric_1^* \in \stria$, so i) is
			satisfied. \qedhere
	\end{enumerate}
\end{proof*}

\begin{proposition*}{prop:oasa is well-defined, faithful and organised}
	The assignment $\oasa$ is a well-defined, faithful and organised preorder
	assignment.
\end{proposition*}
\begin{proof}
	[\textit{Proof of Proposition~\ref{prop:oasa is well-defined, faithful and
	organised}}]
	\label{proof:oasa is well-defined, faithful and organised}
	Follows by Propositions~\ref{prop:oasa is preorder assignment},
	\ref{prop:oasa is well-defined}, \ref{prop:oasa is faithful} and
	\ref{prop:oasa is organised}.
\end{proof}

\section{Proofs: Computational Complexity of Operators Characterised by $\oasa$}

\begin{definition}
	[Truth value assigned by \SE-interpretation]
	Let $\tri$ be an \SE-interpretation and $\atm$ an atom. We define the
	\emph{truth value assigned by $\tri$ to $\atm$} as follows:
	\[
		\tri(\atm) = \begin{cases}
			\tr & \text{if } \atm \in \twia \enspace; \\
			\un & \text{if } \atm \in \twib \setminus \twia \enspace; \\
			\fa & \text{if } \atm \in \atms \setminus \twib \enspace.
		\end{cases}
	\]
\end{definition}

\begin{definition}
	[Set of relevant atoms]
	Let $\frma$ be a propositional formula. We inductively define the \emph{set
	of atoms relevant to $\frma$}, denoted by $\atmsof{\frma}$, as follows:
	\begin{itemize}
		\item If $\frma$ is $\top$ or $\bot$, then $\atmsof{\frma} = \emptyset$;
		\item If $\frma$ is an atom $\atm$, then $\atmsof{\frma} = \set{\atm}$;

		\item If $\frma$ is of the form $\lnot \frmb$, then $\atmsof{\frma} =
			\atmsof{\frmb}$;

		\item If $\frma$ is of the form $\frmb_1 \land \frmb_2$, $\frmb_1 \lor
			\frmb_2$, $\frmb_1 \lthen \frmb_2$ or $\frmb_1 \eq \frmb_2$, then
			$\atmsof{\frma} = \atmsof{\frmb_1} \cup \atmsof{\frmb_2}$.
	\end{itemize}
	For a logic program $\prg$, $\atmsof{\prg} = \atmsof{\tofrm{\prg}}$.
\end{definition}

\begin{lemma}
	\label{lemma:irrelevant atoms inertia}
	Let $\prga$, $\prgu$ be programs and $\uopr$ a rule update operator
	characterised by $\oasa$. If $\tric$ belongs to $\min(\modse{\prgu},
	\potra)$ for some $\tria \in \modse{\prga}$, then $\tria(\atma) =
	\tric(\atma)$ for all $\atma \in \atms \setminus \atmsof{\prgu}$.
\end{lemma}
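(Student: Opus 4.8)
The plan is to argue by contradiction via a single-atom modification. Suppose the claim fails, so there is an atom $\atma \in \atms \setminus \atmsof{\prgu}$ with $\tria(\atma) \neq \tric(\atma)$. Writing $\tria = \tpl{\twia, \twib}$ and $\tric = \tpl{\twic, \twid}$, I would construct an \SE-interpretation $\tric'$ that is identical to $\tric$ except that the truth value it assigns to $\atma$ is overwritten to match $\tria(\atma)$; concretely, $\atma$ is placed into or removed from the two components of $\tric'$ exactly as prescribed by $\tria(\atma) \in \set{\tr, \un, \fa}$. Since each of the three truth values corresponds to a consistent membership pattern, $\tric'$ is again a genuine \SE-interpretation. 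The goal is then to show both that $\tric' \in \modse{\prgu}$ and that $\tric' \spotra \tric$, which contradicts $\tric \in \min(\modse{\prgu}, \potra)$.

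The main obstacle is establishing $\tric' \in \modse{\prgu}$. Here I would exploit that $\atma \notin \atmsof{\prgu}$ forces $\atma$ to occur in no rule of $\prgu$ whatsoever --- neither in a head nor in a body, and neither positively nor under default negation. Consequently, changing the status of $\atma$ affects neither the classical-model condition $\twid \ent \prgu$ (because $\tofrm{\prgu}$ contains no occurrence of $\atma$) nor the reduct $\prgu^{\twid}$: the membership tests $\hrln \subseteq \twid$ and $\brln \cap \twid = \emptyset$ that select the reduct rules are insensitive to $\atma$, and the surviving reduct rules $\hrlp \lpif \brlp$ do not mention $\atma$ either. Hence $\prgu^{\twid'} = \prgu^{\twid}$, and by the same token the second \SE-model condition $\twic' \ent \prgu^{\twid'}$ holds if and only if $\twic \ent \prgu^{\twid}$ does. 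By Definition~\ref{def:se-models} it then follows that $\tric' \in \modse{\prgu}$.

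It remains to verify $\tric' \spotra \tric$ using the characterisation of strictness in Lemma~\ref{lemma:oasa strict}, distinguishing two cases according to how $\atma$ witnesses the disagreement. If $\tria$ and $\tric$ differ on the \emph{not-false} status of $\atma$, i.e.\ $\atma \in \twid \div \twib$, then overwriting removes $\atma$ from this symmetric difference and alters no other atom, so $\twid' \div \twib = (\twid \div \twib) \setminus \set{\atma} \subsetneq \twid \div \twib$ and case a) of Lemma~\ref{lemma:oasa strict} applies. Otherwise $\tria$ and $\tric$ agree on the not-false status of $\atma$ but differ on its \emph{true} status, which forces $\set{\tria(\atma), \tric(\atma)} = \set{\tr, \un}$; now $\twid' = \twid$, so $\Delta = \twid' \div \twib = \twid \div \twib$ is unchanged with $\atma \notin \Delta$, while $\twic' \div \twia = (\twic \div \twia) \setminus \set{\atma}$, whence $(\twic' \div \twia) \setminus \Delta \subsetneq (\twic \div \twia) \setminus \Delta$ and case b) applies. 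In both cases $\tric' \spotra \tric$, contradicting minimality of $\tric$. The only genuinely delicate step is the semantic insensitivity argument of the second paragraph; once Lemma~\ref{lemma:oasa strict} is invoked, the order comparison reduces to the elementary bookkeeping just sketched.
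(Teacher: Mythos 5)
Your proof is correct and takes essentially the same route as the paper's: argue by contradiction, overwrite the single atom $\atma$ in $\tric$ with $\tria$'s truth value, note that \SE-modelhood of the modified interpretation is preserved because $\atma \notin \atmsof{\prgu}$, and then apply Lemma~\ref{lemma:oasa strict} in exactly the same two cases (second components differ at $\atma$ versus second components agree and first components differ) to get a strictly smaller \SE-model of $\prgu$, contradicting minimality. The only difference is cosmetic: you spell out the reduct-insensitivity argument for why the modified interpretation remains an \SE-model, a step the paper merely asserts.
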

\begin{proof*}
	We prove by contradiction. Suppose that our assumptions are satisfied and
	$\tria(\atma) \neq \tric(\atma)$ for some $\atma \in \atms \setminus
	\atmsof{\prgu}$. Let the \SE-interpretation $\trib$ be defined as follows:
	\[
		\trib(\atmb) = \begin{cases}
			\tria(\atmb) & \atmb = \atma \enspace; \\
			\tric(\atmb) & \atmb \neq \atma \enspace.
		\end{cases}
	\]
	First note that since $\tric$ is an \SE-model of $\prgu$ and $\trib$ differs
	from $\tric$ only in the truth value assigned to $\atma$, where $\atma
	\notin \atmsof{\prgu}$, it follows that $\trib$ is also an \SE-model of
	$\prgu$.

	Put $\tria = \twiab$, $\trib = \tpl{\twic_1, \twid_1}$ and $\tric =
	\tpl{\twic_2, \twid_2}$. By assumption, $\tria(\atma)
	\neq \tric(\atma)$, so, by the definition of $\trib$, $\trib(\atma) \neq
	\tric(\atm)$. Thus, one of the following cases occurs:
	\begin{enumerate}[a)]
		\item If $\twid_1 \div \twid_2 = \set{\atma}$, then we immediately obtain
			that $(\twid_1 \div \twib) \div (\twid_2 \div \twib) = \set{\atma}$.
			Since $\trib(\atma) = \tria(\atma)$, we conclude that $\atma \notin
			\twid_1 \div \twib$ and it follows that
			\begin{align*}
				& (\twid_1 \div \twib) \setminus (\twid_2 \div \twib) = \emptyset
				&& \text{and}
				&& (\twid_2 \div \twib) \setminus (\twid_1 \div \twib) = \set{\atma}
				\enspace.
			\end{align*}
			Consequently, $\twid_1 \div \twib \subsetneq \twid_2 \div \twib$, so
			$\trib \spotra \tric$, contrary to the assumption that $\tric$ belongs
			to $\min(\modse{\prgu}, \potra)$.

		\item If $\twic_1 \div \twic_2 = \set{\atma}$, then we obtain that
			$(\twic_1 \div \twia) \div (\twic_2 \div \twia) = \set{\atma}$. Since
			$\trib(\atma) = \tria(\atma)$, we conclude that $\atma \notin \twic_1
			\div \twia$ and it follows that
			\begin{align*}
				& (\twic_1 \div \twia) \setminus (\twic_2 \div \twia) = \emptyset
				&& \text{and}
				&& (\twic_2 \div \twia) \setminus (\twic_1 \div \twia) = \set{\atma}
				\enspace.
			\end{align*}
			Furthermore, assuming that the previous case does not occur, it follows
			that $\twid_1 = \twid_2$, so for $\Delta = \twid_1 \div \twib = \twid_2
			\div \twib$ it holds that $\atma \notin \Delta$ because $\tria(\atma) =
			\tric(\atma)$. Consequently, $(\twic_1 \div \twia) \setminus \Delta
			\subsetneq (\twic_2 \div \twia) \setminus \Delta$, so $\trib \spotra
			\tric$, contrary to the assumption that $\tric$ belongs to
			$\min(\modse{\prgu}, \potra)$. \qedhere
	\end{enumerate}
\end{proof*}

\begin{definition}
	[Truth value substitution]
	Let $\tria = \tpl{\twia, \twib}$ be an \SE-interpretation and $\atm$ an
	atom. We define the \SE-interpretations $\trsub{\tria}{\atma}{\tr}$,
	$\trsub{\tria}{\atma}{\un}$ and $\trsub{\tria}{\atma}{\fa}$ as follows:
	\begin{align*}
		\trsub{\tria}{\atma}{\tr}
		&=
		\tpl{\twia \cup \set{\atma}, \twib \cup \set{\atma}}
		\enspace, \\
		\trsub{\tria}{\atma}{\un}
		&=
		\tpl{\twia \setminus \set{\atma}, \twib \cup \set{\atma}}
		\enspace, \\
		\trsub{\tria}{\atma}{\fa}
		&=
		\tpl{\twia \setminus \set{\atma}, \twib \setminus \set{\atma}}
		\enspace.
	\end{align*}
\end{definition}

\begin{lemma}
	\label{lemma:irrelevant atoms order}
	Let $\tria$, $\trib$, $\tric$ be \SE-interpretations, $\atma$ an atom such
	that $\tria(\atma) = \tric(\atma)$ and $\val$ a truth value. Then,
	\begin{align*}
		& \trib \spotra \tric
		&& \text{implies}
		&& \trsub{\trib}{\atma}{\val}
			\spoa{\trsub{\tria}{\atm}{\val}}
			\trsub{\tric}{\atma}{\val}
		\enspace.
	\end{align*}
\end{lemma}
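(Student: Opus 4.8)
The plan is to reduce everything to the syntactic characterisation of the strict order in Lemma~\ref{lemma:oasa strict} and then to track exactly how the substitution $\trsub{\cdot}{\atma}{\val}$ acts on the relevant symmetric differences. Write $\tria = \twiab$, $\trib = \tpl{\twic_1, \twid_1}$, $\tric = \tpl{\twic_2, \twid_2}$, and abbreviate the substituted interpretations as $\tria' = \trsub{\tria}{\atma}{\val} = \tpl{\twia', \twib'}$, $\trib' = \trsub{\trib}{\atma}{\val} = \tpl{\twic_1', \twid_1'}$ and $\tric' = \trsub{\tric}{\atma}{\val} = \tpl{\twic_2', \twid_2'}$. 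By Lemma~\ref{lemma:oasa strict} it suffices to verify that one of the two strictness conditions for $\trib' \spoa{\tria'} \tric'$ holds, and I will derive it from the corresponding condition for $\trib \spotra \tric$.

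First I would establish the central identities governing the substitution. Inspecting the three cases $\val \in \set{\tr, \un, \fa}$ of the definition of $\trsub{\cdot}{\atma}{\val}$, in every case the membership of $\atma$ in the first component is forced to agree across $\tria'$, $\trib'$, $\tric'$ (namely, $\atma$ lies in the first component iff $\val = \tr$), and likewise its membership in the second component agrees across all three (namely, $\atma$ lies in the second component iff $\val \neq \fa$). Since substitution leaves the membership of every atom other than $\atma$ untouched, the atom $\atma$ simply drops out of each symmetric difference taken against the substituted reference; concretely $\twid_i' \div \twib' = (\twid_i \div \twib) \setminus \set{\atma}$ and $\twic_i' \div \twia' = (\twic_i \div \twia) \setminus \set{\atma}$ for $i \in \set{1, 2}$. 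Next I would exploit the hypothesis $\tria(\atma) = \tric(\atma)$: it says precisely that $\atma \in \twia$ iff $\atma \in \twic_2$ and $\atma \in \twib$ iff $\atma \in \twid_2$, whence $\atma \notin (\twic_2 \div \twia)$ and $\atma \notin (\twid_2 \div \twib)$. In particular the two differences attached to $\tric$ are unchanged by substitution: $\twid_2' \div \twib' = \twid_2 \div \twib$ and $\twic_2' \div \twia' = \twic_2 \div \twia$.

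With these identities in hand I would split on the two cases delivered by Lemma~\ref{lemma:oasa strict} for $\trib \spotra \tric$. In case a), where $\twid_1 \div \twib \subsetneq \twid_2 \div \twib$, the fact that $\atma \notin \twid_2 \div \twib$ forces $\atma \notin \twid_1 \div \twib$, so removing $\atma$ alters neither side and $\twid_1' \div \twib' \subsetneq \twid_2' \div \twib'$, which is condition a) for the primed order. In case b), where $\twid_1 \div \twib = \twid_2 \div \twib = \Delta$ and $(\twic_1 \div \twia) \setminus \Delta \subsetneq (\twic_2 \div \twia) \setminus \Delta$, again $\atma \notin \Delta$, so $\twid_1' \div \twib' = \twid_2' \div \twib'$ and the offset $\Delta' = \twid_1' \div \twib'$ equals $\Delta$; it then remains to check $(\twic_1' \div \twia') \setminus \Delta \subsetneq (\twic_2' \div \twia') \setminus \Delta$, which follows from $(\twic_1' \div \twia') \setminus \Delta = ((\twic_1 \div \twia) \setminus \set{\atma}) \setminus \Delta \subseteq (\twic_1 \div \twia) \setminus \Delta \subsetneq (\twic_2 \div \twia) \setminus \Delta = (\twic_2' \div \twia') \setminus \Delta$.

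The main obstacle is not any isolated step but the bookkeeping needed to confirm that \emph{strictness} survives the passage to the substituted interpretations. The crux is the elementary set-theoretic principle that $A \subseteq B \subsetneq C$ implies $A \subsetneq C$, which is what lets the removal of $\atma$ from the smaller side in case b) preserve the proper inclusion; the supporting work is the uniform verification of the drop-out identities across the three values of $\val$ and the careful use of $\tria(\atma) = \tric(\atma)$ to pin down that $\atma$ lies outside the two differences associated with $\tric$.
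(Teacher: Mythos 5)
Your proof is correct and follows essentially the same route as the paper's: both reduce the claim to the characterisation of the strict order in Lemma~\ref{lemma:oasa strict}, use the hypothesis $\tria(\atma) = \tric(\atma)$ to place $\atma$ outside the two symmetric differences attached to $\tric$, and then split on cases a) and b) of that lemma. The only difference is one of bookkeeping: you treat the three truth values uniformly via the drop-out identity $X' \div Y' = (X \div Y) \setminus \set{\atma}$ (and close case b) with the $A \subseteq B \subsetneq C$ principle), whereas the paper enumerates $\val = \tr, \un, \fa$ separately with explicit equations --- a welcome streamlining, but not a different argument.
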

\begin{proof*}
	Put $\tria = \tpl{\twia, \twib}$, $\trib = \tpl{\twic_1, \twid_1}$ and
	$\tric = \tpl{\twic_2, \twid_2}$. The assumption that $\tria(\atma) =
	\tric(\atma)$ implies that
	\begin{align}
		& \atma \notin \twid_2 \div \twib
		&& \text{and}
		&& \atma \notin \twic_2 \div \twia
		\label{eq:lemma:irrelevant atoms order:1}
		\enspace.
	\end{align}
	Furthermore, if $\trib \spotra \tric$, then, by Lemma~\ref{lemma:oasa strict},
	one of the following two cases occurs:
	\begin{enumerate}[a)]
		\item If $(\twid_1 \div \twib) \subsetneq (\twid_2 \div \twib)$, then it
			follows from \eqref{eq:lemma:irrelevant atoms order:1} that $\atma
			\notin \twid_1 \div \twib$ and we obtain the following:
			\begin{align}
				(\twid_1 \cup \set{\atma}) \div (\twib \cup \set{\atma})
				=
				\twid_1 \div \twib
				&\subsetneq
				\twid_2 \div \twib
				=
				(\twid_2 \cup \set{\atma}) \div (\twib \cup \set{\atma})
				\label{eq:lemma:irrelevant atoms order:2}
				\enspace, \\
				(\twid_1 \setminus \set{\atma}) \div (\twib \setminus \set{\atma})
				=
				\twid_1 \div \twib
				&\subsetneq
				\twid_2 \div \twib
				=
				(\twid_2 \setminus \set{\atma}) \div (\twib \setminus \set{\atma})
				\label{eq:lemma:irrelevant atoms order:3}
				\enspace.
			\end{align}
			Finally, we need to consider two cases depending on $\val$:
			\begin{enumerate}[(i)]
				\item If $\val = \tr$ or $\val = \un$, then the second components of
					the \SE-interpretations $\trsub{\tria}{\atma}{\val}$,
					$\trsub{\trib}{\atma}{\val}$ and $\trsub{\tric}{\atma}{\val}$ are
					$\twib \cup \set{\atma}$, $\twid_1 \cup \set{\atma}$ and $\twid_2
					\cup \set{\atma}$, respectively. Hence, the desired conclusion
					follows from \eqref{eq:lemma:irrelevant atoms order:2} by
					Lemma~\ref{lemma:oasa strict}.

				\item If $\val = \fa$, then the second components of the
					\SE-interpretations $\trsub{\tria}{\atma}{\val}$,
					$\trsub{\trib}{\atma}{\val}$ and $\trsub{\tric}{\atma}{\val}$ are
					$\twib \setminus \set{\atma}$, $\twid_1 \setminus \set{\atma}$ and
					$\twid_2 \setminus \set{\atma}$, respectively. Hence, the desired
					conclusion follows from \eqref{eq:lemma:irrelevant atoms order:3} by
					Lemma~\ref{lemma:oasa strict}.
			\end{enumerate}

		\item If $(\twid_1 \div \twib) = (\twid_2 \div \twib)$ and $(\twic_1 \div
			\twia) \setminus \Delta \subsetneq (\twic_2 \div \twia) \setminus
			\Delta$ where $\Delta = \twid_1 \div \twib$, then $\twid_1 = \twid_2$
			and it follows from \eqref{eq:lemma:irrelevant atoms order:1} that
			$\atma \notin \Delta$ as well as $\atma \notin \twic_1 \div \twia$, so
			we obtain the following:
			\begin{align}
				\begin{split}
					(\twid_1 \cup \set{\atma}) \div (\twib \cup \set{\atma})
					&=
					(\twid_2 \cup \set{\atma}) \div (\twib \cup \set{\atma})
					= \Delta
					\enspace,
				\end{split}
				\label{eq:lemma:irrelevant atoms order:4}
				\\
				\begin{split}
					(\twid_1 \setminus \set{\atma}) \div (\twib \setminus \set{\atma})
					&=
					(\twid_2 \setminus \set{\atma}) \div (\twib \setminus \set{\atma})
					= \Delta
					\enspace,
				\end{split}
				\label{eq:lemma:irrelevant atoms order:5}
				\\
				\begin{split}
					[(\twic_1 \cup \set{\atma}) \div (\twia \cup \set{\atma})]
					\setminus \Delta
					&=
					(\twic_1 \div \twia) \setminus \Delta
					\\
					\subsetneq
					(\twic_2 \div \twia) \setminus \Delta
					&=
					[(\twic_2 \cup \set{\atma}) \div (\twia \cup \set{\atma})]
					\setminus \Delta
					\enspace,
				\end{split}
				\label{eq:lemma:irrelevant atoms order:6}
				\\
				\begin{split}
					[(\twic_1 \setminus \set{\atma}) \div (\twia \setminus \set{\atma})]
					\setminus \Delta
					&=
					(\twic_1 \div \twia) \setminus \Delta
					\\
					\subsetneq
					(\twic_2 \div \twia) \setminus \Delta
					&=
					[(\twic_2 \setminus \set{\atma}) \div (\twia \setminus \set{\atma})]
					\setminus \Delta
					\enspace.
				\end{split}
				\label{eq:lemma:irrelevant atoms order:7}
			\end{align}
			Finally, we need to use Lemma~\ref{lemma:oasa strict}, considering three
			cases depending on $\val$:
			\begin{enumerate}[(i)]
				\item If $\val = \tr$, then the desired conclusion follows from
					\eqref{eq:lemma:irrelevant atoms order:4} and
					\eqref{eq:lemma:irrelevant atoms order:6}.

				\item If $\val = \un$, then the desired conclusion follows from
					\eqref{eq:lemma:irrelevant atoms order:4} and
					\eqref{eq:lemma:irrelevant atoms order:7}.

				\item If $\val = \fa$, then the desired conclusion follows from
					\eqref{eq:lemma:irrelevant atoms order:5} and
					\eqref{eq:lemma:irrelevant atoms order:7}. \qedhere
			\end{enumerate}
	\end{enumerate}
\end{proof*}

\begin{lemma}
	\label{lemma:irrelevant atoms}
	Let $\prga$, $\prgu$ be programs, $\atma$ an atom with $\atma \notin
	\atmsof{\prga} \cup \atmsof{\prgu}$, $\uopr$ a rule update operator
	characterised by $\oasa$ and $\tric$, $\tric'$ be \SE-interpretations such
	that $\tric = \trsub{\tric'}{\atma}{\val}$ for some truth value $\val$.
	Then,
	\begin{align*}
		& \tric \in \modse{\prga \uopr \prgu}
		&& \text{if and only if}
		&& \tric' \in \modse{\prga \uopr \prgu}
		\enspace.
	\end{align*}
\end{lemma}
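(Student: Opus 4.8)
The plan is to unfold the characterisation $\modse{\prga \uopr \prgu} = \bigcup_{\tria \in \modse{\prga}} \min(\modse{\prgu}, \potra)$ and show that replacing the truth value of the irrelevant atom $\atma$ in a witnessing pair (a model of $\prga$ together with a $\potra$-minimal model of $\prgu$) produces another such witnessing pair. The first thing I would record is the elementary observation that, since $\atma$ occurs in neither program, substituting any truth value for $\atma$ preserves SE-model-hood: for any program $\prg$ with $\atma \notin \atmsof{\prg}$, any SE-interpretation $\tri$ and any truth value $\val$, we have $\tri \in \modse{\prg}$ if and only if $\trsub{\tri}{\atma}{\val} \in \modse{\prg}$. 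This holds because neither $\tofrm{\prg}$ nor the reduct $\prg^\twib$ mentions $\atma$, so changing the membership of $\atma$ in either component of $\tri$ affects neither $\twib \ent \prg$ nor $\twia \ent \prg^\twib$.

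Because $\tric = \trsub{\tric'}{\atma}{\val}$ forces $\tric' = \trsub{\tric}{\atma}{\val'}$ with $\val' = \tric'(\atma)$, the statement is symmetric and it suffices to prove the left-to-right implication. So I would assume $\tric \in \modse{\prga \uopr \prgu}$ and fix $\tria \in \modse{\prga}$ with $\tric \in \min(\modse{\prgu}, \potra)$. Lemma~\ref{lemma:irrelevant atoms inertia}, applicable since $\atma \notin \atmsof{\prgu}$, then gives $\tria(\atma) = \tric(\atma) = \val$. Setting $\val' = \tric'(\atma)$ and $\tria' = \trsub{\tria}{\atma}{\val'}$, the preliminary observation yields $\tria' \in \modse{\prga}$ and $\tric' \in \modse{\prgu}$, so it remains only to verify that $\tric' \in \min(\modse{\prgu}, \poa{\tria'})$, which establishes $\tric' \in \modse{\prga \uopr \prgu}$.

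For minimality I would argue by contradiction, assuming some $\trib' \in \modse{\prgu}$ with $\trib' \spoa{\tria'} \tric'$. Since $\tria'(\atma) = \val' = \tric'(\atma)$, the reference $\tria'$ and the dominated element $\tric'$ agree on $\atma$, so Lemma~\ref{lemma:irrelevant atoms order} applies with value $\val$ and yields $\trsub{\trib'}{\atma}{\val} \spoa{\trsub{\tria'}{\atma}{\val}} \trsub{\tric'}{\atma}{\val}$. I expect the main obstacle to be the bookkeeping of these nested substitutions: because $\tria(\atma) = \val$ one has $\trsub{\tria'}{\atma}{\val} = \trsub{\tria}{\atma}{\val} = \tria$ (the outer substitution overwrites the $\atma$-value and leaves every other atom as in $\tria$), and because $\tric = \trsub{\tric'}{\atma}{\val}$ one has $\trsub{\tric'}{\atma}{\val} = \tric$; hence the displayed relation reads $\trsub{\trib'}{\atma}{\val} \spotra \tric$. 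As $\trsub{\trib'}{\atma}{\val} \in \modse{\prgu}$ by the preliminary observation, this contradicts $\tric \in \min(\modse{\prgu}, \potra)$. Thus $\tric'$ is minimal, completing the forward direction, and the converse follows verbatim after exchanging the roles of $\tric$ and $\tric'$ and replacing $\val$ by $\val'$.
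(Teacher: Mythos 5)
Your proof is correct and follows essentially the same route as the paper's: it invokes Lemma~\ref{lemma:irrelevant atoms inertia} to get $\tria(\atma)=\tric(\atma)=\val$, constructs $\tria'=\trsub{\tria}{\atma}{\val'}$, and uses Lemma~\ref{lemma:irrelevant atoms order} plus the substitution identities $\trsub{\tria'}{\atma}{\val}=\tria$ and $\trsub{\tric'}{\atma}{\val}=\tric$ to contradict minimality of $\tric$. The only (welcome) difference is presentational: you argue membership of $\tric'$ directly and make explicit the fact that $\tric'\in\modse{\prgu}$, which the paper's contradiction-style proof leaves implicit when it asserts the existence of $\trib'$ with $\trib'\spoa{\tria'}\tric'$.
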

\begin{proof}
	We prove the direct implication, the converse one follows by the symmetry of
	the claim.

	Suppose that $\tric \in \modse{\prga \uopr \prgu}$ but $\tric' \notin
	\modse{\prga \uopr \prgu}$. Then there is some \SE-interpretation $\tria \in
	\modse{\prga}$ such that $\tric$ belongs to $\min(\modse{\prgu}, \potra)$.
	It follows from Lemma~\ref{lemma:irrelevant atoms inertia} that
	\[
		\tria(\atma) = \tric(\atma) = \val
		\enspace.
	\]
	Put $\tric'(\atma) = \val'$ and let $\tria' = \trsub{\tria}{\atma}{\val'}$.
	Since $\tria'$ differs from $\tria$ only in the truth value assigned to
	$\atma$ and $\atma \notin \atmsof{\prga}$, it follows that $\tria' \in
	\modse{\prga}$. Thus, there exists some \SE-interpretation $\trib'$ such
	that $\trib' \spoa{\tria'} \tric'$ and by Lemma~\ref{lemma:irrelevant atoms
	order} we conclude that
	\[
		\trsub{\trib'}{\atma}{\val}
		\spoa{\trsub{\tria'}{\atma}{\val}}
		\trsub{\tric'}{\atma}{\val}
		\enspace.
	\]
	It remains to observe that $\trsub{\tria'}{\atma}{\val} = \tria$ and
	$\trsub{\tric'}{\atma}{\val} = \tric$, so for $\trib =
	\trsub{\trib'}{\atma}{\val}$ we have
	\[
		\trib
		\spotra
		\tric
		\enspace.
	\]
	Since $\trib$ differs from $\trib'$ only in the truth value assigned to
	$\atma$ and $\atma \notin \atmsof{\prgu}$, it follows that $\trib \in
	\modse{\prgu}$ -- a conflict with the assumption that $\tric$ belongs to
	$\min(\modse{\prgu}, \potra)$.
\end{proof}

\begin{corollary}
	\label{cor:irrelevant atoms}
	Let $\prga$, $\prgu$ be programs, $\uopr$ a rule update operator
	characterised by $\oasa$ and $\tric$, $\tric'$ be \SE-interpretations such
	that $\tric(\atma) = \tric'(\atma)$ for all $\atma \in \atmsof{\prga} \cup
	\atmsof{\prgu}$. Then,
	\begin{align*}
		& \tric \in \modse{\prga \uopr \prgu}
		&& \text{if and only if}
		&& \tric' \in \modse{\prga \uopr \prgu}
		\enspace.
	\end{align*}
\end{corollary}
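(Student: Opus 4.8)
The plan is to obtain the corollary from Lemma~\ref{lemma:irrelevant atoms} by a finite induction that peels off one irrelevant atom at a time. The guiding observation is that, since $\tric$ and $\tric'$ agree on every atom in $\atmsof{\prga} \cup \atmsof{\prgu}$, the set
\[
	D = \Set{ \atma \in \atms | \tric(\atma) \neq \tric'(\atma) }
\]
on which they disagree is contained in $\atms \setminus (\atmsof{\prga} \cup \atmsof{\prgu})$; that is, every atom witnessing a disagreement is irrelevant to both $\prga$ and $\prgu$. Because $\atms$ is finite, $D$ is finite, which makes $|D|$ a suitable induction parameter.

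First I would treat the base case $|D| = 0$: here $\tric$ and $\tric'$ assign the same truth value to every atom, so $\tric = \tric'$ and the equivalence is trivial. For the inductive step I would fix some $\atma \in D$ and let $\tric''$ be the \SE-interpretation that coincides with $\tric'$ on every atom other than $\atma$ and assigns to $\atma$ the truth value $\tric(\atma)$. Since $\tric''$ is obtained from $\tric'$ by a single truth-value substitution on the atom $\atma$, and $\atma \notin \atmsof{\prga} \cup \atmsof{\prgu}$, Lemma~\ref{lemma:irrelevant atoms} applies and yields
\[
	\tric'' \in \modse{\prga \uopr \prgu}
	\quad \text{if and only if} \quad
	\tric' \in \modse{\prga \uopr \prgu}
	\enspace.
\]
By construction $\tric''$ now agrees with $\tric$ on $\atma$, while on every atom distinct from $\atma$ it still agrees with $\tric'$; hence the disagreement set of $\tric$ and $\tric''$ is exactly $D \setminus \set{\atma}$, of cardinality $|D| - 1$. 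Applying the induction hypothesis to $\tric$ and $\tric''$ gives $\tric \in \modse{\prga \uopr \prgu}$ if and only if $\tric'' \in \modse{\prga \uopr \prgu}$, and chaining the two equivalences closes the induction.

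I expect no genuine obstacle, since all the semantic content already resides in Lemma~\ref{lemma:irrelevant atoms} and the present argument is pure bookkeeping. The two points worth verifying are that each atom selected for substitution is indeed irrelevant --- which is exactly the content of $D \subseteq \atms \setminus (\atmsof{\prga} \cup \atmsof{\prgu})$ --- and that resetting the truth value of $\atma$ to $\tric(\atma)$ produces a legitimate \SE-interpretation, which holds because each of the three cases of the truth-value substitution preserves the inclusion between the two components that an \SE-interpretation must satisfy.
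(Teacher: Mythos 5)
Your proof is correct and takes essentially the same route as the paper's: both reduce the corollary to a finite chain of single-atom applications of Lemma~\ref{lemma:irrelevant atoms}, replacing one irrelevant atom's truth value at a time until $\tric$ is transformed into $\tric'$. The only cosmetic difference is that the paper runs the chain over a fixed enumeration of all atoms outside $\atmsof{\prga} \cup \atmsof{\prgu}$, whereas you induct on the cardinality of the actual disagreement set; the substance is identical.
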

\begin{proof}
	Suppose that
	\[
		\atms \setminus (\atmsof{\prga} \cup \atmsof{\prgu})
		=
		\set{\atma_1, \atma_2, \dotsc, \atma_\lng}
	\]
	and construct a sequence of \SE-interpretations $\tric_0, \tric_1, \dotsc,
	\tric_\lng$ as follows: $\tric_0 = \tric$ and $\tric_{\lia + 1} =
	\trsub{\tric_\lia}{\atma_\lia}{\tric'(\atma_\lia)}$ for all $\lia$ with $0
	\leq \lia < \lng$. Clearly, $\tric_\lng = \tric'$ and
	Lemma~\ref{lemma:irrelevant atoms} can be used $\lng$ times, for each pair
	$(\tric_\lia, \tric_{\lia + 1})$, to infer the desired result.
\end{proof}

\begin{lemma}
	\label{lemma:positive facts remain two-valued}
	Let $\prga$ be a set of facts, $\prgu$ a program such that $\atmsof{\prgu}
	\subseteq \atmsof{\prga}$, $\uopr$ a rule update operator characterised by
	$\oasa$ and $\tric$ an \SE-interpretation from $\modse{\prga \uopr \prgu}$.
	Then for every atom $\atma$ with $(\atma.) \in \prga$ it holds that
	$\tric(\atma) \neq \un$.
\end{lemma}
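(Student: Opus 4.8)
The plan is to argue by contradiction: assuming $\tric(\atma) = \un$, I would produce an \SE-model of $\prgu$ lying strictly below $\tric$ in the relevant order, contradicting minimality. Since $\uopr$ is characterised by $\oasa$, the hypothesis $\tric \in \modse{\prga \uopr \prgu}$ yields some $\tria = \twiab \in \modse{\prga}$ with $\tric \in \min(\modse{\prgu}, \potra)$. Writing $\tric = \tpl{\twic_2, \twid_2}$, the assumption $\tric(\atma) = \un$ reads $\atma \in \twid_2 \setminus \twic_2$. The first thing I would record is that the positive fact $(\atma.) \in \prga$ forces $\atma \in \twia$ in \emph{every} \SE-model of $\prga$: this fact passes unconditionally into the reduct $\prga^{\twib}$, so $\twia \ent \prga^{\twib}$ entails $\atma \in \twia$. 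In particular $\atma$ lies in the first-component difference of $\tric$ but not in that of $\tria$.

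The naive witness — flip $\atma$ to true in $\tric$, i.e.\ $\trsub{\tric}{\atma}{\tr}$ — need not belong to $\modse{\prgu}$ (a reduct rule such as $\atmc \lpif \atma$ can be broken), so I would instead build the witness from the top. Since the set of \SE-models of any program is well-defined, $\tric^* = \tpl{\twid_2, \twid_2} \in \modse{\prgu}$. I then set
\[
	S = (\twib \setminus \twia) \cap \twid_2,
	\qquad
	\trib = \tpl{\twid_2 \setminus S, \twid_2}
	\enspace,
\]
so that $\trib$ agrees with $\tric^*$ on the second component and removes from its first component exactly the atoms of $\twid_2$ that are undefined in $\tria$. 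As $\atma \in \twia$, we have $\atma \notin S$, so $\atma$ remains true in $\trib$.

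Two verifications remain. \textbf{(i)}~$\trib \in \modse{\prgu}$. Here I would use both standing hypotheses together: since $\prga$ is a set of facts, $\atmsof{\prga}$ consists exactly of the atoms occurring in its facts, and any atom in $\twib \setminus \twia$ occurs in no fact (a positive fact pins its atom into $\twia$, a negative one excludes its atom from $\twib$); hence, using $\atmsof{\prgu} \subseteq \atmsof{\prga}$, we get $S \subseteq \atms \setminus \atmsof{\prgu}$. Thus $\trib$ and $\tric^*$ have the same second component $\twid_2$ and differ in the first only on atoms not occurring in $\prgu$ (nor in the reduct $\prgu^{\twid_2}$); as $\twid_2 \ent \prgu^{\twid_2}$, also $\twid_2 \setminus S \ent \prgu^{\twid_2}$, so $\trib \in \modse{\prgu}$. \textbf{(ii)}~$\trib \spotra \tric$. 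Since $\trib$ and $\tric$ share the second component, clause~b) of Lemma~\ref{lemma:oasa strict} applies with $\Delta = \twid_2 \div \twib$, and a direct computation shows that the truncated first-component difference $((\twid_2 \setminus S) \div \twia) \setminus \Delta$ of $\trib$ is contained in the corresponding difference $(\twic_2 \div \twia) \setminus \Delta$ of $\tric$, while $\atma$ lies in the latter but not the former (being true in both components of $\trib$). Hence the inclusion is strict, $\trib \spotra \tric$, and $\tric \notin \min(\modse{\prgu}, \potra)$ — a contradiction.

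The main obstacle is verification~(i): the obvious single-atom repair may leave $\modse{\prgu}$, so the construction must simultaneously fix $\atma$ and avoid disturbing the models of $\prgu$. The decisive observation is that the only atoms preventing $\tric^*$ itself from already beating $\tric$ are those undefined in $\tria$, and — precisely because $\tria$ models a set of facts and $\atmsof{\prgu} \subseteq \atmsof{\prga}$ — such atoms are irrelevant to $\prgu$ and can be reset freely. (When $\atma \notin \atmsof{\prgu}$ the conclusion is in fact immediate from Lemma~\ref{lemma:irrelevant atoms inertia}, which gives $\tric(\atma) = \tria(\atma) = \tr$; the construction above covers the remaining case and subsumes this one.)
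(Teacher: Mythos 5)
Your proof is correct, and it takes a genuinely different route from the paper's, even though both arguments end in the same place: a witness with the same second component as the interpretation under scrutiny whose clause~b) comparison in Lemma~\ref{lemma:oasa strict} contradicts minimality. The paper first \emph{projects} $\tric = \tpl{\twic, \twid}$ onto the relevant atoms, forming $\tpl{\twic \cap \atmsof{\prga}, \twid \cap \atmsof{\prga}}$, and uses Corollary~\ref{cor:irrelevant atoms} to keep this projection inside $\modse{\prga \uopr \prgu}$; Lemma~\ref{lemma:irrelevant atoms inertia} then forces the witnessing $\tria$ for the projected interpretation to be the \emph{total} interpretation $\tpl{\twib, \twib}$ with $\twib = \set{\atmb \in \atms | (\atmb.) \in \prga}$, after which the star of the projection beats the projection outright. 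You instead keep the original $\tric = \tpl{\twic_2, \twid_2}$ and its possibly non-total $\tria = \twiab$, and compensate for the atoms undefined in $\tria$ by deleting $S = (\twib \setminus \twia) \cap \twid_2$ from the first component of $\tric^*$; the structure of \SE-models of a fact set (each fact atom is pinned into $\twia$ or out of $\twib$) together with $\atmsof{\prgu} \subseteq \atmsof{\prga}$ gives $S \cap \atmsof{\prgu} = \emptyset$, so your witness $\tpl{\twid_2 \setminus S, \twid_2}$ stays in $\modse{\prgu}$. The one step you leave as ``a direct computation'' does check out: with $\Delta = \twid_2 \div \twib$, the set $\bigl( (\twid_2 \setminus S) \div \twia \bigr) \setminus \Delta$ is in fact \emph{empty} (any element would either land in $S$ or in $\Delta$), so the inclusion is immediate and $\atma$ makes it strict. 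The trade-off between the two proofs: the paper's is shorter because it reuses the irrelevant-atom machinery it has already built (Corollary~\ref{cor:irrelevant atoms} and Lemma~\ref{lemma:irrelevant atoms inertia}), whereas yours is more self-contained, resting only on Lemma~\ref{lemma:oasa strict}, well-definedness of $\modse{\prgu}$, and the elementary shape of \SE-models of fact sets, at the cost of a more delicate witness construction.
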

\begin{proof*}
	Suppose that $\tric$ belongs to $\modse{\prga \uopr \prgu}$, put $\tric =
	\tpl{\twic, \twid}$ and let
	\[
		\trib
		=
		\tpl{\twic \cap \atmsof{\prga}, \twid \cap \atmsof{\prga}}
		\enspace.
	\]
	It follows by Corollary~\ref{cor:irrelevant atoms} that $\trib$ belongs to
	$\modse{\prga \uopr \prgu}$. Thus, there exists some \SE-interpretation
	$\tria \in \modse{\prga}$ such that $\trib$ belongs to $\min(\modse{\prgu},
	\potra)$. Also, using Lemma~\ref{lemma:irrelevant atoms inertia} we conclude
	that $\tria$ assigns truth values as follows:
	\begin{align*}
		\tria(\atmb) = \begin{cases}
			\tr & (\atmb.) \in \prga \enspace; \\
			\fa & (\lpnot \atmb.) \in \prga \enspace; \\
			\fa & \atmb \in \atms \setminus \atmsof{\prga} \enspace.
		\end{cases}
	\end{align*}
	In other words, $\tria$ is of the form $\tpl{\twib, \twib}$ where $\twib =
	\set{\atmb \in \atms | (\atmb.) \in \prga}$. Furthermore, since $\trib$ belongs to $\modse{\prgu}$, $\trib^* =
	\tpl{\twid \cap \atmsof{\prga}, \twid \cap \atmsof{\prga}}$ also belongs
	there.

	We proceed by contradiction: Suppose that $\tric(\atma) = \un$ for some atom
	$\atma$ with $(\atma.) \in \prga$. Then $\atma \in \twid \setminus \twic$,
	$\atma \in \atmsof{\prga}$ and $\atma \in \twib$ and we reach a conflict
	because $\trib^* \spotra \trib$ follows by Lemma~\ref{lemma:oasa strict} from
	the fact that
	\begin{multline*}
		[(\twid \cap \atmsof{\prga}) \div \twib]
		\setminus
		[(\twid \cap \atmsof{\prga}) \div \twib]
		=
		\emptyset
		\\
		\subsetneq
		\set{\atma}
		\subseteq
		[(\twic \cap \atmsof{\prga}) \div \twib]
		\setminus
		[(\twid \cap \atmsof{\prga}) \div \twib]
		\enspace.
		\qedhere
	\end{multline*}
\end{proof*}

\begin{lemma}
	\label{lemma:oasa vs oasw}
	Let $\prga$, $\prgu$ be programs, $\uopr$ a rule update operator
	characterised by $\oasa$, $\uopb$ a belief update operator characterised by
	$\oasw$ and $\twid$ an interpretation. Then,
	\begin{align*}
		& \tpl{\twid, \twid} \in \modse{\prga \uopr \prgu}
		&& \text{if and only if}
		&& \twid \in \mod{\tofrm{\prga} \uopb \tofrm{\prgu}}
		\enspace.
	\end{align*}
\end{lemma}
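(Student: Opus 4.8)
The plan is to unfold both constructions and reduce the claimed equivalence to a comparison of the order assignments $\oasa$ and $\oasw$ on \emph{total} \SE-interpretations. Recall that $\twid \ent \prgu$ iff $\tpl{\twid,\twid} \in \modse{\prgu}$, so that $\mod{\tofrm{\prgu}}$ is exactly the set of total \SE-models of $\prgu$, and likewise for $\prga$. Since $\uopr$ is characterised by $\oasa$ and $\uopb$ by $\oasw$, we have $\tpl{\twid,\twid} \in \modse{\prga \uopr \prgu}$ iff there is some $\tria = \twiab \in \modse{\prga}$ with $\tpl{\twid,\twid} \in \min(\modse{\prgu}, \potra)$, while $\twid \in \mod{\tofrm{\prga} \uopb \tofrm{\prgu}}$ iff there is some $\twib \in \mod{\tofrm{\prga}}$ with $\twid \in \min(\mod{\tofrm{\prgu}}, \pow{\twib})$. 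I would prove the two directions by transferring the witness for one side into a witness for the other.

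For the forward implication I would fix a witness $\tria = \twiab \in \modse{\prga}$ with $\tpl{\twid,\twid} \in \min(\modse{\prgu}, \potra)$. Its second component gives $\twib \ent \prga$, hence $\twib \in \mod{\tofrm{\prga}}$, and $\tpl{\twid,\twid} \in \modse{\prgu}$ gives $\twid \in \mod{\tofrm{\prgu}}$. If $\twid$ were not minimal in $\mod{\tofrm{\prgu}}$ w.r.t.\ $\pow{\twib}$, there would be some $\twic \in \mod{\tofrm{\prgu}}$ with $(\twic \div \twib) \subsetneq (\twid \div \twib)$; then $\tpl{\twic,\twic} \in \modse{\prgu}$ and, by case a) of Lemma~\ref{lemma:oasa strict} (which only involves the second components and the second component $\twib$ of the centre), $\tpl{\twic,\twic} \spotra \tpl{\twid,\twid}$, contradicting minimality. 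Thus $\twid \in \min(\mod{\tofrm{\prgu}}, \pow{\twib})$, and together with $\twib \in \mod{\tofrm{\prga}}$ this yields $\twid \in \mod{\tofrm{\prga} \uopb \tofrm{\prgu}}$.

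For the backward implication I would fix $\twib \in \mod{\tofrm{\prga}}$ with $\twid \in \min(\mod{\tofrm{\prgu}}, \pow{\twib})$ and, crucially, choose the \emph{total} centre $\tria = \tpl{\twib,\twib}$, which lies in $\modse{\prga}$ because $\twib \ent \prga$. Again $\tpl{\twid,\twid} \in \modse{\prgu}$. Suppose some $\tpl{\twix,\twiy} \in \modse{\prgu}$ satisfied $\tpl{\twix,\twiy} \spotra \tpl{\twid,\twid}$; I would split according to Lemma~\ref{lemma:oasa strict}. Case b) cannot arise: with a total centre and a total upper element, $\Delta = \twid \div \twib$ and the required relation $(\twix \div \twib)\setminus\Delta \subsetneq (\twid \div \twib)\setminus\Delta = \emptyset$ is impossible. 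In case a) we have $(\twiy \div \twib) \subsetneq (\twid \div \twib)$; since $\modse{\prgu}$ is well-defined, $\tpl{\twiy,\twiy} \in \modse{\prgu}$, so $\twiy \in \mod{\tofrm{\prgu}}$ with $\twiy \spow{\twib} \twid$, contradicting the minimality of $\twid$. Hence $\tpl{\twid,\twid} \in \min(\modse{\prgu}, \potra)$ and therefore $\tpl{\twid,\twid} \in \modse{\prga \uopr \prgu}$.

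The main obstacle, and the only genuinely \SE-specific point, is that minimality of a total interpretation $\tpl{\twid,\twid}$ in $\modse{\prgu}$ must be checked against \emph{all} \SE-models, including non-total ones $\tpl{\twix,\twiy}$ with $\twix \subsetneq \twiy$, whereas Winslett's minimality only compares total (classical) models. Ruling out a non-total \SE-model strictly below $\tpl{\twid,\twid}$ is exactly where well-definedness of $\modse{\prgu}$ does the work: a case-a) predecessor drags its total companion $\tpl{\twiy,\twiy}$ into $\modse{\prgu}$ at no greater distance, while a case-b) predecessor is excluded outright once the centre is chosen total. Choosing the total centre $\tpl{\twib,\twib}$ in the backward direction is the decisive step that makes both reductions collapse onto the plain Winslett comparison $\pow{\twib}$.
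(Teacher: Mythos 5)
Your proof is correct and follows essentially the same route as the paper's: in the forward direction you pass from the \SE-witness $\tpl{\twia,\twib}$ to the classical witness $\twib$ and refute non-minimality via case a) of Lemma~\ref{lemma:oasa strict}, and in the backward direction you choose the total centre $\tpl{\twib,\twib}$, kill case b) because $(\twid \div \twib)\setminus\Delta = \emptyset$, and use well-definedness of $\modse{\prgu}$ to turn a case-a) predecessor into a classical model strictly closer to $\twib$ --- exactly the paper's argument. The only cosmetic difference is that you read $\twib \ent \prga$ directly off the definition of \SE-models, whereas the paper invokes well-definedness of $\modse{\prga}$; both are valid.
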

\begin{proof*}
	Suppose that $\tpl{\twid, \twid} \in \modse{\prga \uopr \prgu}$. Then
	$\tpl{\twid, \twid}$ belongs to $\min(\modse{\prgu}, \potra)$ for some
	$\tria = \tpl{\twia, \twib} \in \modse{\prga}$. Since $\modse{\prga}$ is a
	well-defined set of \SE-interpretations, we conclude that $\tpl{\twib,
	\twib} \in \modse{\prga}$ and, consequently, $\twib \ent \prga$. We will
	prove that $\twid \in \min(\mod{\tofrm{\prgu}}, \pow{\twib})$. Suppose that
	this is not the case, i.e.\ there is some $\twid' \in \mod{\tofrm{\prgu}}$
	such that $\twid' \spow{\twib} \twid$. In other words, $\twid' \div \twib
	\subsetneq \twid \div \twib$. It follows that $\tpl{\twid', \twid'}$ is an
	\SE-model of $\prgu$ and by Lemma~\ref{lemma:oasa strict} we conclude that
	$\tpl{\twid', \twid'} \spotra \tpl{\twid, \twid}$, contrary to the
	assumption that $\tpl{\twid, \twid}$ belongs to $\min(\modse{\prgu},
	\potra)$.

	To prove the converse implication, assume that $\twid \in \mod{\tofrm{\prga}
	\uopb \tofrm{\prgu}}$. Then there is some interpretation $\twib$ with $\twib
	\ent \prga$ such that $\twid \in \min(\mod{\tofrm{\prgu}}, \pow{\twib})$. It
	follows that $\tria = \tpl{\twib, \twib} \in \modse{\prga}$ and $\tric =
	\tpl{\twid, \twid} \in \modse{\prgu}$. Our goal is to prove that $\tric \in
	\min(\modse{\prgu}, \potra)$. Suppose that this is not the case, i.e.\ there
	is some $\tric' = \tpl{\twic', \twid'} \in \modse{\prgu}$ such that $\tric'
	\spotra \tric$. Note that since $\modse{\prgu}$ is a well-defined set of
	\SE-interpretations, it follows that $\tpl{\twid', \twid'} \in
	\modse{\prgu}$ and thus $\twid' \ent \prgu$. By Lemma~\ref{lemma:oasa
	strict}, one of the following conditions is then satisfied:
	\begin{enumerate}[a)]
		\item If $\twid' \div \twib \subsetneq \twid \div \twib$, then we obtain
			$\twid' \spow{\twib} \twid$, contrary to the assumption that $\twid$
			belongs to $\min(\mod{\tofrm{\prgu}}, \pow{\twib})$.

		\item The case when $\twid' \div \twib = \twid \div \twib$ and $(\twic'
			\div \twib) \setminus \Delta \subsetneq (\twid \div \twib) \setminus
			\Delta$, where $\Delta = \twid \div \twib$, is impossible because the
			set $(\twid \div \twib) \setminus \Delta$ is empty. \qedhere
	\end{enumerate}
\end{proof*}

\begin{proposition}
	\label{prop:oasa vs oasw query answering}
	Let $\prga$ be a set of facts, $\prgb$ and $\prgu$ be programs such that
	$\prgb \subseteq \prga$ and $\atmsof{\prgu} \subseteq \atmsof{\prga}$,
	$\uopr$ a rule update operator characterised by $\oasa$ and $\uopb$ a belief
	update operator characterised by $\oasw$. Then,
	\begin{align*}
		\prga \uopr \prgu &\entSE \prgb
		&& \text{if and only if}
		& \tofrm{\prga} \uopb \tofrm{\prgu} &\ent \tofrm{\prgb}
		\enspace.
	\end{align*}
\end{proposition}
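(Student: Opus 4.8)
The plan is to reduce each side of the equivalence to a concrete membership condition and then pass between \SE-models and classical models through the two-valued bridge of Lemma~\ref{lemma:oasa vs oasw}. Since $\prgb \subseteq \prga$, the program $\prgb$ is again a set of facts; let $\sprec = \Set{\atma | (\atma.) \in \prgb}$ collect the atoms occurring as positive facts of $\prgb$. First I would record a convenient description of $\modse{\prgb}$: for any \SE-interpretation $\tpl{\twic, \twid}$ with $\twid \ent \prgb$, the reduct $\prgb^{\twid}$ drops every negative fact (whose head atom is absent from $\twid$) and collapses to the positive facts $\Set{(\atma.) | \atma \in \sprec}$, so that $\tpl{\twic, \twid} \in \modse{\prgb}$ holds exactly when $\sprec \subseteq \twic$. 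Together with the preliminary observation that $\twid \ent \tofrm{\prgb}$ is equivalent to $\tpl{\twid, \twid} \in \modse{\prgb}$, this recasts both entailments as statements about which \SE-models of $\prga \uopr \prgu$, respectively which classical models of $\tofrm{\prga} \uopb \tofrm{\prgu}$, satisfy the facts of $\prgb$.

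For the implication from $\prga \uopr \prgu \entSE \prgb$ to $\tofrm{\prga} \uopb \tofrm{\prgu} \ent \tofrm{\prgb}$ I would argue directly, with no reference to genuinely three-valued \SE-models. Given $\twid \in \mod{\tofrm{\prga} \uopb \tofrm{\prgu}}$, Lemma~\ref{lemma:oasa vs oasw} places $\tpl{\twid, \twid}$ in $\modse{\prga \uopr \prgu}$; the assumed strong entailment then gives $\tpl{\twid, \twid} \in \modse{\prgb}$, i.e.\ $\twid \ent \tofrm{\prgb}$, as required.

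The converse implication carries the real content. Assuming $\tofrm{\prga} \uopb \tofrm{\prgu} \ent \tofrm{\prgb}$, I would take an arbitrary $\tric = \tpl{\twic, \twid} \in \modse{\prga \uopr \prgu}$ and aim to show $\tric \in \modse{\prgb}$. Because the set of \SE-models of any program is well-defined, $\tpl{\twid, \twid}$ also belongs to $\modse{\prga \uopr \prgu}$, so Lemma~\ref{lemma:oasa vs oasw} yields $\twid \in \mod{\tofrm{\prga} \uopb \tofrm{\prgu}}$ and hence $\twid \ent \tofrm{\prgb}$ by hypothesis. This already disposes of the negative facts of $\prgb$ and gives $\sprec \subseteq \twid$; but the characterisation from the first paragraph actually demands the stronger $\sprec \subseteq \twic$, i.e.\ that no atom of $\sprec$ is left merely undefined in $\tric$.

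Closing this last gap is the main obstacle, and it is exactly what Lemma~\ref{lemma:positive facts remain two-valued} is tailored for; the hypotheses of the proposition are present precisely so that it applies. Since $\prga$ is a set of facts and $\atmsof{\prgu} \subseteq \atmsof{\prga}$, the lemma gives $\tric(\atma) \neq \un$ for every atom with $(\atma.) \in \prga$, and the inclusion $\prgb \subseteq \prga$ makes each $\atma \in \sprec$ such a positive fact of $\prga$. Combined with $\atma \in \twid$, the value $\tric(\atma) \neq \un$ forces $\tric(\atma) = \tr$, that is $\atma \in \twic$; hence $\sprec \subseteq \twic$ and $\tric \in \modse{\prgb}$. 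As $\tric$ was arbitrary, $\modse{\prga \uopr \prgu} \subseteq \modse{\prgb}$, which is exactly $\prga \uopr \prgu \entSE \prgb$.
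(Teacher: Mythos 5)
Your proof is correct and follows essentially the same route as the paper: both directions hinge on Lemma~\ref{lemma:oasa vs oasw} applied to the total \SE-interpretation $\tpl{\twid,\twid}$ (obtained via well-definedness of $\modse{\prga \uopr \prgu}$), and the converse direction closes the gap between $\sprec \subseteq \twid$ and $\sprec \subseteq \twic$ exactly as the paper does, by invoking Lemma~\ref{lemma:positive facts remain two-valued} for the positive facts of $\prgb \subseteq \prga$ while the negative facts follow from $\twid \ent \tofrm{\prgb}$ alone. The only cosmetic difference is that you package the membership test $\tpl{\twic,\twid} \in \modse{\prgb}$ up front via the reduct, whereas the paper verifies it fact by fact; the content is identical.
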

\begin{proof}
	First suppose that $\prga \uopr \prgu \entSE \prgb$ and take some $\twid \in
	\mod{\tofrm{\prga} \uopb \tofrm{\prgu}}$. We need to prove that $\twid \ent
	\prgb$. It follows from Lemma~\ref{lemma:oasa vs oasw} that $\tpl{\twid,
	\twid} \in \modse{\prga \uopr \prgu}$ and our assumption implies that
	$\tpl{\twid, \twid} \ent \prgb$. This means that $\twid \ent \prgb$, so we
	reached the desired conclusion.

	For the converse implication, suppose that $\tofrm{\prga} \uopb
	\tofrm{\prgu} \ent \tofrm{\prgb}$ and take some $\tpl{\twic, \twid} \in
	\modse{\prga \uopr \prgu}$. Our goal is to prove that $\tpl{\twic, \twid}
	\ent \prgb$. Since the set of \SE-interpretations $\modse{\prga \uopr
	\prgu}$ is well-defined, we obtain that $\tpl{\twid, \twid} \in \modse{\prga
	\uopr \prgu}$ and by Lemma~\ref{lemma:oasa vs oasw} it follows that $\twid
	\in \mod{\tofrm{\prga} \uopb \tofrm{\prgu}}$. By our assumption we infer
	that $\twid \ent \prgb$. Thus, for every positive fact $(\atma.)$ from
	$\prgb$ it holds that $\atma \in \twid$ and due to Lemma~\ref{lemma:positive
	facts remain two-valued} also $\atma \in \twic$. Therefore, $\tpl{\twic,
	\twid} \ent (\atma.)$. Similarly, for every negative fact $(\lpnot \atma.)$
	from $\prgb$ it holds that $\atma \notin \twid$ and, hence, $\tpl{\twic,
	\twid} \ent (\lpnot \atma.)$. Consequently, $\tpl{\twic, \twid} \ent \prgb$
	as desired.
\end{proof}

\begin{theorem*}
	[Computational complexity of rule updates characterised by $\oasa$]
	{thm:oasa complexity general}
	Let $\uopr$ be a rule update operator characterised by $\oasa$. Deciding
	whether $\prga \uopr \prgu \entSE \prgb$ for programs $\prga$, $\prgu$,
	$\prgb$ is $\Pi^\ccP_2$-complete. Hardness holds even if $\prga$ is a set of
	positive facts, $\prgu$ is a non-disjunctive program and $\prgb$ contains a
	single fact from $\prga$.
\end{theorem*}
\begin{proof}
	[\textit{Proof of Theorem~\ref{thm:oasa complexity general}}]
	\label{proof:oasa complexity general}
	Hardness can be shown by reducing the problem of query answering for
	Winslett's belief update semantics to the problem of query answering for
	$\uopr$. To do this, we rely on some specifics of the proof of
	Theorem~\ref{thm:oasw complexity general} as it is presented in
	\cite{Eiter1992}. More specifically, Lemma~6.2 (c.f.\ page~250 of
	\cite{Eiter1992}) shows $\Pi^\ccP_2$-hardness of Winslett's belief update
	semantics by taking an instance
	\[
		F = \forall x_1, \dotsc, x_m \exists y_1, \dotsc, y_n : \frmv
	\]
	of $\ccstyle{QBF}_{2, \forall}$ and constructing propositional formulae
	$\frma$, $\frmu$ and $\frmb$ such that
	\begin{align}
		\label{eq:thm:oasa complexity general:1}
		& \text{$F$ is valid}
		&& \text{if and only if}
		&& \frma \uopb \frmu \ent \frmb
		\enspace.
	\end{align}
	In the following we reproduce the definition of $\frma$, $\frmu$ and $\frmb$
	in order to pinpoint their syntactic structure. Then we show how they can be
	encoded as logic programs $\prga$, $\prgu$ and $\prgb$ such that
	\begin{align}
		\label{eq:thm:oasa complexity general:2}
		& \frma \uopb \frmu \ent \frmb
		&& \text{if and only if}
		&& \prga \uopr \prgu \entSE \prgb
		\enspace.
	\end{align}
	However, we omit the proof of the equivalence \eqref{eq:thm:oasa complexity
	general:1} and refer the interested reader to \cite{Eiter1992} for further
	details.

	Formulae $\frma$, $\frmu$ and $\frmb$ can be defined as follows:
	\begin{align*}
		\frma
		&=
		x_1 \land \dotsb \land x_m
		\land z_1 \land \dotsb \land z_m
		\land y_1 \land \dotsb \land y_n
		\land r
		\enspace, \\
		\frmu
		&=
		(x_1 \lequiv \lnot z_1) \land \dotsb \land (x_m \lequiv \lnot z_m)
		\land (r \lthen \frmv)
		\land ((y_1 \lor \dotsb \lor y_n) \lthen r)
		\enspace, \\
		\frmb &= r
		\enspace,
	\end{align*}
	where $z_1, \dotsc, z_m$ and $r$ are fresh propositional variables.
	Moreover, we can assume without loss of generality that $\frmv$ is in
	conjunctive normal form, i.e.\
	\[
		\frmv = \bigland_{\lia = 1}^s (
			\atma_{\lia, 1} \lor \dotsb \lor \atm_{\lia, t_\lia}
			\lor
			\lnot \atmb_{\lia, 1} \lor \dotsb \lor \lnot \atmb_{\lia, u_\lia}
		)
	\]
	where $\atma_{\lia, \lib}$ and $\atmb_{\lia, \lic}$ belong to $\set{x_1,
	\dotsc, x_m, y_1, \dotsc, y_n}$ for all $\lia$, $\lib$, $\lic$. We construct
	programs $\prga$, $\prgu$ and $\prgb$ as follows:
	\begin{align*}
		\prga
		=
		&\set{ (x_\lia.) | 1 \leq \lia \leq m }
		\cup \set{ (z_\lia.) | 1 \leq \lia \leq m }
		\cup \set{ (y_\lia.) | 1 \leq \lia \leq n }
		\cup \set{(r.)}
		\enspace, \\
		\begin{split}
			\prgu
			=
			&\set{
				(x_\lia \lpif \lpnot z_\lia.),
				(\lpnot z_\lia \lpif x_\lia.)
				|
				1 \leq \lia \leq m
			} \\
			& \qquad {}\cup \set{
				(
					\bot
					\lpif
					\lpnot \atma_{\lia, 1}, \dotsc, \lpnot \atm_{\lia, t_\lia},
					\atmb_{\lia, 1}, \dotsc, \atmb_{\lia, u_\lia},
					r.
				)
				|
				1 \leq \lia \leq s
			} \\
			& \qquad {}\cup \set{
				(r \lpif y_\lia.)
				|
				1 \leq \lia \leq n
			} \enspace,
		\end{split}
		\\
		\prgb
		=
		&\set{(r.)}
		\enspace.
	\end{align*}
	It is not difficult to verify that $\tofrm{\prga} \lequiv \frma$,
	$\tofrm{\prgu} \lequiv \frmu$ and $\tofrm{\prgb} \lequiv \frmb$, so it
	follows from postulate \bu{4} and Proposition~\ref{prop:oasa vs oasw query
	answering} that \eqref{eq:thm:oasa complexity general:2} is satisfied.
	Together with \eqref{eq:thm:oasa complexity general:1} this implies that
	query answering for rule update operators characterised by $\oasa$ is
	$\Pi^\ccP_2$-hard.

	To verify membership to $\Pi^\ccP_2$, consider the following
	non-deterministic polynomial algorithm with an $\ccNP$ oracle, analogous to
	the one for Winslett's belief update semantics (c.f.\ proof of Theorem~6.4
	on page~252 in \cite{Eiter1992}): To prove that $\prga \uopr \prgu \nentSE
	\prgb$, consider only atoms from $\atmsof{\prga} \cup \atmsof{\prgu} \cup
	\atmsof{\prgb}$ (this can be done due to Corollary~\ref{cor:irrelevant
	atoms}), guess some \SE-interpretations $\tria$ and $\trib$, check in
	polynomial time that $\tria \in \modse{\prga}$, $\trib \in \modse{\prgu}$
	and $\trib \notin \modse{\prgb}$ and invoke the $\ccNP$ oracle to check that
	there is no $\tric \in \modse{\prgu}$ such that $\tric \spotra \trib$.
\end{proof}

\begin{lemma}
	\label{lemma:se models of definite program}
	Let $\prgu$ be a definite program. Then for all interpretations $\twia$,
	$\twib$ it holds that,
	\begin{align*}
		& \tpl{\twia, \twib} \in \modse{\prgu}
		&& \text{if and only if}
		&& \twia \subseteq \twib
			\land \twia \ent \tofrm{\prgu}
			\land \twib \ent \tofrm{\prgu}
		\enspace.
	\end{align*}
\end{lemma}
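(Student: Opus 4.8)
The plan is to reduce the entire statement to the fact that the reduct of a definite program is trivial. Recall from Definition~\ref{def:se-models} that $\tpl{\twia, \twib} \in \modse{\prgu}$ holds precisely when $\tpl{\twia, \twib}$ is an \SE-interpretation (i.e.\ $\twia \subseteq \twib$), $\twib \ent \prgu$, and $\twia \ent \prgu^\twib$. By definition the condition $\twib \ent \prgu$ is the same as $\twib \ent \tofrm{\prgu}$, and the \SE-interpretation requirement contributes exactly the conjunct $\twia \subseteq \twib$ appearing on the right-hand side. Thus the only part still needing an argument is that $\twia \ent \prgu^\twib$ is equivalent to $\twia \ent \tofrm{\prgu}$. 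Observe also that if $\twia \not\subseteq \twib$, then $\tpl{\twia, \twib} \notin \tris$, so the left-hand side fails, while the conjunct $\twia \subseteq \twib$ makes the right-hand side fail as well; hence we may freely assume $\twia \subseteq \twib$.

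First I would inspect the reduct $\prgu^\twib$. Since $\prgu$ is definite, every rule $\rl \in \prgu$ has its head and body built from atoms only, so $\hrln = \emptyset$ and $\brln = \emptyset$. Consequently the side conditions $\hrln \subseteq \twib$ and $\brln \cap \twib = \emptyset$ in the definition of $\prgu^\twib$ are satisfied for every rule and every interpretation $\twib$; moreover the reduct rule $\hrlp \lpif \brlp$ coincides with the original rule $\rl$, because $\rl$ already has empty negative head and body. Therefore $\prgu^\twib = \prgu$ for all $\twib$, and in particular $\tofrm{\prgu^\twib} = \tofrm{\prgu}$.

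From this it follows immediately that $\twia \ent \prgu^\twib$ if and only if $\twia \ent \tofrm{\prgu}$, which, together with the two observations above, yields the stated equivalence. I do not expect any genuine obstacle here; the only point deserving a little care is confirming that the reduct literally equals $\prgu$, rather than merely being classically equivalent to it, and this is immediate from the syntactic shape of definite rules.
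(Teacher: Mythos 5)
Your proof is correct and takes essentially the same route as the paper: the paper's own proof is a one-liner citing exactly the fact you establish, namely that $\prgu^\twib = \prgu$ for any interpretation $\twib$ when $\prgu$ is definite, from which the equivalence follows by unfolding Definition~\ref{def:se-models}. Your version merely spells out the details (empty negative heads and bodies making the reduct conditions vacuous) that the paper leaves implicit.
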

\begin{proof}
	Follows from the fact that since $\prgu$ is definite, $\prgu^\twic = \prgu$
	for any interpretation $\twic$.
\end{proof}

\begin{theorem*}
	[Computational complexity of definite rule updates characterised by $\oasa$]
	{thm:oasa complexity definite}
	Let $\uopr$ be a rule update operator characterised by $\oasa$. Deciding
	whether $\prga \uopr \prgu \entSE \prgb$ for definite programs $\prga$,
	$\prgu$, $\prgb$ is $\cccoNP$-complete. Hardness holds even if $\prga$ is a
	set of facts and $\prgb$ contains a single fact from $\prga$.
\end{theorem*}
\begin{proof}
	[\textit{Proof of Theorem~\ref{thm:oasa complexity definite}}]
	\label{proof:oasa complexity definite}
	Hardness follows by reducing the $\cccoNP$-complete problem of query
	answering for Horn formulae under Winslett's belief update semantics. More
	specifically, Theorem~\ref{thm:oasw complexity Horn} shows that deciding
	whether $\frma \uopb \frmu \ent \frmb$, where $\uopb$ is a belief update
	operator characterised by $\oasw$, is $\cccoNP$-hard even when $\frma$ is a
	conjunction of objective literals, $\frmu$ is a Horn formula and $\frmb$ is
	one of the literals in $\frma$. It is straightforward to construct a set of
	facts $\prga$, a definite program $\prgu$ and a program $\prgb$ containing a
	single fact from $\prga$ such that $\tofrm{\prga} \lequiv \frma$,
	$\tofrm{\prgu} \lequiv \frmu$ and $\tofrm{\prgb} \lequiv \frmb$. Finally, it
	follows from postulate \bu{4} and Proposition~\ref{prop:oasa vs oasw query
	answering} that
	\begin{align*}
		& \prga \uopr \prgu \entSE \prgb
		&& \text{if and only if}
		&& \frma \uopr \frmu \ent \frmb
		\enspace,
	\end{align*}
	which concludes the proof of $\cccoNP$-hardness of query answering for
	$\uopr$.

	To verify membership to $\cccoNP$, consider the following non-deterministic
	polynomial algorithm, analogous to the one for Winslett's belief update
	semantics for Horn formulae (c.f.\ proof of Theorem~7.2 on page~259 in
	\cite{Eiter1992}): To prove that $\prga \uopr \prgu \nentSE \prgb$, consider
	only atoms from $\atms' = \atmsof{\prga} \cup \atmsof{\prgu} \cup \atmsof{\prgb}$
	(this can be done due to Corollary~\ref{cor:irrelevant atoms}), guess some
	\SE-interpretations $\tria = \tpl{\twia, \twib}$ and $\trib = \tpl{\twic,
	\twid}$ and check in polynomial time that $\tria \in \modse{\prga}$, $\trib \in
	\modse{\prgu}$ and $\trib \notin \modse{\prgb}$. It remains to check that
	there is no \SE-interpretation $\tric \in \modse{\prgu}$ such that $\tric
	\spotra \trib$. This can be performed in polynomial time by using
	Lemma~\ref{lemma:se models of definite program} as follows:
	Put $\Delta = \twid \div \twib$ and $\Delta' = (\twic \div \twia)
	\setminus \Delta$ and let for every atom $\atm$,
	\begin{align*}
		t(\atm) &= \begin{cases}
			\atm & \twib \ent \atm \enspace; \\
			\lnot \atm & \twib \nent \atm \enspace;
		\end{cases}
		&
		s(\atm) &= \begin{cases}
			\atm & \twia \ent \atm \enspace; \\
			\lnot \atm & \twia \nent \atm \enspace.
		\end{cases}
	\end{align*}
	It follows from Lemma~\ref{lemma:se models of definite program} and from the
	definition of $\potra$ that it suffices to verify that for every $\atma \in
	\Delta$ and every $\atmb \in \Delta'$, both of the Horn formulae
	\begin{align*}
		& \tofrm{\prgu}
			\land t(\atma)
			\land \bigland_{\atmc \in \atms' \setminus \Delta} t(\atmc)
		&& \text{ and}
		&& \tofrm{\prgu}
			\land s(\atmb)
			\land \bigland_{\atmc \in \atms' \setminus \Delta'} s(\atmc)
	\end{align*}
	are not satisfiable.
\end{proof}

\bibliographystyle{acmtrans}
\bibliography{bibliography}

\end{document}